\LetLtxMacro{\originaleqref}{\eqref}
\def\R{\mathbb{R}}
\def\N{\mathcal{N}}
\def\U{\mathcal{U}}
\def\I{I}
\newcommand{\DD}{X}  %
\newcommand{\C}{\mathcal{C}}
\newcommand{\abs}[1]{\left\lvert{#1}\right\rvert}
\newcommand{\set}[1]{\left\{{#1}\right\}}
\newcommand{\norm}[1]{\lVert{#1}\rVert}
\newcommand{\pip}{\pi^\perp}
\providecommand{\TO}{\textbf{to}\ }
\newtheorem{theorem}{Theorem}[section]
\newtheorem{proposition}{Proposition}[section]
\newtheorem{lemma}{Lemma}[section]
\theoremstyle{definition}
\newtheorem{definition}{Definition}[section]
\icmltitlerunning{LIDL: Local Intrinsic Dimension Estimation Using Approximate Likelihood}
\begin{document}

\twocolumn[
\icmltitle{LIDL: Local Intrinsic Dimension Estimation Using Approximate Likelihood}

\icmlsetsymbol{equal}{*}

\begin{icmlauthorlist}
\icmlauthor{Piotr Tempczyk}{uw,op,dt}
\icmlauthor{Rafał Michaluk}{uw,op}
\icmlauthor{Łukasz Garncarek}{op,ap}
\\
\icmlauthor{Przemysław Spurek}{uj}
\icmlauthor{Jacek Tabor}{uj}
\icmlauthor{Adam Goli{\'n}ski}{ox}
\end{icmlauthorlist}

\icmlaffiliation{uw}{Institute of Informatics, University of Warsaw}
\icmlaffiliation{op}{Polish National Institute for Machine Learning (\url{www.opium.sh})}
\icmlaffiliation{dt}{\url{deeptale.ai}}
\icmlaffiliation{ox}{University of Oxford}
\icmlaffiliation{uj}{GMUM, Jagiellonian University}
\icmlaffiliation{ap}{Applica}

\icmlcorrespondingauthor{Piotr Tempczyk}{piotr.tempczyk@mimuw.edu.pl}

\icmlkeywords{Intrinsic Dimension, Density Estimator, Local Intrinsic Dimension Estimation, Normalizing Flows}

\vskip 0.3in
]

\printAffiliationsAndNotice{}  %

\begin{abstract}
Most of the existing methods for
estimating the local intrinsic dimension of 
a data distribution 
do not scale well to high dimensional data. 
Many of them rely 
on a non-parametric nearest neighbours approach
which suffers from the curse of dimensionality.
We attempt to address that challenge
by proposing a novel approach to the problem: Local Intrinsic Dimension estimation using approximate Likelihood 
(LIDL).
Our method relies on an arbitrary density estimation method as its subroutine, 
and hence tries to sidestep the dimensionality challenge by making use of the recent progress in \emph{parametric} neural methods for likelihood estimation.
We carefully investigate the empirical properties of the proposed method, 
compare them with our theoretical predictions,
show that LIDL yields competitive results on the standard benchmarks for this problem,
and that it scales to thousands of dimensions.
What is more, we anticipate this approach to improve further with the continuing advances in the density estimation literature.

\end{abstract}

\section{Introduction}
\label{sec:introduction}

\begin{figure}[t]
    \centering
    \par\vspace{-0pt}\par
    \includegraphics[trim=0 0 0pt 0,clip,width=\linewidth]{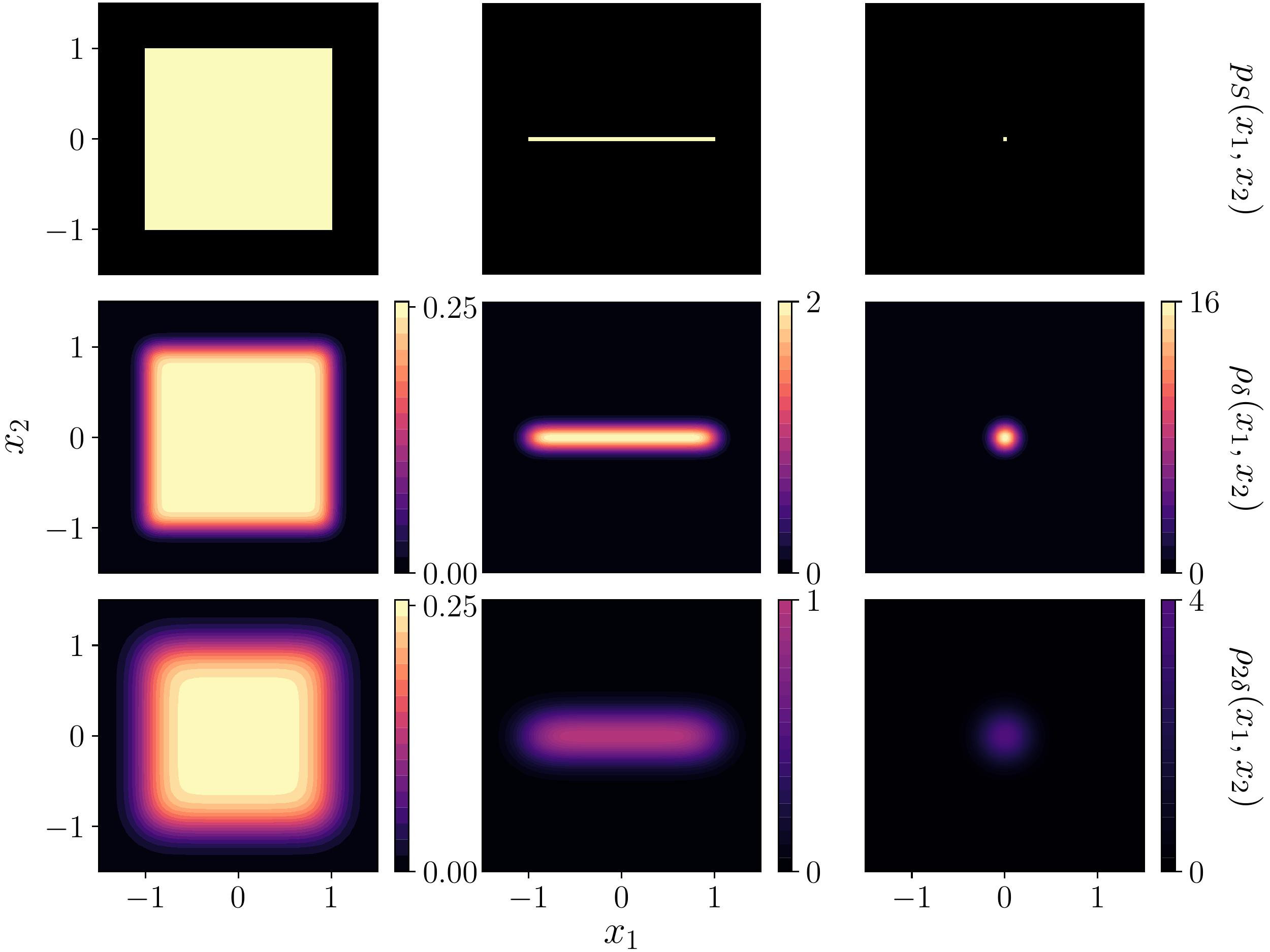}
    \caption{Illustration of LIDL's core insight. [Top] Three uniform distributions $p_S$ supported respectively on a square, interval, and a point, with intrinsic dimensions $2$, $1$, $0$. 
    [Middle/bottom] Perturbed densities $\rho_{\delta}$ and $\rho_{2\delta}$ resulting from addition of Gaussian noise with different noise magnitudes: $\delta$ and $2\delta$. 
    Our core insight is that the difference between the densities $\rho_{\delta}(x)$ and $\rho_{2\delta}(x)$ at any point $x$ depends on the local intrinsic dimension (LID) at that point. Consider point $x\!=\!(0,0)$. For the left column, that difference is zero; for the middle one, the density is halved; for the right one, it is quartered. We leverage this mechanism to estimate LID.}
    \label{fig:method-1}
\end{figure}

In this paper, we consider the  problem  of local intrinsic dimension (LID) estimation of a lower-dimensional data manifold embedded in a higher-dimensional ambient space.

Intrinsic dimension estimation is an established problem in data analysis and representation learning \citep{ansuini2019intrinsic,li2018measuring,rubenstein2018latent}.
It was studied in the context of dimensionality reduction, clustering, and classification problems \citep{vapnik2013nature,kleindessner2015dimensionality,camastra2016intrinsic} and
some prototype-based clustering algorithms \citep{claussen2005magnification,struski2018lossy}, among others. 
It is also a powerful analytical tool to study the process of training and representation learning in deep neural networks \citep{li2018measuring, ansuini2019intrinsic}.
\citet{rubenstein2018latent} shows how the mismatch between the latent space dimensionality and the dataset's ID may hurt the performance of auto-encoder generative models like VAE~\citep{kingma2014auto}, WAE \citep{tolstikhin2017wasserstein}, or CWAE \citep{knop2020cramer}.
Recent results of \citet{pope2020intrinsic} show that the global ID of the dataset impacts the training process of a machine learning model, sample efficiency, and its ability to generalize. 

Recently, there has also been a rise of interest in methods for simultaneous manifold learning and density estimation \citep{brehmer2020flowsa,caterini2021rectangular,ross2021tractable}.
Most of these methods require setting the manifold dimension as a hyperparameter and the standard practice is to do this heuristically or by performing a hyperparamater sweep.
Intrisic dimensionality estimation methods offer a principled alternative to this practice.

Outside of machine learning, an area where we hope reliable ID estimation might help is in the fields of science and engineering, where we nowadays collect enormous amounts of high-dimensional data.
The first challenge such modern datasets present for LID estimation is scalability: we seek algorithms which yield unbiased estimates in high-dimensional spaces.
Many of the existing methods for estimating ID rely on a non-parametric nearest neighbours-based approach which suffers from the curse of dimensionality.
They estimate the LID by investigating the distribution of local distances and they run into the problem of ''boundary effects, which distort the distribution and lead to a negative bias especially for high dimensions where any manifold with a boundary has almost all of its volume concentrated close to the boundary'' \citep{johnsson2014low}, which is a symptom of the curse of dimensionality.
ESS method \citep{johnsson2014low} itself sidesteps this challenge by using angular rather than distance information, leading to its competitive performance in high dimensions, as highlighted in our evaluations.
In contrast, the method we propose uses a \emph{parametric} density estimation model which is a different mechanism to sidestep this challenge.

The second challenge is being able to deal with datasets that have highly non-isotropic structure 
(we refer to them as \emph{multiscale}). 
A good example of this kind of dataset are images of a human face.
In those datasets we have many latent factors of variation and some of them account for much less ambient-space variance than others. 
For instance, let us consider two of them: eye and head rotation.
Each of them induces 2-dimensional manifold in the data space, but variance coming from the latter factor is much larger. 

The third challenge is being able to deal with dataset dimensionality on different scales, which includes dealing with the ambient-space noise in the data. Most of the data we use is affected by measurement noise, and all the datasets are deformed further by being quantized and stored as finite precision numbers. 
To understand how it impacts LID estimation let us imagine a simple physical problem: we have a particle moving in an electromagnetic field, and our dataset is the set of 2-dimensional vectors describing its positions on a plane. 
Additionally, our measurements are quantized with some very small quantization step. 
When we magnify the dataset to the scale of quantization step, we see the dataset as a 0-dimensional manifold--it lies on a finite grid. When we zoom out to the scale of the measurement noise, we observe a 2-dimensional point cloud: one bigger dimension along the trajectory of the particle, and the other--smaller one--coming from the imperfect measurement. When we zoom out far enough, we observe only the 1-dimensional trajectory of that particle. It would be very convenient to be able to easily set an operating scale of an algorithm, e.g. have a possibility to ignore the dimensions that are artificially created by the measurement noise.

To address these problems, we propose a new method for Local Intrinsic Dimension estimation using approximate Likelihood (LIDL). Instead of using non-parametric methods based on local samples from the neighbourhood of a given point, it is based on parametric probability density estimation, which scales better to higher-dimensional setting. 
At the core of our method lies the observation that when we add Gaussian noise $\N(0,\delta^2 I)$ to the dataset $X$ embedded in $\R^D$, the rate of change of the log-likelihood at $x \in X$ (at which LID equals $d$) is approximately linear in the logarithm of $\delta$. Moreover, the proportionality constant is $\beta \approx  d - D$ and we can estimate it using linear regression, thus estimating $d$. 
We may view $\delta$ as a scale parameter of our method, which may be of practical benefit beneficial when dealing with noisy datasets. Intuitive visualisation of this concept can be found in Fig.~\ref{fig:method-1}, and the formal derivation can be found in Sec.~\ref{sec:method}.

To the best of our knowledge, LIDL is the first theoretically grounded method of LID estimation that uses global density estimation methods. In our method we relax the assumptions many algorithms make about uniformity of the density and the manifold flatness in the neighbourhood of $x$. We show theoretically and experimentally that we can deal with manifolds consisting of multiple multiscale connected components of different dimensions. We also compare our algorithm with a wide range of other LID estimation algorithms, and verify that only our algorithm can give unbiased estimates for high-dimensional datasets. 
The code to reproduce our results is available at \url{github.com/opium-sh/lidl}.

The empirical success of our method was made possible by using neural density estimators called \emph{normalizing flows} (NF) \citep{rezende2015variational}, which can estimate densities even in high-dimensional spaces as images. 
Although in this work we use NF as LIDL's density estimator, our method can be used with any density estimation method.
Thus, we anticipate that its capabilities to grow further with continuing progress in the area of density estimation.

\textbf{Contributions}.
Our main contribution is the introduction of a novel, accurate and scalable method of LID estimation. The method is backed up with solid mathematical foundations, and verified both theoretically and experimentally. The impact of LID itself on neural network performance is experimentally assessed. Finally, we identified some problems with existing approaches to LID estimation.

\section{Method}
\label{sec:method}
In this section, we introduce the problem setting formally and we lay out the LIDL method theoretically, including its derivation and pointing out certain predictions about its behavior that we later verify empirically in Section~\ref{sec:empirical-evaluation}.

It is often known that a particular dataset $X$ is a subset of some data manifold $M$ equipped with probability measure $\mu$. However, the manifold $M$ and the dataset $X$ need not be directly observable. Instead, there may exist an embedding (or, more generally, an immersion satisfying some regularity conditions, see Appendix~\ref{sec:non-connected}) $j\colon M\to\R^D$ into a Euclidean space of larger dimension, through which we can view $X$. 

The method we propose is based upon the observation, expressed in Theorem~\ref{thm:core-estimate}, that for a probability measure supported on an embedded submanifold of an Euclidean space, the dimension of its support can be recovered from its asymptotic behavior under small normally distributed perturbations (see Fig.~\ref{fig:method-1} for an intuitive illustration). 

\subsection{The formal setting}

We first define a class of measures we will restrict our considerations to, which we will call \emph{smooth} measures, including all measures with continuous positive densities.

\begin{definition}
\label{def:smooth-measure}
    A positive measure $\nu$ on a manifold $N$ will be called \emph{smooth} if for any chart $\psi\colon U\to V\subset\R^n$ of $N$, the pushforward $\psi_*\nu$ is absolutely continuous with respect to the Lebesgue measure $\lambda$ on $V$, and moreover, its density is locally bounded away from $0$, i.e.\ any $x\in V$ admits a neighborhood on which $dj_*\nu/d\lambda > c$ for some $c>0$.
\end{definition}

Let $S\subset\R^D$ be a smooth connected  $d$-dimensional embedded submanifold of a high-dimensional Euclidean space $\R^D$ (the more general case of a non-connected immersed manifold is dealt with in Appendix~\ref{sec:non-connected}).
This is our observable data manifold, embedded in Euclidean space, i.e.\ $S=j(M)$. Furthermore, suppose we are given a smooth (according to Definition~\ref{def:smooth-measure}) probability measure $p_S$ on $S$, representing the data probability distribution. In our notation, this is the pushforward of the probability $\mu$ on $M$, i.e.\ $p_S = j_*\mu$. We will implicitly treat $p_S$ as a probability distribution on the whole ambient space $\R^D$.

The Gaussian function (i.e.\ the density of the standard normal distribution) on a Euclidean space $V$ will be denoted by $\phi^V$, or $\phi^n$ in the case where $V$ is the standard $\R^n$ space. Also, for $\delta > 0$, let
\begin{equation}
\label{eq:dilated-gaussian}
    \phi^V_{\delta}(x) = \delta^{-\dim V}\phi^V(x/\delta)
\end{equation} 
be the density of the normal distribution $\N(0, \delta^2 I)$ with covariance matrix $\delta^2 I$, where $I$ is the identity matrix on $V$. 
 
Under the above notation, if $X\sim p_S$ is a random vector representing the data, and $N_\delta\sim\N(0,\delta^2 I)$ a normally distributed random noise vector, the distribution of the perturbed random vector $X+N_\delta$ in $\R^D$ is given by the convolution $p_S * \N(0,\delta^2 I)$, and has density
\begin{equation}
\label{eq:mollified-density}
    \rho_\delta(x) = \int_S \phi^D_{\delta}(x-y)\, d p_S(y).
\end{equation}

Finally, let us introduce a notation for uniform multiplicative estimates. We will write that $f(x,y) \asymp g(x,y)$ uniformly in $x$ if for every $y$ there exists $C>0$ such that for all $x$
\begin{equation}
    C^{-1}g(x,y) \leq f(x,y) \leq Cg(x,y).
\end{equation}
This notation extends to any number of variables. We will use it to declutter the proofs from irrelevant 
constants.

\subsection{The core estimate}
\label{sec:core-estimate}

At any $x\in S$ the tangent space of $\R^D$ admits a decomposition $T_x\R^D = T_xS \oplus N_xS$ into a direct sum of the tangent and normal spaces of $S$. Under the natural identification of $T_x\R^D$ with the underlying $\R^D$ (mapping the origin of $T_xS$ to $x$), the tangent and normal spaces of $S$ at $x$ become two affine subspaces of $\R^D$ intersecting at $x$. Denote by $\pi_x\colon \R^D\to T_xS$ and $\pip_x\colon \R^D\to N_xS$ be the corresponding orthogonal projections. With this notation, the following decomposition of the Gaussian density holds
\begin{equation}
\label{eq:gaussian-decomposition}
    \phi^D_\delta(x-y) = \phi^{T_xS}_\delta(\pi_x(y)) \phi^{N_xS}_\delta(\pip_x(y)).
\end{equation}

By the Inverse Function Theorem applied to the restriction of $\pi_x$ to $S$, in a small neighborhood of any $x\in S$, the manifold $S$ can be represented as the graph of a smooth map $F_x\colon T_xS \to N_x S$. In particular, it follows that in this neighborhood one has $\pip_x = F_x \circ \pi_x$. Moreover, $F_x(0) = 0$, and since the graph of $F_x$ is tangent to $T_xS$ at the origin, the derivative of $F_x$ at $x$ vanishes. Hence, the Taylor expansion of $F_x$ at $0$ starts with the second-order term, and consequently, there exists $C>0$ such that for small $v$ 
\begin{equation}
\label{eq:local-parametrization-norm-estimate}
    \norm{F_x(v)}_{N_xS} \leq C\norm{v}_{T_xS}^2.
\end{equation}

We will precede the statement of the core estimate (Theorem~\ref{thm:core-estimate}) with five lemmas required for its proof. Denote by $B(x,r)$ the ball of radius $r$ in $\R^D$, centered at $x$. 

\begin{lemma}
\label{lem:ball-containment}
    Let $x\in S$. For sufficiently small $\delta$ the projection $\pi_x(S\cap B(x,\delta^{1/2}))$ contains the ball $B(x,\delta) \cap T_xS$. 
\end{lemma}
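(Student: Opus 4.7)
The plan is to use the local graph representation of $S$ over its tangent space and then verify a simple norm bound. By the Inverse Function Theorem (as already used in the paper to introduce the map $F_x$), the restriction $\pi_x|_S$ is a diffeomorphism from some $S$-neighborhood of $x$ onto $B(x,r_0)\cap T_xS$ for some $r_0>0$, with inverse parametrization $v\mapsto v + F_x(v-x)$, viewing $T_xS$ and $N_xS$ as affine subspaces through $x$ (so $F_x(0)=0$ and $F_x$ has vanishing derivative at $0$).

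First I would fix any $v\in B(x,\delta)\cap T_xS$ and write $w = v-x \in T_xS$, so that $\norm{w}_{T_xS}\le \delta$. Assuming $\delta\le r_0$, the point $y := v + F_x(w)$ lies on $S$ and satisfies $\pi_x(y)=v$ by construction. What remains is to verify $y\in B(x,\delta^{1/2})$.

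Second, by the orthogonal decomposition $T_x\R^D = T_xS\oplus N_xS$ and the bound \eqref{eq:local-parametrization-norm-estimate} recalled in the excerpt, we get
\begin{equation}
\norm{y-x}^2 = \norm{w}_{T_xS}^2 + \norm{F_x(w)}_{N_xS}^2 \le \delta^2 + C^2\delta^4.
\end{equation}
So it suffices to choose $\delta$ small enough that $\delta^2 + C^2\delta^4 \le \delta$, i.e.\ $\delta + C^2\delta^3 \le 1$, which certainly holds for $\delta$ below an explicit threshold depending only on $C$ (and also requiring $\delta\le r_0$). For any such $\delta$, we obtain $y\in S\cap B(x,\delta^{1/2})$ with $\pi_x(y)=v$, proving the claimed inclusion.

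There is not really a hard step: the only thing to be careful about is that $\delta$ needs to be small enough both for the graph parametrization to be defined on the full ball $B(x,\delta)\cap T_xS$ (controlled by $r_0$, which depends on $x$) and for the quadratic-error term $C^2\delta^4$ to be dominated by $\delta-\delta^2$. Both are purely local conditions at $x$ and are guaranteed once $\delta$ is below a constant determined by $x$, so the statement holds as written. The bound $\delta^{1/2}$ on the ambient ball radius is thus comfortably loose: the tangential part already contributes size $\delta$, and the normal part contributes only $O(\delta^2)$, so the worst case is essentially $\delta$, not $\delta^{1/2}$.
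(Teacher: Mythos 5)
Your proof is correct and follows essentially the same route as the paper's: lift each $v\in B(x,\delta)\cap T_xS$ to the graph point $y=(v,F_x(v))\in S$, use the quadratic bound \eqref{eq:local-parametrization-norm-estimate} to estimate $\norm{y-x}^2\leq \delta^2+C^2\delta^4$, and observe this is below $\delta$ for small $\delta$, so $y\in B(x,\delta^{1/2})$ and $\pi_x(y)=v$. Your version is in fact slightly more careful than the paper's, which writes the bound as $\delta^2(1+C\delta^2)$ where $C^2$ is meant.
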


\begin{proof}
\label{lem:ball-containment-proof}
    Assume that $\delta$ is sufficiently small for $F_x$ to be defined on $B(x,\delta) \cap T_xS$. Let $v\in B(x,\delta) \cap T_xS$. Under our identifications, $y=(v,F_x(v)) \in T_xS \oplus N_xS$ is a point of $S$ such that $v=\pi_x(y)$. Moreover, by eq.~\eqref{eq:local-parametrization-norm-estimate}, for sufficiently small $\delta$
    \begin{equation}
        \norm{v}^2_{T_xS} + \norm{F_x(v)}^2_{N_xS} \leq \delta^2(1+ C\delta^2) < \delta,
    \end{equation}
    so $y\in B(x,\delta^{1/2})$, and $v \in \pi_x(S\cap B(x,\delta^{1/2}))$.
\end{proof}

\begin{lemma}
\label{lem:tangent-gaussian-estimate}
    For $x\in S$ and sufficiently small $\delta$, the estimate
    \begin{equation}
        \int_{S\cap B(x,\delta^{1/2})} \phi^{T_xS}_\delta(\pi_x(y))\,dp_S(y) \asymp 1
    \end{equation}
    holds uniformly in $\delta$.
\end{lemma}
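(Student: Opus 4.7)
The plan is to parametrize $S$ near $x$ by its tangent space, push the integral forward to $T_xS$, and then bound it between two constants using (i) the fact that Gaussians are probability densities and (ii) Lemma~\ref{lem:ball-containment}. First I would invoke the local graph representation $y = (v, F_x(v))$ for $v \in T_xS$ sufficiently small, established just before the lemmas. Since this parametrization is a smooth diffeomorphism from a neighborhood of $0 \in T_xS$ onto a neighborhood of $x$ in $S$, Definition~\ref{def:smooth-measure} implies that the pushforward $(\pi_x)_*p_S$ has a density $h$ with respect to Lebesgue measure on $T_xS$ which, on a fixed small neighborhood $U$ of $0$, satisfies $c_1 \leq h(v) \leq c_2$ for constants depending only on $x$.

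Next, for $\delta$ small enough the set $\pi_x(S\cap B(x,\delta^{1/2}))$ is contained in $U$, so after the change of variables the integral takes the form
\begin{equation}
    \int_{\pi_x(S\cap B(x,\delta^{1/2}))} \phi^{T_xS}_\delta(v)\, h(v)\, dv.
\end{equation}
For the upper bound I would simply drop the indicator and use $h \leq c_2$, obtaining
\begin{equation}
    \int_{T_xS} \phi^{T_xS}_\delta(v)\, dv = 1,
\end{equation}
so the integral is at most $c_2$. For the lower bound I would apply Lemma~\ref{lem:ball-containment} to replace the domain of integration by the smaller set $B(0,\delta)\cap T_xS$, bound $h$ below by $c_1$, and use the scaling $v = \delta u$ to rewrite
\begin{equation}
    \int_{B(0,\delta)\cap T_xS} \phi^{T_xS}_\delta(v)\, dv = \int_{B(0,1)\cap T_xS} \phi^{T_xS}(u)\, du,
\end{equation}
which is a strictly positive constant depending only on $d = \dim T_xS$. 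Thus the integral is at least $c_1$ times this constant.

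The two bounds together give the claimed $\asymp 1$ uniformly in $\delta$, with constants depending on $x$ but not on $\delta$. The only real subtlety is making sure the neighborhood $U$ on which $h$ is bounded above and below is fixed independently of $\delta$, and that $\delta$ is taken small enough both for $F_x$ to be defined on $B(0,\delta)\cap T_xS$ (needed in Lemma~\ref{lem:ball-containment}) and for $\pi_x(S\cap B(x,\delta^{1/2})) \subset U$; both are minor threshold conditions on $\delta$, not genuine obstacles. The main conceptual step is recognizing that the $\delta^{1/2}$ radius in the spatial ball is exactly what is needed so that Lemma~\ref{lem:ball-containment} captures a $\delta$-ball in the tangent direction, which is the natural length scale of the Gaussian $\phi^{T_xS}_\delta$ and makes the rescaled integral a dimension-only constant.
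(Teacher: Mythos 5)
Your proposal is correct and follows essentially the same route as the paper's proof: push the integral forward to $T_xS$, compare the pushforward measure to Lebesgue measure via the smooth-measure property, bound above by the total Gaussian mass, and bound below using Lemma~\ref{lem:ball-containment} together with the fixed probability that a standard normal lands within one standard deviation of the mean (your rescaled integral $\int_{B(0,1)\cap T_xS}\phi^{T_xS}(u)\,du$ is exactly that constant). Your version merely makes explicit the density bounds $c_1\leq h\leq c_2$ that the paper invokes implicitly through the assertion that $(\pi_x)_*p_S$ is smooth.
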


\begin{proof}
    Denote $B=S\cap B(x,\delta^{1/2})$. Integrating by substitution, we obtain
    \begin{equation}
        \int_B \phi^{T_xS}_\delta(\pi_x(y))\,dp_S = \int_{\pi_x(B)}\phi^{T_xS}_\delta(v) \,d(\pi_x)_*p_S,
    \end{equation}
    where the pushforward $(\pi_x)_*p_S$ is a smooth measure on $T_xS$. Hence, for sufficiently small $\delta$
    \begin{equation}
    \label{eq:approximating-integral-of-tangential-normal-distribution}
        \int_{\pi_x(B)} \phi^{T_xS}_\delta(v) \,d(\pi_x)_*p_S(v) \asymp \int_{\pi_x(B)} \phi^{T_xS}_\delta(v) \,dv
    \end{equation}
    uniformly in $\delta$. The integral on the right is at most $1$, and simultaneously, by Lemma~\ref{lem:ball-containment} we have
    \begin{equation}
        \int_{\pi_x(B)} \phi^{T_xS}_\delta(v) \,dv \geq \int_{B(x,\delta)\cap T_xS} \phi^{T_xS}_\delta(v) \,dv.
    \end{equation}
    The last integral is the probability that a normal random variable falls within one standard deviation from the mean, which is a constant independent of $\delta$.
\end{proof}

\begin{lemma}
\label{lem:normal-gaussian-estimate}
    For sufficiently small $\delta$ and $y\in S\cap B(x, \delta^{1/2})$, where $x\in S$, the estimate $\phi^{N_xS}_\delta(\pip_x(y)) \asymp \delta^{d-D}$ holds uniformly in $\delta$ and $y$.
\end{lemma}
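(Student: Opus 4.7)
The plan is to unwind the definition of $\phi^{N_xS}_\delta$ and reduce the claim to showing that the argument of the unscaled Gaussian stays bounded.

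First I would write, using eq.~\eqref{eq:dilated-gaussian} and $\dim N_xS = D-d$,
\begin{equation}
    \phi^{N_xS}_\delta(\pip_x(y)) = \delta^{d-D}\,\phi^{N_xS}\!\bigl(\pip_x(y)/\delta\bigr),
\end{equation}
so the assertion $\phi^{N_xS}_\delta(\pip_x(y)) \asymp \delta^{d-D}$ is equivalent to $\phi^{N_xS}(\pip_x(y)/\delta) \asymp 1$ uniformly in $\delta$ and $y\in S\cap B(x,\delta^{1/2})$. Since $\phi^{N_xS}$ is a continuous, strictly positive function that decays with the norm of its argument, it suffices to prove that $\|\pip_x(y)/\delta\|_{N_xS}$ is bounded above by a constant independent of $\delta$ and $y$ (for $\delta$ sufficiently small).

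The key step is to exploit the local graph representation of $S$ near $x$. For $\delta$ small enough, every $y \in S\cap B(x,\delta^{1/2})$ lies in the domain of the parametrization, so we may write $\pip_x(y) = F_x(\pi_x(y))$ with $F_x(0)=0$ and derivative vanishing at the origin. Since $\pi_x$ is an orthogonal projection, $\|\pi_x(y)\|_{T_xS} \leq \|y-x\| \leq \delta^{1/2}$, and then the quadratic estimate~\eqref{eq:local-parametrization-norm-estimate} gives
\begin{equation}
    \|\pip_x(y)\|_{N_xS} = \|F_x(\pi_x(y))\|_{N_xS} \leq C\,\|\pi_x(y)\|_{T_xS}^{2} \leq C\delta.
\end{equation}
Dividing by $\delta$ yields $\|\pip_x(y)/\delta\|_{N_xS} \leq C$, uniformly in the relevant variables.

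Finally I would note that on the ball $\{v \in N_xS : \|v\| \leq C\}$ the function $\phi^{N_xS}$ attains a positive minimum $m > 0$ and a maximum $M = \phi^{N_xS}(0)$, so
\begin{equation}
    m \leq \phi^{N_xS}\!\bigl(\pip_x(y)/\delta\bigr) \leq M,
\end{equation}
which is precisely $\phi^{N_xS}(\pip_x(y)/\delta) \asymp 1$, and multiplying by $\delta^{d-D}$ gives the claim. The only subtle point is ensuring the graph representation is valid on the whole ball $B(x,\delta^{1/2})$ for small $\delta$; this is guaranteed because $\delta^{1/2} \to 0$ and the inverse function theorem supplies a fixed neighborhood of $0$ on which $F_x$ is defined. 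Beyond that, the argument is a direct computation and I do not foresee a significant obstacle.
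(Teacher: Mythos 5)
Your proposal is correct and follows essentially the same route as the paper's proof: unwind the scaling in eq.~\eqref{eq:dilated-gaussian}, use that $\pi_x$ is a contraction to bound $\norm{\pi_x(y)}$ by $\delta^{1/2}$, apply the quadratic estimate~\eqref{eq:local-parametrization-norm-estimate} to get $\norm{\pip_x(y)}\leq C\delta$, and conclude that the rescaled argument stays in a fixed ball where $\phi^{N_xS}\asymp 1$. The only difference is that you spell out the positive minimum/maximum of $\phi^{N_xS}$ on that ball, which the paper leaves implicit.
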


\begin{proof}
    By eq.~\eqref{eq:dilated-gaussian}, 
    \begin{equation}
        \phi^{N_xS}_\delta(\pip_x(y)) = \delta^{d-D}\phi^{N_xS}(\delta^{-1}\pip_x(y)).
    \end{equation}
    Since $\pi_x$ is a contraction, we have $\norm{\pi_x(y)} \leq \delta^{1/2}$. It follows from eq.~\eqref{eq:local-parametrization-norm-estimate}, that for sufficiently small $\delta$
    \begin{equation}
        \norm{\pip_x(y)} = \norm{F_x(\pi_x(y))} \leq C\delta
    \end{equation}
    for some $C>0$. Therefore $\delta^{-1}\pip_x(y)$ lies inside a fixed ball independent of $\delta$, and in consequence
    \begin{equation}
        \phi^{N_xS}(\delta^{-1}\pip_x(y)) \asymp 1
    \end{equation}
    uniformly in $\delta$, concluding the proof.
\end{proof}

\begin{lemma}
\label{lem:gaussian-estimate-on-ball}
    For $x\in S$ and sufficiently small $\delta$,
    \begin{equation}
        \int_{S\cap B(x,\delta^{1/2})} \phi^D_\delta(x-y)\, dp_S(y) \asymp \delta^{d-D}
    \end{equation}
    uniformly in $\delta$.
\end{lemma}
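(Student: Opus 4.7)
The plan is to combine the three preceding lemmas in an essentially mechanical fashion; the substantive work has already been done in setting them up, and this lemma is the payoff.

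First I would use the Gaussian factorization~\eqref{eq:gaussian-decomposition} to rewrite the integrand as
\begin{equation}
    \phi^D_\delta(x-y) = \phi^{T_xS}_\delta(\pi_x(y))\,\phi^{N_xS}_\delta(\pip_x(y)),
\end{equation}
splitting the contribution of motion along $S$ from motion transverse to $S$. The point of this split is that only the tangential factor carries the nontrivial $y$-dependence that must be integrated against $p_S$, while the normal factor is essentially constant on the ball of radius $\delta^{1/2}$.

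Next, on the domain of integration $S\cap B(x,\delta^{1/2})$, Lemma~\ref{lem:normal-gaussian-estimate} gives $\phi^{N_xS}_\delta(\pip_x(y)) \asymp \delta^{d-D}$ uniformly in both $\delta$ and $y$. Uniformity in $y$ is precisely what lets me pull this factor outside the integral up to multiplicative constants: there exist $C>0$ (independent of $\delta$ and $y$) such that $C^{-1}\delta^{d-D} \leq \phi^{N_xS}_\delta(\pip_x(y)) \leq C\delta^{d-D}$ for all $y\in S\cap B(x,\delta^{1/2})$ and all sufficiently small $\delta$. Substituting the upper/lower bounds into the integral yields
\begin{equation}
    \int_{S\cap B(x,\delta^{1/2})} \phi^D_\delta(x-y)\,dp_S(y) \asymp \delta^{d-D} \int_{S\cap B(x,\delta^{1/2})} \phi^{T_xS}_\delta(\pi_x(y))\,dp_S(y).
\end{equation}

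Finally, I would invoke Lemma~\ref{lem:tangent-gaussian-estimate} to replace the remaining tangential integral by a constant, giving the desired $\delta^{d-D}$ scaling uniformly in $\delta$. I do not expect any genuine obstacle at this stage; the only thing to be slightly careful about is that the ``$\asymp$'' relations compose, which is immediate from the definition since one can multiply upper bounds by upper bounds and lower bounds by lower bounds when all quantities involved are positive. All positivity is guaranteed because $\phi^{T_xS}_\delta$, $\phi^{N_xS}_\delta$, and $p_S$ are positive, and the smoothness of $p_S$ ensures the tangential integral is strictly positive for small $\delta$.
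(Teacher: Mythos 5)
Your proposal is correct and follows essentially the same route as the paper's proof: factor the Gaussian via eq.~\eqref{eq:gaussian-decomposition}, use Lemma~\ref{lem:normal-gaussian-estimate} (with its uniformity in $y$) to pull the normal factor $\delta^{d-D}$ out of the integral up to constants, and finish with Lemma~\ref{lem:tangent-gaussian-estimate}. Your added remarks on positivity and the composability of $\asymp$ estimates are details the paper leaves implicit, but there is no substantive difference.
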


\begin{proof}
    Denote $B=S\cap B(x,\delta^{1/2})$. By eq.~\eqref{eq:gaussian-decomposition} and Lemma~\ref{lem:normal-gaussian-estimate}, for sufficiently small $\delta$ and $y\in B$, we have
    \begin{equation}
        \phi^D_\delta(x-y) \asymp \delta^{d-D} \phi^{T_xS}_\delta(\pi_x(y)) 
    \end{equation}
    uniformly in $\delta$. It follows that the original integral can be estimated as
    \begin{equation}
        \int_B \phi^D_\delta(x-y)\, dp_S \asymp \delta^{d-D} \int_B \phi^{T_xS}_\delta(\pi_x(y)) \, dp_S.
    \end{equation}
    The proof concludes by applying Lemma~\ref{lem:tangent-gaussian-estimate} to the last integral. 
\end{proof}

\begin{lemma}
\label{lem:gaussian-estimate-outside-ball}
    For every $x\in S$ 
    \begin{equation}
       \lim_{\delta\to 0^+}\,\int_{S\setminus B(x,\delta^{1/2})} \phi^D_\delta(x-y)\, dp_S(y) = 0.
    \end{equation}
\end{lemma}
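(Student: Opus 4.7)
The plan is to obtain a uniform pointwise upper bound on the Gaussian integrand over the complement $S\setminus B(x,\delta^{1/2})$, and then observe that this bound decays to zero faster than any polynomial in $\delta$, which is enough because $p_S$ has total mass one.

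First, I would use the defining formula to write
\begin{equation}
    \phi^D_\delta(x-y) = (2\pi\delta^2)^{-D/2}\exp\!\Bigl(-\tfrac{\norm{x-y}^2}{2\delta^2}\Bigr).
\end{equation}
On the region of integration we have $\norm{x-y}\geq \delta^{1/2}$, hence $\norm{x-y}^2/(2\delta^2)\geq 1/(2\delta)$. Since the Gaussian is monotone decreasing in $\norm{x-y}$, this gives the pointwise bound
\begin{equation}
    \phi^D_\delta(x-y) \leq (2\pi\delta^2)^{-D/2}\exp\!\bigl(-1/(2\delta)\bigr)
\end{equation}
for every $y\in S\setminus B(x,\delta^{1/2})$, uniformly in $y$.

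Next, since $p_S$ is a probability measure on $S$, integrating the bound over $S\setminus B(x,\delta^{1/2})$ gives
\begin{equation}
    \int_{S\setminus B(x,\delta^{1/2})} \phi^D_\delta(x-y)\,dp_S(y) \leq (2\pi\delta^2)^{-D/2}\exp\!\bigl(-1/(2\delta)\bigr).
\end{equation}
The final step is to verify that the right-hand side tends to $0$ as $\delta\to 0^+$. This is a standard fact: with the substitution $u=1/\delta$, the expression becomes a constant multiple of $u^{D}\exp(-u/2)$, which vanishes at infinity since exponential decay dominates polynomial growth.

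There is no real obstacle here; the only thing to keep straight is that the polynomial blow-up $\delta^{-D}$ coming from the Gaussian normalization is harmless because $\delta^{1/2}$ is large compared to $\delta$ on the scale of the standard deviation, so the exponential factor $\exp(-1/(2\delta))$ enforces super-polynomial decay. Note also that this argument does not use smoothness of $p_S$ or any structure of $S$ beyond the finiteness of $p_S$, which reflects the fact that mass far from $x$ is simply negligible under the rapidly concentrating Gaussian.
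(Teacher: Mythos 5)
Your proof is correct and follows essentially the same route as the paper's: bound the Gaussian on the complement of $B(x,\delta^{1/2})$ by $\delta^{-D}\exp(-1/(2\delta))$ (the paper states this with the $\asymp$ notation, you with the explicit normalization constant), then integrate using the finiteness of $p_S$ and note that this bound vanishes as $\delta\to 0^+$ since the exponential decay beats the polynomial blow-up. Your write-up is only slightly more explicit about the final limit, which the paper leaves as an observation.
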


\begin{proof}
    Observe that if $\norm{v} \geq \delta^{1/2}$, we have
    \begin{equation}
        \phi^D_\delta(v) \asymp \delta^{-D} \exp \left(-\frac{\norm{v}^2}{2\delta^2}\right) \leq \delta^{-D}\exp\left(-\frac{1}{2\delta}\right)
    \end{equation}
    uniformly in $v$. This bound on the integrand converges to $0$ as $\delta\to 0$, and the measure $p_S$ is finite, proving the convergence of the considered integral.
\end{proof}

\begin{theorem}[The core estimate]
\label{thm:core-estimate}
    Assume that $S \subset \R^D$ is a connected $d$-dimensional submanifold endowed with a smooth probability measure $p_S$. Let $\rho_\delta$ be the density of $p_S * \N(0, \delta^2 I)$ on $\R^D$. Then for $x\in S$ and sufficiently small $\delta$, we have
    \begin{equation}
    \label{eq:core-estimate}
        \log\rho_\delta(x) = (d-D)\log\delta + O(1).
    \end{equation}
\end{theorem}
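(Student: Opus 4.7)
The plan is to split the defining integral for $\rho_\delta(x)$ into the part over $S\cap B(x,\delta^{1/2})$ and the part over its complement in $S$, and then combine the two estimates provided by Lemmas~\ref{lem:gaussian-estimate-on-ball} and~\ref{lem:gaussian-estimate-outside-ball}. Concretely, write
\[
    \rho_\delta(x) = \underbrace{\int_{S\cap B(x,\delta^{1/2})} \phi^D_\delta(x-y)\, dp_S(y)}_{I_1(\delta)} + \underbrace{\int_{S\setminus B(x,\delta^{1/2})} \phi^D_\delta(x-y)\, dp_S(y)}_{I_2(\delta)}.
\]
Lemma~\ref{lem:gaussian-estimate-on-ball} yields constants $c, C > 0$ such that $c\delta^{d-D} \leq I_1(\delta) \leq C\delta^{d-D}$ for all sufficiently small $\delta$, while Lemma~\ref{lem:gaussian-estimate-outside-ball} guarantees $I_2(\delta)\to 0$ as $\delta\to 0^+$.

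From here the two-sided bound on $\rho_\delta(x)$ follows quickly. The lower bound is simply $\rho_\delta(x) \geq I_1(\delta) \geq c\delta^{d-D}$, using non-negativity of $I_2(\delta)$. For the upper bound, observe that $d\leq D$ implies $\delta^{d-D} \geq 1$ whenever $\delta\leq 1$, so for $\delta$ small enough $I_2(\delta) \leq 1 \leq \delta^{d-D}$, giving $\rho_\delta(x) \leq (C+1)\delta^{d-D}$. Taking logarithms of both inequalities yields
\[
    \log c + (d-D)\log\delta \;\leq\; \log\rho_\delta(x) \;\leq\; \log(C+1) + (d-D)\log\delta,
\]
which is exactly the desired conclusion $\log\rho_\delta(x) = (d-D)\log\delta + O(1)$.

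The hard work of the argument is really front-loaded into the five preceding lemmas: they encode the local geometry (the tangent/normal decomposition, the quadratic error of the local parametrization $F_x$ captured by eq.~\eqref{eq:local-parametrization-norm-estimate}, and the factorization of $\phi^D_\delta$ into a tangent Gaussian integrating to $\asymp 1$ and a normal Gaussian contributing the $\delta^{d-D}$ factor) and control the tail. Given these, the proof of the theorem itself is essentially bookkeeping, with the only mild point of care being to verify that $I_2(\delta)$ is dominated by the main term $\delta^{d-D}$, which is immediate from $d\leq D$.
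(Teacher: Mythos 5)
Your proof is correct and follows essentially the same route as the paper's: bound the tail integral by $\delta^{d-D}$ using Lemma~\ref{lem:gaussian-estimate-outside-ball} together with $\delta^{d-D}\geq 1$ for $\delta\leq 1$, combine with the two-sided estimate of Lemma~\ref{lem:gaussian-estimate-on-ball} on the ball part, and take logarithms. The only difference is that you spell out the constants explicitly where the paper compresses the argument into the $\asymp$ notation.
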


\begin{proof}
    Since $\delta^{d-D}\geq 1$ for $\delta \leq 1$, given sufficiently small $\delta$, from Lemma~\ref{lem:gaussian-estimate-outside-ball} we get
    \begin{equation}
        \int_{S\setminus B(x,\delta^{1/2})} \phi^D_\delta(x-y)\, dp_S(y) < 
        \delta^{d-D}.
    \end{equation}
    By combining this with eq.~\eqref{eq:mollified-density} and Lemma~\ref{lem:gaussian-estimate-on-ball}, we obtain $\rho_\delta(x) \asymp \delta^{d-D}$, which yields the desired 
    estimate after taking $\log$.
\end{proof}

\subsection{The LIDL algorithm} 

\begin{algorithm}[b]
\caption{LIDL algorithm}
\label{alg:lidl}
\begin{algorithmic}
\REQUIRE
    $X\subset \R^D$; %
    $x_1,\dots, x_m \in\R^D$;  %
    $\delta_1, \dots, \delta_n \in \R^+$;  %

    \FOR{$j=1$ to $n$}
        \STATE $\DD_j \gets$ $\DD$ perturbed with $\N(0, \delta_j^2 \I_D)$ 
        \STATE Fit the density model $\hat\rho_j$ to $\DD_j$
    \ENDFOR
    \FOR{$i=1$ \TO $m$}
        \FOR{$j=1$ \TO $n$}
            \STATE $\xi_j \gets \log\delta_j$
            \STATE $\eta_j \gets \log\hat\rho_j(x_i)$
        \ENDFOR
        \STATE $\beta \gets$  regression coefficient for a set of $n$ points $(\xi_j, \eta_j)$
        \STATE $\hat{d}_i \gets D + \beta$
    \ENDFOR
    \STATE \textbf{return} $(\hat{d}_1, \dots, \hat{d}_m)$
\end{algorithmic}
\end{algorithm}

Now, let us consider how to use the core estimate derived above in practice. 
The core requirement of LIDL is access to the approximate densities $\rho_\delta(x)$, which we have to obtain by fitting a density estimator on the data points from the dataset perturbed with a normally distributed noise of an appropriate magnitude $\delta$. 
Luckily, these days there exist density estimators which scale to data even as high-dimensional as images.
For the purpose of empirical evaluation of our method, in this work we use three models from the family of NF, however we emphasize that our method could use absolutely any density estimation method.
A viable alternative could be, for example, using diffusion models \citep{song2021maximum}, which are likely to lead to further improved accuracy of LIDL estimates.

Given a dataset $\DD\subset\R^D$, and a point $x\in\R^D$, at which we want to estimate LID (usually $x\in D$, as we want to take a point from the image of the data manifold in $\R^D$), we proceed as follows. First, we choose $n>1$ values $\delta_1, \dots, \delta_n$ of perturbation magnitude. We discuss how to choose $\delta$ in the following section. Then, we fit $n$ probability densities $\hat\rho_i$, which will be our approximations of $\rho_{\delta_i}$. Having estimated the densities $\hat\rho_i$, we consider the sequence of points of the form $(\log \delta_i, \log\hat\rho_i(x))$. Using linear regression to fit eq.~\eqref{eq:core-estimate}, we get an estimate $\beta$ for $d-D$, from which we obtain $\hat{d} = D + \beta$, an estimate for $d$. To estimate LID for multiple points, we can fit the densities once, and then loop over the points. 
Full algorithm is presented in Algorithm \ref{alg:lidl}.

It is worth noting that our method fits nicely into the LID estimation framework presented in \cite{amsaleg2019intrinsic}. Roughly speaking, it is based on two observations. Firstly, the dimension of an Euclidean space can be recovered from the degree of the polynomial growth rate of its ball volume as a function of its radius. Secondly, this idea can be applied to discrete datasets by replacing the notion of ball volume with the likelihood function of finding a point of the dataset within a given distance from a fixed base point. 

In our notation, this likelihood function is $r\mapsto p_S(B(x,r))$, where $x$ is the base point. With reasonable assumptions on the measure $p_S$, it can be shown that for small $r$ this function behaves like a polynomial of degree $d$, so the LID value we are estimating is the same as what is defined in \cite{amsaleg2019intrinsic}, which can be consulted for more details.

\begin{figure}[b]
    \centering
    \includegraphics[width=0.4\textwidth]{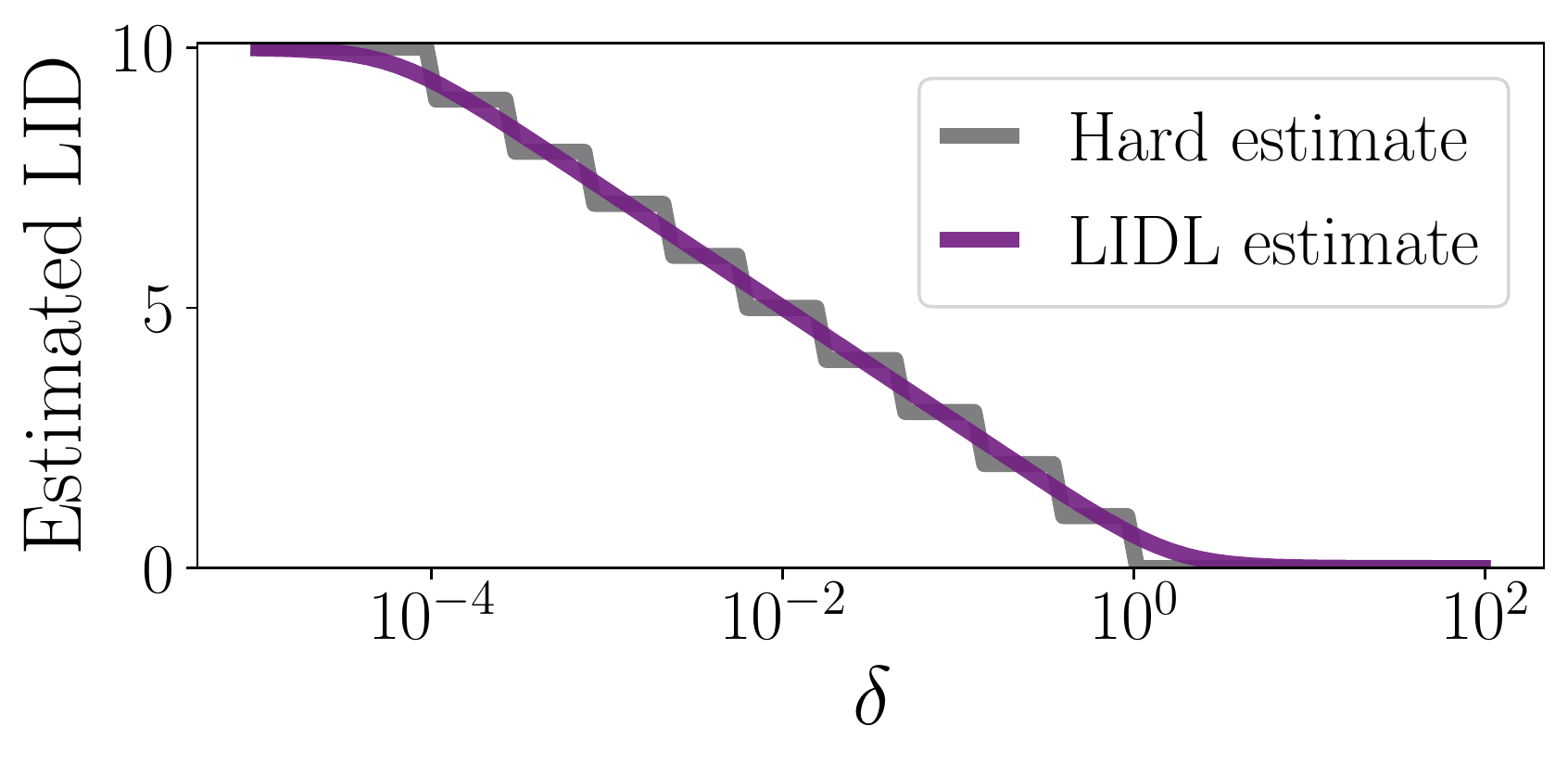}
    \caption{LIDL and hard estimates for different values of $\delta$ for 10D non-isotripic Gaussian. Notice how LIDL ignores dimensions smaller than $\delta$, as predicted theoretically.}
    \label{fig:step-delta-exp}
\end{figure}

\subsection{Viewing $\delta$ as a scale parameter}
\label{sec:scale}

In the previous section, we glossed over the fact that we have to choose the values of $\delta$.
From Sec.~\ref{sec:core-estimate} we know that the core estimate is exact for infinitesimally small $\delta$, so the first pressure is for the $\delta$ to be small, possibly as small as the numerical precision allows. 

\begin{figure}[t]
    \centering
    \includegraphics[width=0.4\textwidth]{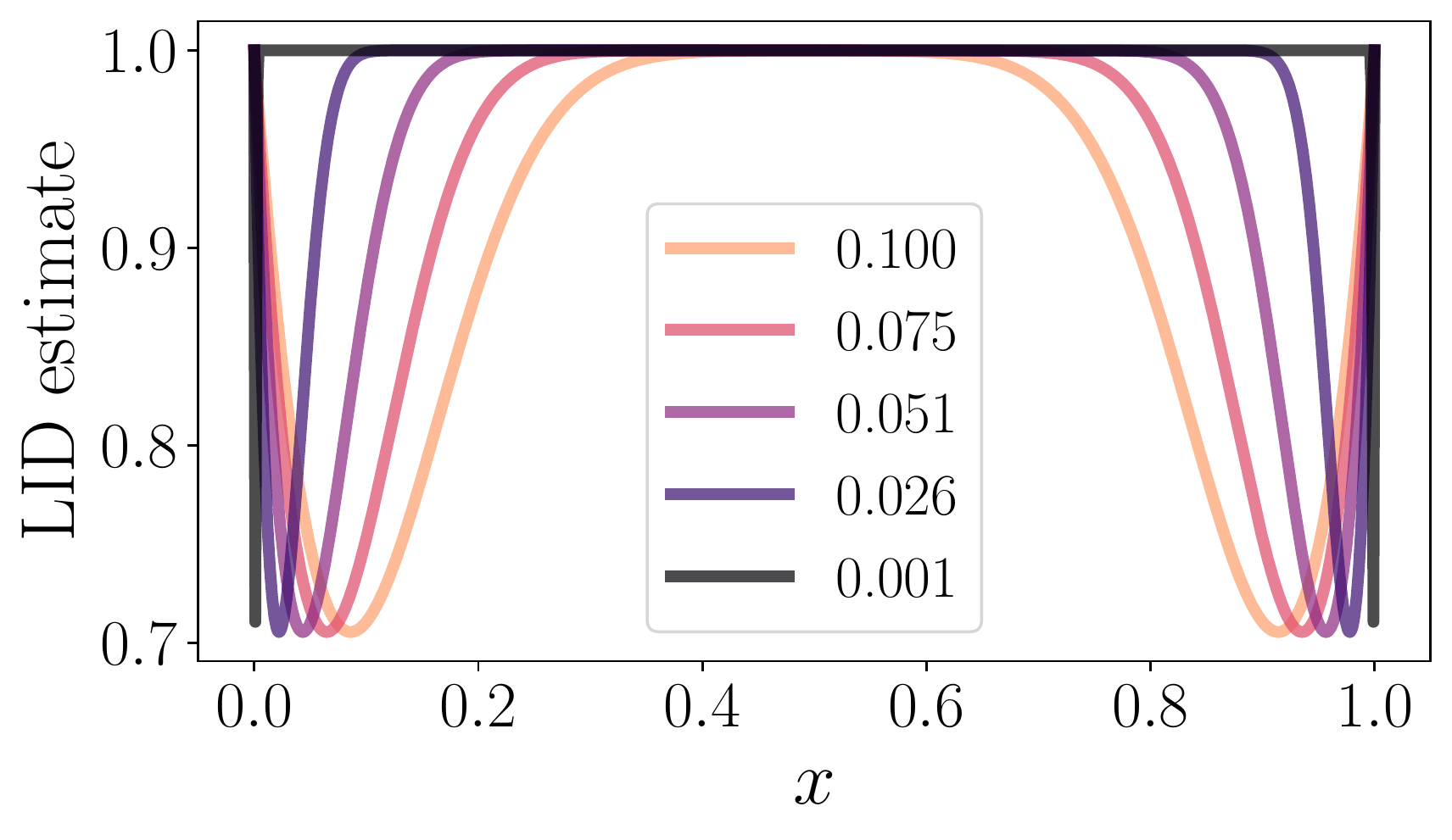}
    \caption{LIDL estimates for points from $\U(0,1)$ for different values of $\delta$ marked with different colors, as explained in the legend of the plot.}
    \label{fig:evaluation-uniform}
\end{figure}

However, $\delta$ can also be viewed as a length-scale parameter that allows users to choose a certain minimum `thickness' to be considered, such that dimensions `thinner' than the threshold will be ignored.
Consider an illustrative example:
suppose that the probability distribution $p_S$ is concentrated in a tubular neighborhood of another submanifold $S'$ of dimension $d'<d$. 
In this case, it can be approximated by a probability distribution $p_{S'}$ supported on $S'$.

Now, if this approximation is `good', in the sense that the thickness of the considered neighborhood is much smaller than the values of $\delta$ used, then intuitively the LIDL estimate should actually reflect the dimension $d'$ of the submanifold $S'$ instead of the true dimension $d$.

Continuing this example, 
we present the described behavior empirically. 
Let $S=\R^D$, and $p_S = \N(0, \Sigma)$, where $\Sigma$ is a diagonal matrix with entries $\sigma_1^2 \geq \sigma_2^2 \geq\dots\geq \sigma_D^2$. 
In Fig.~\ref{fig:step-delta-exp}, we plot the LIDL estimates for case $D=10$ and for $\sigma$ equally distributed on the logarithmic scale. We also plot the \emph{hard estimate} which is simply counting the number of entries in $\sigma$ that are larger than $\delta$. The LIDL estimate follows the hard estimate, and this behavior is predicted theoretically in Appendix~\ref{sec:appendix-lidl-for-normal-distribution}. 
We further investigate the role of the scale parameter in the case of imperfect density estimates in Sec.~\ref{sec:operating-range}.

As mentioned earlier, 
having an explicit length-scale parameter 
can be considered LIDL's feature as compared to other methods.
Is allows the user to easily set an operating scale such that to ignore certain amplitude of noise in the original data, e.g.\ the observation noise if we are able to estimate its magnitude apriori. 
The empirically observed rule of thumb is to take at least $\delta \gtrsim 10\sigma$, where $\sigma$ is standard deviation of the noise to be ignored. 

Setting such operating scale characteristic can be difficult in many other non-parametric algorithms that calculate statistics based on nearest neighbors. 
In those approach, there is either of the two natural scale parameters: number of nearest neighbours $k$ or radius $r$ around the point where we search for neighbours. 

When using $k$, our effective operating range depends on a combination of local density and the total number of samples used to run the algorithm. 
Using $r$ allows to set an operating range.
In this case, however, we expose ourselves to the risk of having not enough samples to estimate the local density. 
Most implementations of those methods set a default $k$.

\section{Empirical Behavior of the Proposed Method
}
\label{sec:empirical-evaluation}

In this section we examine the behaviour of our method when confronted with certain isolated difficulties. 
Instead of relying on a computed approximation $\hat{\rho}_\delta$, we assume we are given the actual perturbed density $\rho_\delta$ explicitly or we compute it through numerical integration. This ensures that any error observed during this analysis is caused directly by our LIDL method and not the density estimator. However, it comes at a price of restricting us to relatively simple examples where we can efficiently compute $\rho_\delta$.

\subsection{Uniform density on an interval}

\begin{figure}[t]
    \centering
    \includegraphics[width=0.4\textwidth]{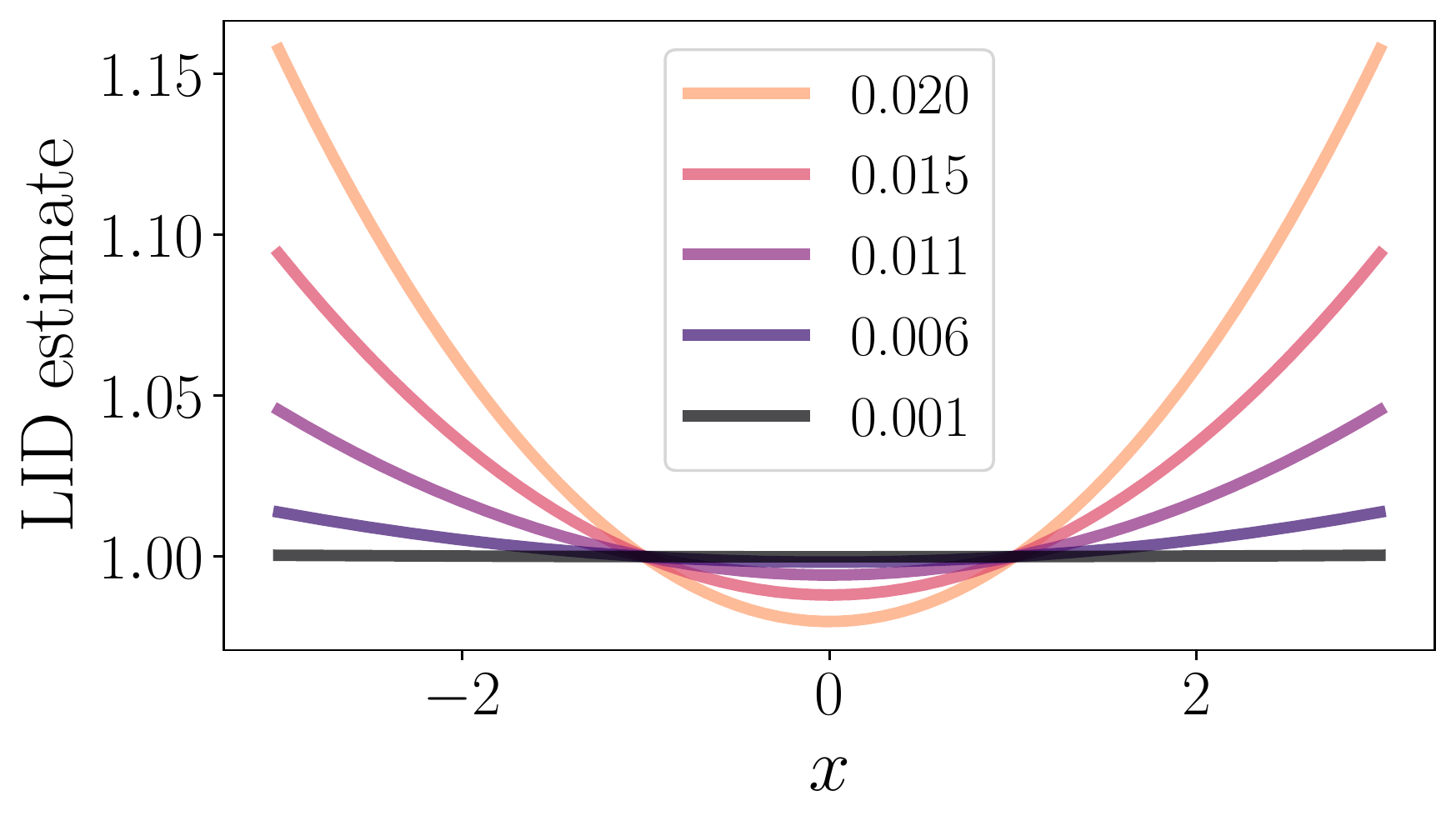}
    \caption{LIDL estimates for points from $\N(0,1)$ for different values of $\delta$ marked with different colors, as explained in the legend of the plot.}
    \label{fig:evaluation-normal}
\end{figure}

We assume, that in the neighborhood of $x$ the density is bounded from below by a positive constant. But for some real-world cases, this assumption is not fulfilled. To investigate how LIDL behaves in this case we ran it on $\U(0,1)$. It can be seen as a distribution on the real line, whose density vanishes outside $[0,1]$ interval, violating this assumption. Alternatively, in the vicinity of the interval endpoints, the size of the neighborhood admitting the parametrization required for the proof of the core estimate decreases to $0$.

We analytically calculated the convolution of $\U(0,1)$ with $\N(0,\delta^2)$ and used it to estimate LID at 1000 points between 0 and 1. 
We used just two points for linear regression, corresponding to $\delta_1=\delta$ and $\delta_2=1.05\delta$. The estimates for different values of $\delta$ are plotted in Fig.~\ref{fig:evaluation-uniform}. We can see that an error is introduced near the boundary as expected. In this case, its maximum value does not depend on the value of $\delta$, and points affected by this problem lie in the part closer than $\sim4\delta$ to the endpoints of the distribution support.

\subsection{Normal distribution on a line}

In this example, we study the LID estimates at different points of a line embedded in $\R^D$.
In Fig.~\ref{fig:evaluation-normal} we can see the estimates computed as per the previous example, for a few values of $\delta$. 
At first glance, it is worrying that the error seems to explode with distance from the mean of the distribution.
In Appendix~\ref{sec:appendix-lidl-for-normal-distribution}, we show that the error is quadratic in this distance, and, reassuringly, that its expected value over the whole distribution can be controlled.
The reason for this behavior can be traced back to the proof of Lemma~\ref{lem:tangent-gaussian-estimate} (more specifically eq.~\eqref{eq:approximating-integral-of-tangential-normal-distribution} in the appendix), which depends on the positive constant locally bounding the density from below. 
In our example, the density decreases as $e^{-t^2/2}$, which produces the quadratic error term (the final error is bounded by $\sum_i\abs{\log C_i}$, where $C_i$ are the multiplicative estimate constants appearing in all the steps of the proof). 

\subsection{Uniform density on a curved manifold}

\begin{figure}[tb]
    \centering
    \includegraphics[width=0.4\textwidth]{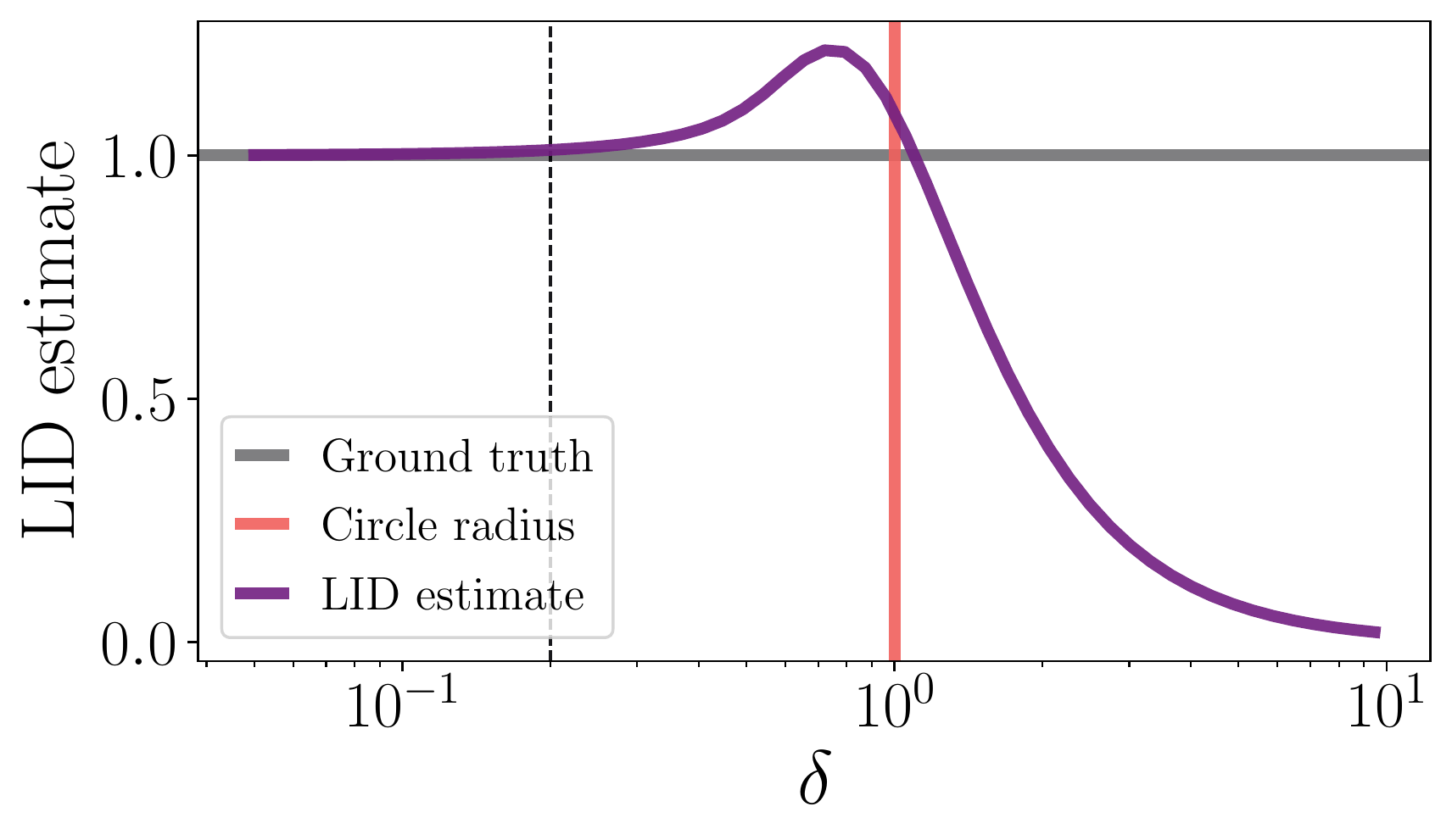}
    \caption{LIDL estimate as a function of $\delta$ for a uniform density on a unit circle. Vertical line at $\delta=0.2$.}
    \label{fig:evaluation-curvature}
\end{figure}

The LIDL estimate is affected by the curvature of the manifold, which manifests in the constant $C$ appearing in eq.~\eqref{eq:local-parametrization-norm-estimate}, subsequently used in the proofs of Lemmas~\ref{lem:ball-containment} and~\ref{lem:normal-gaussian-estimate}.
To see empirically how  the curvature influences the LIDL estimate, we numerically computed the convolution of the uniform density on the unit circle embedded in $\R^2$ with the noise distribution $\N(0,\delta^2I)$ for 2 values of $\delta$ similarly as in the previous examples. We calculated LIDL for the range of $\delta \in (0.05, 10)$. 
We plot the estimate dependence on $\delta$ in Fig.~\ref{fig:evaluation-curvature}. 

We can see that for $\delta \lesssim 0.2$ the estimate error is relatively small. After the positive bias for $\delta < 1$  we can observe a monotonic drop in the estimate until it reaches nearly 0. This is by the effect described in Section~\ref{sec:scale} where LIDL was observed to ignore the directions in which the standard deviations were lower than $\delta$.

\subsection{Manifolds with neighboring components}

In a real-world setting, it is possible for some connected components of the data manifold $S$ to be close to each other in the observable data space $\R^D$, especially when some features in the dataset have discrete distribution (e.g.\ height and sex in a medical dataset). In those settings, for values of $\delta$ comparable to the distance between the components, additional bias may be introduced to the estimate. To investigate this we ran an experiment similar to the previous example, but with a uniform distribution supported on the union of two long parallel segments. We then calculated LIDL estimates for the midpoints of those segments, to minimize the error caused by proximity to the boundary. We present the results in Fig.~\ref{fig:evaluation-close}. We can see positive bias in LIDL estimate appearing as $\delta$ is close to the distance between the segments, while for $\delta$ much larger than this distance, LIDL seems to view those two segments as a single line.

\subsection{Impact of linear regression on LIDL estimate}
Because our estimate depends on linear regression algorithm in order to estimate $\beta$, it may suffer from the same issues as any regression coefficient estimation algorithm \cite{li1985robust}, so in the future, more robust algorithm for linear regression estimation may be considered. Because we estimate only the rate of change, and not the constant from linear regression equation, LIDL is prone to biased log-likelihood estimates, and noise added to log-likelihood estimate only affects the variance of the estimate.

\begin{figure}[tb]
    \centering
    \includegraphics[width=0.4\textwidth]{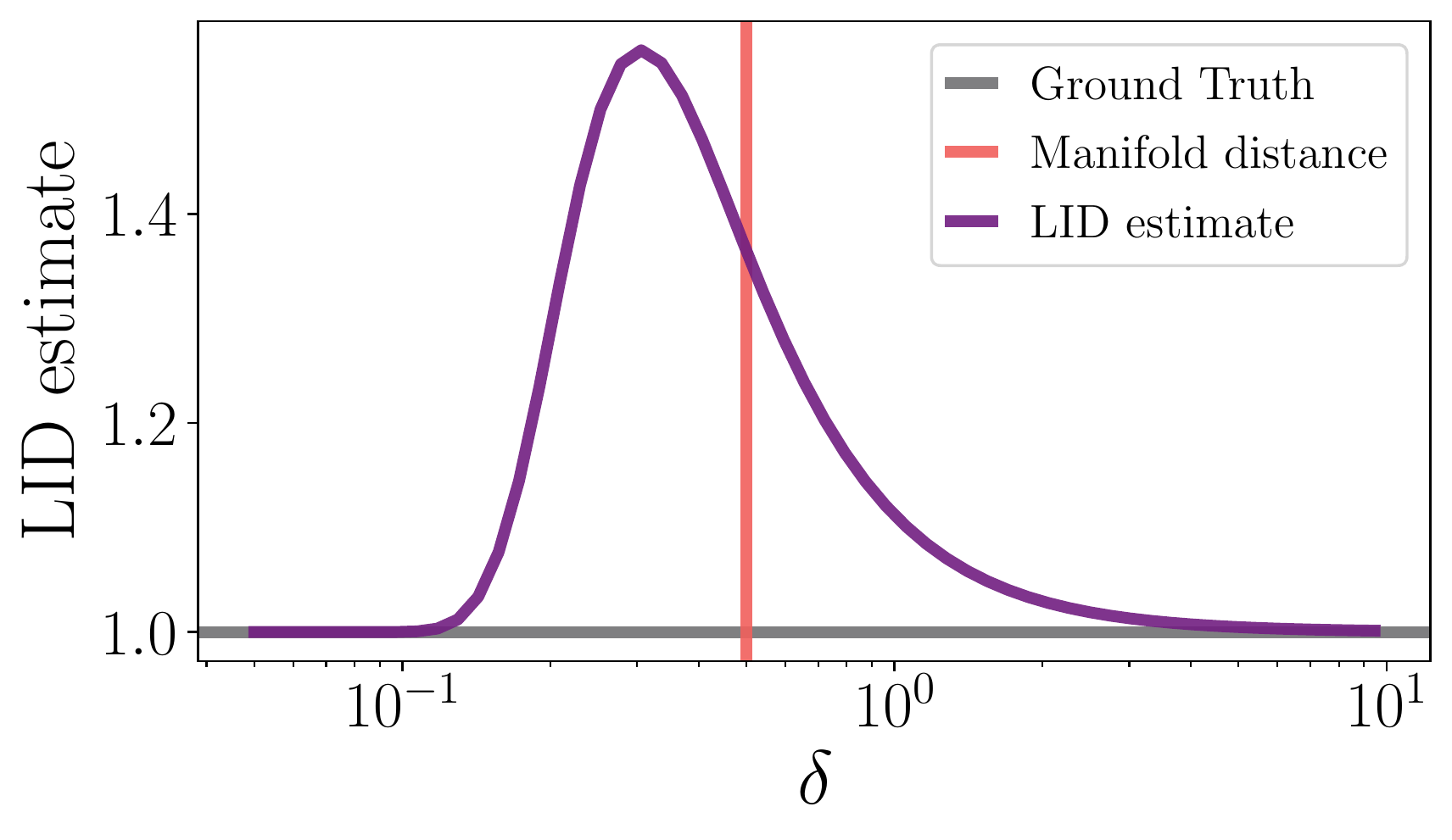}
    \caption{LIDL estimate as a function of $\delta$ for 2 long 1-dimensional manifolds parallel to each other.}
    \label{fig:evaluation-close}
\end{figure}

\subsection{Synthetic datasets}
\label{sec:other-perfect-datasets}
We ran evaluations of LIDL with density estimates computed using numerical integration on Swiss roll, uniform distribution on a helix, and Gaussians from 10 up to 4000 dimensions. We got almost exact estimates with mean absolute error (MAE) lower than $10^{-4}$ for every dataset. More information about these results is included in Appendix~\ref{sec:exp-details}.

\section{Related Work}

ID estimation methods can be divided into two broad categories: global and local \citep{camastra2016intrinsic}. Global methods aim to give a single estimate of the dimensionality of the entire dataset, which however discards the nuanced manifold structure when the data lies on a union of different manifolds (which is often the case for real-world datasets). 

On the contrary, the local methods \citep{carter2009local,kleindessner2015dimensionality,levina2004maximum,hino2017local,camastra2016intrinsic, rozza2012novel,CERUTI20142569,camastra2002} try to estimate the local ID of the data manifold at an arbitrary point.
This approach gives more insight into the nature of the dataset, and provides more options to summarize the dimensionality of the manifold than the global perspective.
A detailed overview of the methods used for global and local ID estimation is provided by \citet{camastra2016intrinsic}, and for a good review of the local ID estimation methods we refer the reader to \citet{johnsson2014low}. 
We list all the algorithms we compare to in Table~\ref{tab:skdim} in the appendices.

\section{Experiments}
\label{sec:exp}

\begin{figure}
    \centering
    \includegraphics[width=0.45\textwidth]{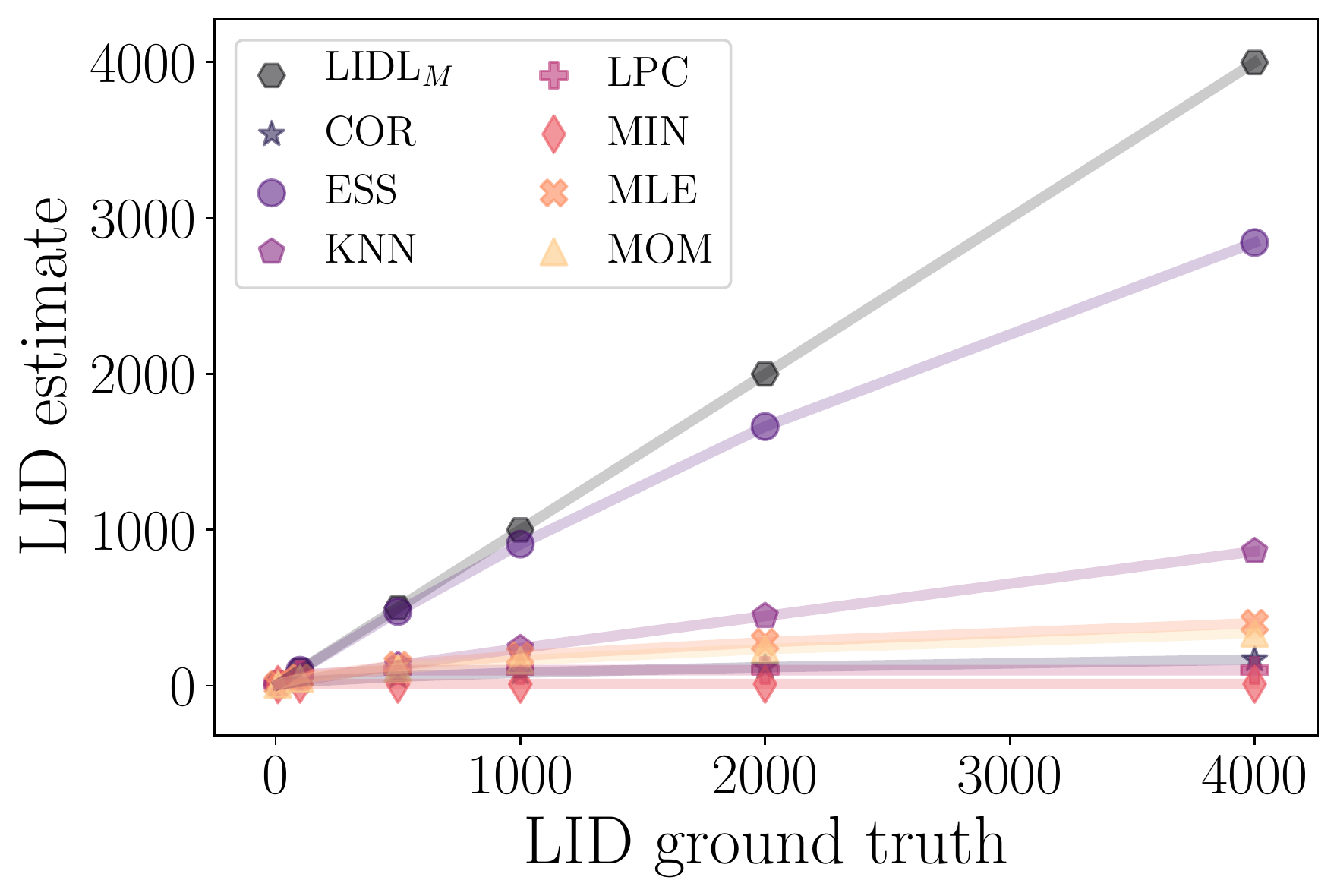}
    \caption{LID estimates for $d$ dimensional uniform distribution on a hypercube. More results and abbreviation explanations can be found in Tables~\ref{tab:comparison},~\ref{tab:comparison-mae}~and~\ref{tab:skdim} in Appendix~\ref{sec:exp-details}.  The dimensionality $d$ of the distribution is plotted on the horizontal axis and the estimates for different algorithms on the vertical axis.}
    \label{fig:uniform-scalability}
\end{figure}

In this section, we compare LIDL with other algorithms, investigate its behavior with imperfect density estimators, and run it on real-world datasets. Details of training procedure can be found in Appendix~\ref{sec:exp-details} and in Sec.~\ref{sec:exp-reducing} we describe how to reduce an error of our estimate.

\subsection{Comparison on synthetic datasets}

We collated LIDL with other LID estimation algorithms from \texttt{scikit-dimension} Python library \citep{e23101368}, which covers all of the important algorithms for LID estimation, and compared them in three different aspects: %
\emph{1}.\ Scalability, 
\emph{2}.\ Multidimensional and curved manifolds,
\mbox{\emph{3}.\ Multiscale manifolds}.

We excluded FisherS and DANCo algorithms because they do not scale well to higher-dimensional settings. FisherS suffered from memory problems on medium datasets, and DANCo had unfeasibly long runtimes (multiple weeks) on the thousand-dimensional datasets.
According to the convention in the field, we choose to make comparison only on synthetic datasets, because we have ground truth for them.

\begin{figure}
    \centering
    \includegraphics[width=0.44\textwidth]{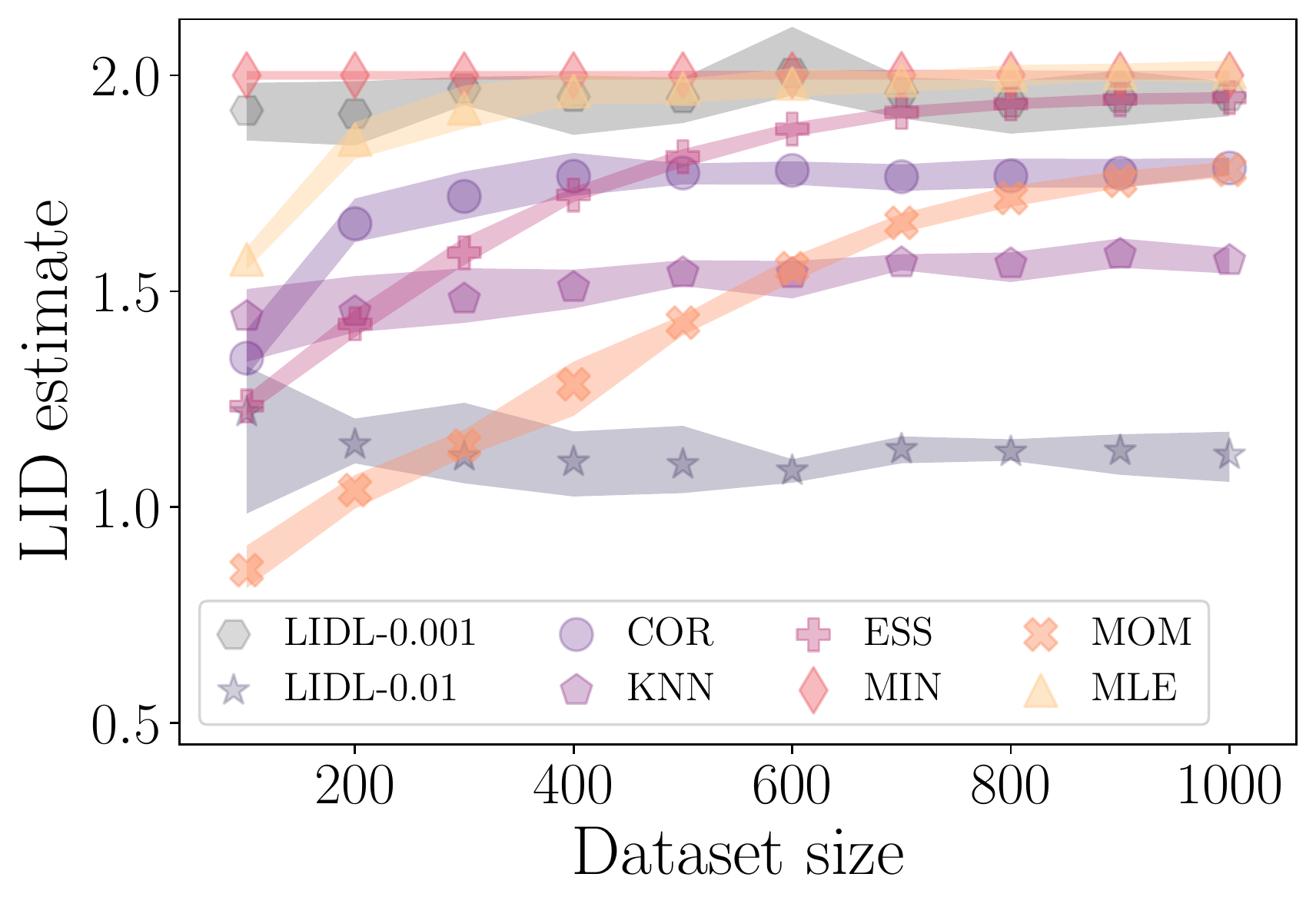}
    \caption{LID estimates for uniform distribution on a rectangle with edge lengths equal to 0.1 and 0.01. The size of the dataset is plotted on the horizontal axis and the estimate (with respective 95\% confidence intervals) on the vertical axis. For most algorithms (except LIDL, KNN and MIN), we can see a disturbing phenomenon: the estimate depends on the sample size. LIDL-$\delta$ stands for LIDL with MAF density estimator and scale parameter $\delta$.Other abbreviations are explained in appendix in Table~\ref{tab:skdim}.}
    \label{fig:multiscale-comparison}
\end{figure}

\paragraph{Scalability}

To test scalability we ran all algorithms on standard multidimensional uniform and normal distributions up to 4K dimensions. Detailed results of the comparison are gathered in Appendix~\ref{sec:exp-details} in Tables~\ref{tab:comparison} and \ref{tab:comparison-mae} (starting from the 7-th row). Each dataset consisted of 10K data points and each algorithm was run 5 times on different samples from the distribution. For each run, we calculated differences between true LID and estimate and averaged it over 5 runs. Then we divided the result by the average manifold dimensionality for each dataset, getting a relative bias of each algorithm. In subsequent tables, we report relative MAE and estimate standard deviation for the same procedure. 
From those tables, we can clearly see, that although in many cases LIDL does not have the lowest error and bias, for almost all datasets the results are in the $\pm5\%$ range. Other algorithms fail to accurately estimate dimensions exceeding 100. One exception is ESS, which stands out from the rest but remains inferior to LIDL. We plot LID estimates for some of the algorithms (we omitted few for the sake of clarity) for multidimensional uniform distributions in Fig.~\ref{fig:uniform-scalability}. All the abbreviations used in the plot are explained in Appendix~\ref{sec:exp-details}.

\begin{figure}[t]
    \centering
    \includegraphics[width=0.49\textwidth]{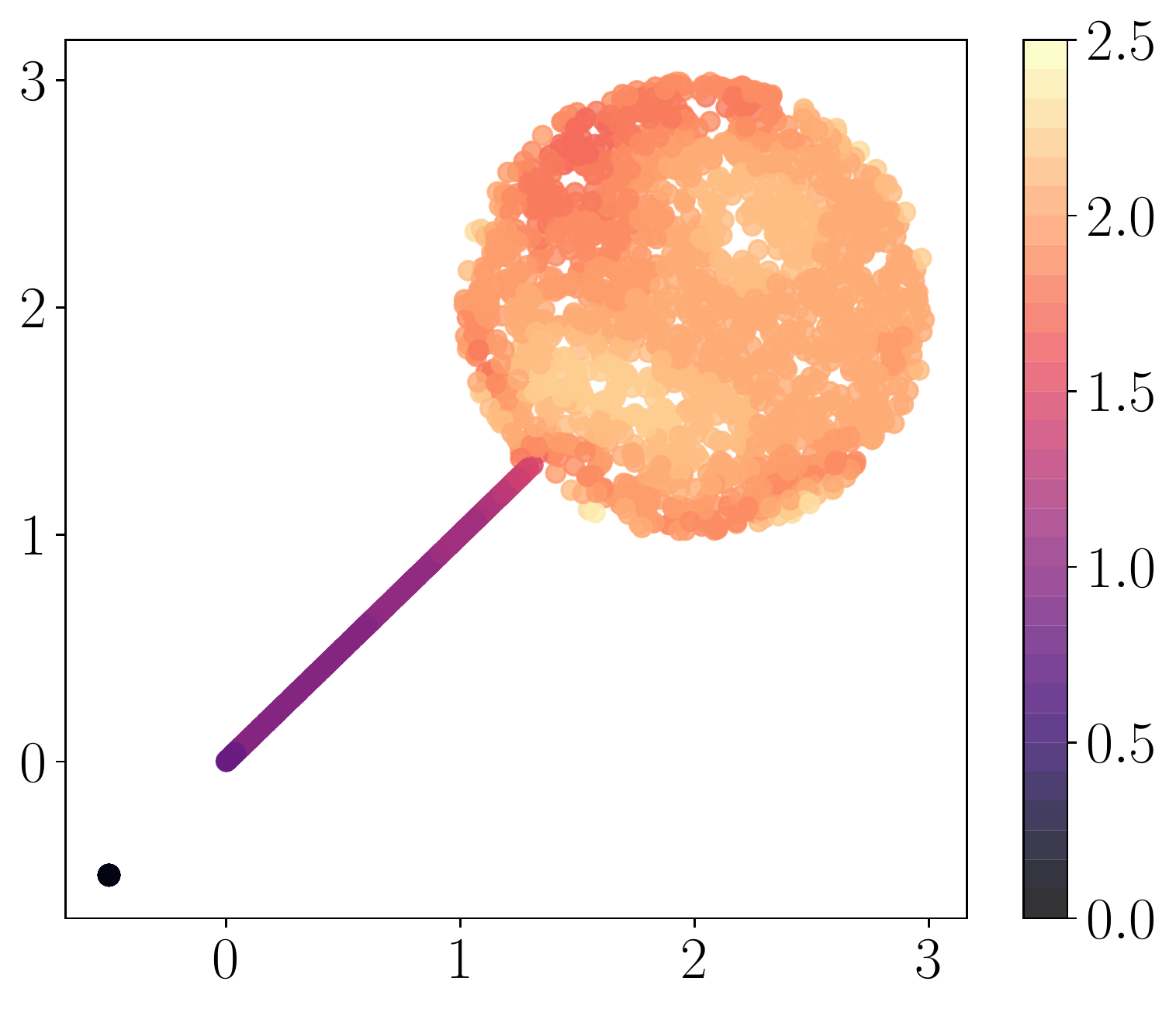}
    \caption{Points from the lollipop benchmark dataset and LIDL (with MAF) estimates for those points.}
    \label{fig:maf-lollipop}
\end{figure}

\paragraph{Multiscale manifolds}
In the Introduction, we postulated that a useful LID estimation algorithm should operate properly on multiscale manifolds. 
In this section, we compare existing LID methods and LIDL on highly non-isotropic datasets. 
We observed that most of the algorithms with the same scale parameters (or those without such parameters, like ESS) give different results for different sizes of the dataset. 
We hypothesize that this may be caused by violating assumptions about the local uniformity of the distribution, but we did not investigate it further. 
Only LIDL, MiND\-ML, DANCo, and KNN give stable estimates for different dataset sizes. 
We plot those results for selected algorithms in Fig.~\ref{fig:multiscale-comparison}. For both scale parameter values, LIDL gives stable estimates for different dataset sizes. 
The rest of the unplotted algorithms also give unstable estimates, and we omitted them only to make the plot more readable.

\paragraph{Curved manifolds and unions of manifolds}

We tested LIDL and other algorithms on some smaller but more complicated manifolds. We used three classical benchmarks from \cite{kleindessner2015dimensionality}: the Swiss roll dataset, uniform density on a helix, and uniform density on a sphere. These datasets lie on a curved manifolds (2-, 1- and 7-dimensional respectively) which may cause difficulties with fitting density estimators. Results of those experiments can be found in rows 4-7 of Tables~\ref{tab:comparison} and \ref{tab:comparison-mae}. The results for LIDL are decent (Relative bias less than 0.05 and relative MAE less than 0.06), but lPCA and ESS gave estimates with relative bias and MAE less than 0.01. For perfect density estimates LIDL gives almost perfect estimates on those datasets, as presented in Sec.~\ref{sec:other-perfect-datasets}.

None of the above datasets however consisted of components of different dimensions, which may be the case for many real-world datasets. We used a \emph{lollipop dataset}, which is composed of 0, 1, and 2-dimensional components. The dataset and its corresponding LIDL estimates are plotted in Fig.~\ref{fig:maf-lollipop}. On the 2- and 1-dimensional parts, many algorithms achieved good results, some even better than LIDL, but the 0-dimensional component, which consisted of replicas of the same point, caused most problems for other algorithms. 

When algorithms tried to estimate LID for this 0-dimensional part, only lPCA and LIDL were able to estimate its dimensionality properly, and almost all other algorithms failed to converge. When we jittered those points a little with $\N(0,10^{-6})$, almost all of the algorithms converged but all of them yielded estimates close to 2. Thanks to the possibility of setting operating scale in LIDL, we could estimate the latter dimension correctly, regardless of noise in the data. Results for each component of the manifold treated separately can be found in the first 3 rows of Tables~\ref{tab:comparison} and \ref{tab:comparison-mae}.

\subsection{Operating range}
\label{sec:operating-range}
As stated in Sec.~\ref{sec:scale}, $\delta$ can be seen as a scale parameter. We introduced some numerical and theoretical results to support this hypothesis, and in this section, we are going to present some experiments investigating this topic. In Fig.~\ref{fig:step-delta-uniform} we present a similar experiment to that from Fig.~\ref{fig:step-delta-exp}, but this time with 4-dimensional uniform density. Results seem quite similar to previous theoretical results. For similar Gaussian distribution, we get an almost identical relation between dimension variance, LIDL estimate and $\delta$.

We also tuned a $\delta$ range on image MNIST and FMNIST to reduce dequantization noise influence on the LIDL estimate. More on those experiments can be found in Appendix~\ref{sec:exp-details}. Although this scale parameter has to be used with care. In one experiment on FMNIST (normalized to have values between -0.5 and 0.5) for values of $\delta > 0.1$ we observed that the whole cluster of darker clothes had been estimated as being 0-dimensional. We present some samples from this cluster in Fig.~\ref{fig:dark}.

\begin{figure}[t]
    \centering
    \includegraphics[width=\linewidth]{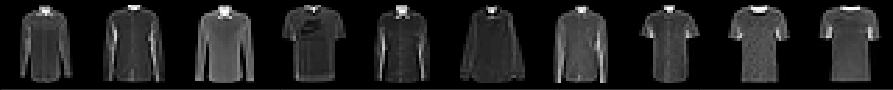}
    \caption{Images from the FMNIST dataset, for which the LID estimate is close to 0. This effect occurred when we used to high $\delta$s for this thin data manifold.}
    \label{fig:dark}
\end{figure}

\subsection{Experiments on image datasets}

\begin{figure}[b]
    \centering
    \newcommand\figwidth{0.16\textwidth}
    \minipage{\figwidth}
    \includegraphics[width=0.99\textwidth]{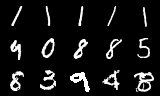}
    \endminipage
    \minipage{\figwidth}
    \includegraphics[width=0.99\textwidth]{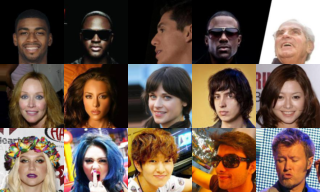}
    \endminipage
    \minipage{\figwidth}
    \includegraphics[width=0.99\textwidth]{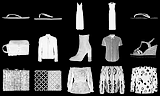}
    \endminipage
    \caption{Samples from different image datasets (MNIST, Celab-A, FMNIST from left to right) presented according to their LIDL estimates (top to bottom). Those results are highly correlated with the  complexity of an image.}
    \label{fig:glow}
\end{figure}

We ran LIDL on MNIST, FMNIST, and Celeb-A ($D$ = 1K, 1K, 12K respectively) datasets using Glow as a density estimator. We sorted those datasets according to LIDL estimates and observed that visually more complex examples have higher LID. Some small, medium, and high dimensional images from those datasets are shown in Fig.~\ref{fig:glow} and Fig.~\ref{fig:mnist-sort}, \ref{fig:fmnist-sort}, \ref{fig:celeba-sort} from Appendix~\ref{sec:exp-details}. In the aforementioned section we plot a distribution of LIDL estimates for different classes from MNIST and FMNIST, and show how the estimate is affected by the dequantization used in NF training.

In Appendix~\ref{sec:performance} we used LIDL to show, that LID negatively correlates with local (per image) accuracy of the classification model for images and that LID is positively correlated with image reconstruction error of VAE. %

\section{Conclusions}
\label{sec:conclusions}

We identified three challenges in LID estimation and explained how the existing methods do satisfy those desiderata. 
To overcome those limitations we introduced an algorithm for LID estimation which relies on powerful neural parametric density estimators, and provided solid theoretical justification for the method. 
Our experiments showed that it can scale to datasets of thousands of dimensions and give accurate estimates on complicated manifolds. 
We investigated its strengths and limitations and showed that LID is connected with local model performance, 
especially in unsupervised learning and classification settings.

There is a number of future research directions stemming from this work. 
The first one is a more theoretically grounded and experimentally tested procedure for choosing $\delta$ in the presence of observation noise, which might be important for practitioners. 
Another one is further investigating the connection of LID estimates and classifier performance: LID estimates could be used in active learning, semi supervised learning or curriculum learning. 

\section{Acknowledgements}
Some experiments were performed using the Entropy cluster at the Institute of Informatics, University of Warsaw, funded by NVIDIA, Intel, the Polish National Science Center grant UMO2017/26/E/ST6/00622 and ERC Starting Grant TOTAL. 
Some experiments were performed using the GUŚLARZ 9000 workstation at the Polish National Institute for Machine Learning.

Jacek Tabor is supported by Foundation for Polish Science (grant no POIR.04.04.00-00-14DE/18-00) carried out within the Team-Net program co-financed by the European Union under the European Regional Development Fund.
Przemysław Spurek is supported by the National Centre of Science (Poland) Grant No. 2019/33/B/ST6/00894.
Adam Goliński is supported by the UK EPSRC CDT in Autonomous Intelligent Machines and Systems.

We wish to thank various people for their contribution to this project: Marek Cygan for overall support during this project and valuable and constructive suggestions on the manuscript; Maciej Dziubiński, Piotr Kozakowski, Dominik Filipak, and Maciej Śliwowski for their valuable and constructive suggestions on the manuscript.

\section{CRediT Author Statement}

\begin{table}[H]
\renewcommand{\arraystretch}{1.4}

\resizebox{\columnwidth}{!}{%
\begin{tabular}{lllllll}
 & \rotatebox{70}{PT} & \rotatebox{70}{RM} & \rotatebox{70}{ŁG} & \rotatebox{70}{PS} & \rotatebox{70}{JT} & \rotatebox{70}{AG} \\ \cline{2-7} 
 
\multicolumn{1}{l|}{Conceptualization} & \multicolumn{1}{l|}{\cellcolor[HTML]{343434}} & \multicolumn{1}{l|}{} & \multicolumn{1}{l|}{} & \multicolumn{1}{l|}{} & \multicolumn{1}{l|}{\cellcolor[HTML]{343434}} & \multicolumn{1}{l|}{\cellcolor[HTML]{343434}} \\ \cline{2-7} 
\multicolumn{1}{l|}{Methodology} & \multicolumn{1}{l|}{\cellcolor[HTML]{343434}{\color[HTML]{656565} }} & \multicolumn{1}{l|}{} & \multicolumn{1}{l|}{\cellcolor[HTML]{343434}} & \multicolumn{1}{l|}{} & \multicolumn{1}{l|}{\cellcolor[HTML]{343434}} & \multicolumn{1}{l|}{\cellcolor[HTML]{343434}} \\ \cline{2-7} 
\multicolumn{1}{l|}{Software} & \multicolumn{1}{l|}{\cellcolor[HTML]{343434}{\color[HTML]{343434} }} & \multicolumn{1}{l|}{\cellcolor[HTML]{343434}{\color[HTML]{656565} }} & \multicolumn{1}{l|}{\cellcolor[HTML]{343434}} & \multicolumn{1}{l|}{} & \multicolumn{1}{l|}{} & \multicolumn{1}{l|}{} \\ \cline{2-7} 
\multicolumn{1}{l|}{Validation} & \multicolumn{1}{l|}{\cellcolor[HTML]{343434}} & \multicolumn{1}{l|}{} & \multicolumn{1}{l|}{} & \multicolumn{1}{l|}{} & \multicolumn{1}{l|}{} & \multicolumn{1}{l|}{} \\ \cline{2-7} 
\multicolumn{1}{l|}{Formal analysis} & \multicolumn{1}{l|}{\cellcolor[HTML]{343434}} & \multicolumn{1}{l|}{} & \multicolumn{1}{l|}{\cellcolor[HTML]{343434}} & \multicolumn{1}{l|}{} & \multicolumn{1}{l|}{} & \multicolumn{1}{l|}{} \\ \cline{2-7} 
\multicolumn{1}{l|}{Investigation} & \multicolumn{1}{l|}{\cellcolor[HTML]{343434}} & \multicolumn{1}{l|}{\cellcolor[HTML]{343434}} & \multicolumn{1}{l|}{} & \multicolumn{1}{l|}{} & \multicolumn{1}{l|}{} & \multicolumn{1}{l|}{} \\ \cline{2-7} 
\multicolumn{1}{l|}{Writing - Original Draft} & \multicolumn{1}{l|}{\cellcolor[HTML]{343434}} & \multicolumn{1}{l|}{} & \multicolumn{1}{l|}{\cellcolor[HTML]{343434}} & \multicolumn{1}{l|}{\cellcolor[HTML]{343434}} & \multicolumn{1}{l|}{} & \multicolumn{1}{l|}{\cellcolor[HTML]{343434}} \\ \cline{2-7} 
\multicolumn{1}{l|}{Writing - Review \& Editing} & \multicolumn{1}{l|}{\cellcolor[HTML]{343434}} & \multicolumn{1}{l|}{} & \multicolumn{1}{l|}{\cellcolor[HTML]{343434}} & \multicolumn{1}{l|}{\cellcolor[HTML]{343434}{\color[HTML]{656565} }} & \multicolumn{1}{l|}{\cellcolor[HTML]{343434}} & \multicolumn{1}{l|}{\cellcolor[HTML]{343434}} \\ \cline{2-7} 
\multicolumn{1}{l|}{Visualization} & \multicolumn{1}{l|}{\cellcolor[HTML]{343434}} & \multicolumn{1}{l|}{} & \multicolumn{1}{l|}{} & \multicolumn{1}{l|}{} & \multicolumn{1}{l|}{} & \multicolumn{1}{l|}{\cellcolor[HTML]{343434}} \\ \cline{2-7} 
\multicolumn{1}{l|}{Supervision} & \multicolumn{1}{l|}{\cellcolor[HTML]{343434}} & \multicolumn{1}{l|}{} & \multicolumn{1}{l|}{} & \multicolumn{1}{l|}{} & \multicolumn{1}{l|}{\cellcolor[HTML]{343434}} & \multicolumn{1}{l|}{\cellcolor[HTML]{343434}} \\ \cline{2-7} 
\multicolumn{1}{l|}{Project administration} & \multicolumn{1}{l|}{\cellcolor[HTML]{343434}} & \multicolumn{1}{l|}{} & \multicolumn{1}{l|}{} & \multicolumn{1}{l|}{} & \multicolumn{1}{l|}{} & \multicolumn{1}{l|}{} \\ \cline{2-7} 
\multicolumn{1}{l|}{Data Curation} & \multicolumn{1}{l|}{} & \multicolumn{1}{l|}{\cellcolor[HTML]{343434}{\color[HTML]{9A0000} }} & \multicolumn{1}{l|}{} & \multicolumn{1}{l|}{} & \multicolumn{1}{l|}{} & \multicolumn{1}{l|}{} \\ \cline{2-7} 
\end{tabular}%
}
\end{table}

\comment {
\textbf{Piotr Tempczyk}: Conceptualization, Methodology, Software, Validation, Formal analysis, Investigation, Writing - Original Draft, Writing - Review \& Editing, Visualization, Supervision, Project administration.
\textbf{Rafał Michaluk}: Software, Investigation, Data Curation.
\textbf{Łukasz Garncarek}: Methodology, Software, Formal analysis, Writing - Original Draft, Writing - Review \& Editing.
\textbf{Przemysław Spurek}: Writing - Original Draft, Writing - Review \& Editing.
\textbf{Jacek Tabor}: Conceptualization, Methodology, Writing - Review \& Editing, Supervision.
\textbf{Adam Goliński}: Conceptualization, Methodology, Writing - Original Draft, Writing - Review \& Editing, Visualization, Supervision.
}

\newpage
\bibliography{bibliography}

\begin{thebibliography}{39}
\providecommand{\natexlab}[1]{#1}
\providecommand{\url}[1]{\texttt{#1}}
\expandafter\ifx\csname urlstyle\endcsname\relax
  \providecommand{\doi}[1]{doi: #1}\else
  \providecommand{\doi}{doi: \begingroup \urlstyle{rm}\Url}\fi

\bibitem[Albergante et~al.(2019)Albergante, Bac, and
  Zinovyev]{albergante2019estimating}
Albergante, L., Bac, J., and Zinovyev, A.
\newblock Estimating the effective dimension of large biological datasets using
  fisher separability analysis.
\newblock In \emph{2019 International Joint Conference on Neural Networks
  (IJCNN)}, pp.\  1--8. IEEE, 2019.

\bibitem[Amsaleg et~al.(2018)Amsaleg, Chelly, Furon, Girard, Houle,
  Kawarabayashi, and Nett]{amsaleg2018extreme}
Amsaleg, L., Chelly, O., Furon, T., Girard, S., Houle, M.~E., Kawarabayashi,
  K.-i., and Nett, M.
\newblock Extreme-value-theoretic estimation of local intrinsic dimensionality.
\newblock \emph{Data Mining and Knowledge Discovery}, 32\penalty0 (6):\penalty0
  1768--1805, 2018.

\bibitem[Amsaleg et~al.(2019)Amsaleg, Chelly, Houle, Kawarabayashi,
  Radovanovi{\'c}, and Treeratanajaru]{amsaleg2019intrinsic}
Amsaleg, L., Chelly, O., Houle, M.~E., Kawarabayashi, K.-I., Radovanovi{\'c},
  M., and Treeratanajaru, W.
\newblock Intrinsic dimensionality estimation within tight localities.
\newblock In \emph{Proceedings of the 2019 SIAM International Conference on
  Data Mining}, pp.\  181--189. SIAM, 2019.

\bibitem[Ansuini et~al.(2019)Ansuini, Laio, Macke, and
  Zoccolan]{ansuini2019intrinsic}
Ansuini, A., Laio, A., Macke, J.~H., and Zoccolan, D.
\newblock Intrinsic dimension of data representations in deep neural networks.
\newblock In \emph{Advances in Neural Information Processing Systems}, pp.\
  6111--6122, 2019.

\bibitem[Bac et~al.(2021)Bac, Mirkes, Gorban, Tyukin, and Zinovyev]{e23101368}
Bac, J., Mirkes, E.~M., Gorban, A.~N., Tyukin, I., and Zinovyev, A.
\newblock Scikit-dimension: A python package for intrinsic dimension
  estimation.
\newblock \emph{Entropy}, 23\penalty0 (10), 2021.
\newblock ISSN 1099-4300.

\bibitem[Brehmer \& Cranmer(2020)Brehmer and Cranmer]{brehmer2020flowsa}
Brehmer, J. and Cranmer, K.
\newblock Flows for simultaneous manifold learning and density estimation.
\newblock 2020.

\bibitem[Camastra \& Staiano(2016)Camastra and Staiano]{camastra2016intrinsic}
Camastra, F. and Staiano, A.
\newblock Intrinsic dimension estimation: Advances and open problems.
\newblock \emph{Information Sciences}, 328:\penalty0 26--41, 2016.

\bibitem[Camastra \& Vinciarelli(2002)Camastra and Vinciarelli]{camastra2002}
Camastra, F. and Vinciarelli, A.
\newblock Vinciarelli, a.: Estimating the intrinsic dimension of data with a
  fractal-based method. ieee trans. pami 24, 1404-1407.
\newblock \emph{IEEE Transactions on Pattern Analysis and Machine
  Intelligence}, 24, 05 2002.
\newblock \doi{10.1109/TPAMI.2002.1039212}.

\bibitem[Cangelosi \& Goriely(2007)Cangelosi and
  Goriely]{cangelosi2007component}
Cangelosi, R. and Goriely, A.
\newblock Component retention in principal component analysis with application
  to cdna microarray data.
\newblock \emph{Biology direct}, 2\penalty0 (1):\penalty0 1--21, 2007.

\bibitem[Carter et~al.(2009)Carter, Raich, and Hero~III]{carter2009local}
Carter, K.~M., Raich, R., and Hero~III, A.~O.
\newblock On local intrinsic dimension estimation and its applications.
\newblock \emph{IEEE Transactions on Signal Processing}, 58\penalty0
  (2):\penalty0 650--663, 2009.

\bibitem[Caterini et~al.(2021)Caterini, {Loaiza-Ganem}, Pleiss, and
  Cunningham]{caterini2021rectangular}
Caterini, A.~L., {Loaiza-Ganem}, G., Pleiss, G., and Cunningham, J.~P.
\newblock Rectangular {{Flows}} for {{Manifold Learning}}.
\newblock \emph{NeurIPS}, 2021.

\bibitem[Cavallari et~al.(2018)Cavallari, Ribeiro, and
  Ponti]{cavallari2018unsupervised}
Cavallari, G.~B., Ribeiro, L.~S., and Ponti, M.~A.
\newblock Unsupervised representation learning using convolutional and stacked
  auto-encoders: a domain and cross-domain feature space analysis.
\newblock In \emph{2018 31st SIBGRAPI Conference on Graphics, Patterns and
  Images (SIBGRAPI)}, pp.\  440--446. IEEE, 2018.

\bibitem[Ceruti et~al.(2014)Ceruti, Bassis, Rozza, Lombardi, Casiraghi, and
  Campadelli]{CERUTI20142569}
Ceruti, C., Bassis, S., Rozza, A., Lombardi, G., Casiraghi, E., and Campadelli,
  P.
\newblock Danco: An intrinsic dimensionality estimator exploiting angle and
  norm concentration.
\newblock \emph{Pattern Recognition}, 47\penalty0 (8):\penalty0 2569--2581,
  2014.

\bibitem[Claussen \& Villmann(2005)Claussen and
  Villmann]{claussen2005magnification}
Claussen, J.~C. and Villmann, T.
\newblock Magnification control in winner relaxing neural gas.
\newblock \emph{Neurocomputing}, 63:\penalty0 125--137, 2005.

\bibitem[Dinh et~al.(2014)Dinh, Krueger, and Bengio]{dinh2014nice}
Dinh, L., Krueger, D., and Bengio, Y.
\newblock Nice: Non-linear independent components estimation.
\newblock \emph{arXiv preprint arXiv:1410.8516}, 2014.

\bibitem[Durkan et~al.(2019)Durkan, Bekasov, Murray, and
  Papamakarios]{durkan2019neural}
Durkan, C., Bekasov, A., Murray, I., and Papamakarios, G.
\newblock Neural spline flows.
\newblock \emph{Advances in neural information processing systems}, 32, 2019.

\bibitem[Facco et~al.(2017)Facco, d’Errico, Rodriguez, and
  Laio]{facco2017estimating}
Facco, E., d’Errico, M., Rodriguez, A., and Laio, A.
\newblock Estimating the intrinsic dimension of datasets by a minimal
  neighborhood information.
\newblock \emph{Scientific reports}, 7\penalty0 (1):\penalty0 1--8, 2017.

\bibitem[Farahmand et~al.(2007)Farahmand, Szepesv{\'a}ri, and
  Audibert]{farahmand2007manifold}
Farahmand, A.~M., Szepesv{\'a}ri, C., and Audibert, J.-Y.
\newblock Manifold-adaptive dimension estimation.
\newblock In \emph{Proceedings of the 24th international conference on Machine
  learning}, pp.\  265--272, 2007.

\bibitem[Gassberger \& Procaccia(1983)Gassberger and
  Procaccia]{gassberger1983measuring}
Gassberger, P. and Procaccia, I.
\newblock Measuring the strangeness of the strange attractor.
\newblock \emph{Physica D}, 189, 1983.

\bibitem[Hino et~al.(2017)Hino, Fujiki, Akaho, and Murata]{hino2017local}
Hino, H., Fujiki, J., Akaho, S., and Murata, N.
\newblock Local intrinsic dimension estimation by generalized linear modeling.
\newblock \emph{Neural Computation}, 29\penalty0 (7):\penalty0 1838--1878,
  2017.

\bibitem[Johnsson et~al.(2014)Johnsson, Soneson, and Fontes]{johnsson2014low}
Johnsson, K., Soneson, C., and Fontes, M.
\newblock Low bias local intrinsic dimension estimation from expected simplex
  skewness.
\newblock \emph{IEEE transactions on pattern analysis and machine
  intelligence}, 37\penalty0 (1):\penalty0 196--202, 2014.

\bibitem[Kingma \& Welling(2014)Kingma and Welling]{kingma2014auto}
Kingma, D. and Welling, M.
\newblock Auto-encoding variational {Bayes}.
\newblock \emph{arXiv:1312.6114}, 2014.

\bibitem[Kingma \& Dhariwal(2018)Kingma and Dhariwal]{kingma2018glow}
Kingma, D.~P. and Dhariwal, P.
\newblock Glow: Generative flow with invertible 1x1 convolutions.
\newblock \emph{Advances in neural information processing systems}, 31, 2018.

\bibitem[Kleindessner \& Luxburg(2015)Kleindessner and
  Luxburg]{kleindessner2015dimensionality}
Kleindessner, M. and Luxburg, U.
\newblock Dimensionality estimation without distances.
\newblock In \emph{Artificial Intelligence and Statistics}, pp.\  471--479,
  2015.

\bibitem[Knop et~al.(2020)Knop, Spurek, Tabor, Podolak, Mazur, and
  Jastrzebski]{knop2020cramer}
Knop, S., Spurek, P., Tabor, J., Podolak, I., Mazur, M., and Jastrzebski, S.
\newblock Cramer-wold auto-encoder.
\newblock \emph{Journal of Machine Learning Research}, 21, 2020.

\bibitem[Levina \& Bickel(2004)Levina and Bickel]{levina2004maximum}
Levina, E. and Bickel, P.
\newblock Maximum likelihood estimation of intrinsic dimension.
\newblock \emph{Advances in neural information processing systems},
  17:\penalty0 777--784, 2004.

\bibitem[Li et~al.(2018)Li, Farkhoor, Liu, and Yosinski]{li2018measuring}
Li, C., Farkhoor, H., Liu, R., and Yosinski, J.
\newblock Measuring the intrinsic dimension of objective landscapes.
\newblock In \emph{International Conference on Learning Representations}, 2018.

\bibitem[Li(1985)]{li1985robust}
Li, G.
\newblock Robust regression.
\newblock \emph{Exploring data tables, trends, and shapes}, 281:\penalty0 U340,
  1985.

\bibitem[Opitz \& Maclin(1999)Opitz and Maclin]{opitz1999popular}
Opitz, D. and Maclin, R.
\newblock Popular ensemble methods: An empirical study.
\newblock \emph{Journal of artificial intelligence research}, 11:\penalty0
  169--198, 1999.

\bibitem[Papamakarios et~al.(2017)Papamakarios, Pavlakou, and
  Murray]{papamakarios2017masked}
Papamakarios, G., Pavlakou, T., and Murray, I.
\newblock Masked autoregressive flow for density estimation.
\newblock \emph{Advances in neural information processing systems}, 30, 2017.

\bibitem[Pope et~al.(2020)Pope, Zhu, Abdelkader, Goldblum, and
  Goldstein]{pope2020intrinsic}
Pope, P., Zhu, C., Abdelkader, A., Goldblum, M., and Goldstein, T.
\newblock The intrinsic dimension of images and its impact on learning.
\newblock In \emph{International Conference on Learning Representations}, 2020.

\bibitem[Rezende \& Mohamed(2015)Rezende and Mohamed]{rezende2015variational}
Rezende, D. and Mohamed, S.
\newblock Variational inference with normalizing flows.
\newblock In \emph{International Conference on Machine Learning}, pp.\
  1530--1538. PMLR, 2015.

\bibitem[Ross \& Cresswell(2021)Ross and Cresswell]{ross2021tractable}
Ross, B.~L. and Cresswell, J.~C.
\newblock Tractable {{Density Estimation}} on {{Learned Manifolds}} with
  {{Conformal Embedding Flows}}.
\newblock \emph{NeurIPS}, 2021.

\bibitem[Rozza et~al.(2012)Rozza, Lombardi, Ceruti, Casiraghi, and
  Campadelli]{rozza2012novel}
Rozza, A., Lombardi, G., Ceruti, C., Casiraghi, E., and Campadelli, P.
\newblock Novel high intrinsic dimensionality estimators.
\newblock \emph{Machine learning}, 89\penalty0 (1):\penalty0 37--65, 2012.

\bibitem[Rubenstein et~al.(2018)Rubenstein, Schoelkopf, and
  Tolstikhin]{rubenstein2018latent}
Rubenstein, P.~K., Schoelkopf, B., and Tolstikhin, I.
\newblock On the latent space of wasserstein auto-encoders.
\newblock \emph{arXiv preprint arXiv:1802.03761}, 2018.

\bibitem[Song et~al.(2021)Song, Durkan, Murray, and Ermon]{song2021maximum}
Song, Y., Durkan, C., Murray, I., and Ermon, S.
\newblock Maximum likelihood training of score-based diffusion models.
\newblock In \emph{Thirty-Fifth Conference on Neural Information Processing
  Systems}, 2021.

\bibitem[Struski et~al.(2018)Struski, Tabor, and Spurek]{struski2018lossy}
Struski, {\L}., Tabor, J., and Spurek, P.
\newblock Lossy compression approach to subspace clustering.
\newblock \emph{Information Sciences}, 435:\penalty0 161--183, 2018.

\bibitem[Tolstikhin et~al.(2018)Tolstikhin, Bousquet, Gelly, and
  Schoelkopf]{tolstikhin2017wasserstein}
Tolstikhin, I., Bousquet, O., Gelly, S., and Schoelkopf, B.
\newblock Wasserstein auto-encoders.
\newblock In \emph{International Conference on Learning Representations}, 2018.

\bibitem[Vapnik(2013)]{vapnik2013nature}
Vapnik, V.
\newblock \emph{The nature of statistical learning theory}.
\newblock Springer science \& business media, 2013.

\end{thebibliography}
\bibliographystyle{icml2022}
\onecolumn
\icmltitle{Supplementary material}
\appendix

\section{Non-connected data manifolds and intersections}
\label{sec:non-connected}

Earlier we assumed that the data comes from a connected manifold $M$, whose local dimension is constant. Moreover, it was embedded in $\R^D$, precluding self-intersections. These restrictions can be relaxed as follows. Firstly, we may allow $M$ to contain multiple connected components. Secondly, instead of an embedding, we may consider a \emph{good} immersion $j\colon M\to \R^D$, satisfying the following finiteness condition.

\begin{definition}
\label{def:good-immersion}
    We will call an immersion $j\colon M\to N$ \emph{good}, if $M$ admits an open cover $\C$ such that for every $U\in\C$ the restriction of $j$ to $U$ is an embedding, and moreover, every $x\in N$ has an open neighborhood whose preimage intersects only finitely many sets in $\C$.
\end{definition}

In the non-connected case, the dimension is no longer constant on the manifold, but can differ between its components. We will denote by $\dim_x M$ the dimension of $M$ at a point $x\in M$.

Before we proceed, we will prove a simple technical lemma.

\begin{lemma}
\label{lem:finite-component-intersection}
    Let $j\colon M\to N$ be a good immersion. Then every $x\in N$ has a neighborhood whose preimage intersects only finitely many connected components of $M$.
\end{lemma}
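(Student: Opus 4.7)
My plan is to reduce to a finite collection of cover sets via Definition~\ref{def:good-immersion} and then use local connectedness of $M$ to pass from ``finitely many cover sets'' to ``finitely many connected components''. Given $x\in N$, the first step is to apply the finiteness condition in the definition of a good immersion to obtain an open neighborhood $V$ of $x$ in $N$ such that $j^{-1}(V)$ intersects only finitely many elements $U_1,\dots,U_k\in\mathcal{C}$. Hence $j^{-1}(V)\subseteq U_1\cup\dots\cup U_k$, and every connected component of $M$ meeting $j^{-1}(V)$ is forced to meet at least one of the $U_i$, reducing the question to bounding the components of $M$ touched by this finite union.

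The second step is to argue that one may assume each $U_i$ is contained in a single connected component of $M$. Since $M$, being a topological manifold, is locally connected, each $U_i$ decomposes as a disjoint union of its own connected components, all open in $M$, each lying in a single connected component of $M$, and each inheriting the embedding property from $j|_{U_i}$ by restriction to an open subset. Once this reduction is in place, the conclusion is immediate: at most $k$ connected components of $M$ can meet $j^{-1}(V)$, and we are done.

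The main obstacle is that naive component-wise refinement of $\mathcal{C}$ need not preserve the finiteness condition of Definition~\ref{def:good-immersion}: a single $U_i$ could in principle contain infinitely many connected components clustering near~$x$. To handle this I would shrink $V$ once more using the embedding property of each $j|_{U_i}$. Because $j|_{U_i}\colon U_i\to j(U_i)$ is a homeomorphism onto its image with the subspace topology inherited from $N$, for each $i\in\{1,\dots,k\}$ one splits into cases: either $x\in j(U_i)$, in which case a sufficiently small ambient neighborhood of $x$ meets $j(U_i)$ in a subset of a single connected component of $j(U_i)$ (and hence pulls back into a single component of $U_i$); or $x\notin j(U_i)$ is not a limit point, in which case $V$ may be shrunk to be disjoint from $j(U_i)$. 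Intersecting these finitely many shrunk neighborhoods over $i=1,\dots,k$ produces a neighborhood $V'\subseteq V$ for which $j^{-1}(V')$ lies in only finitely many connected components of $M$. The delicate point I expect to require the most care is the residual case where $x$ is a limit point of $j(U_i)$ without lying in it; this is precisely where the ``good'' part of the immersion assumption must do real work, by ruling out an infinite stack of sheets of $j(U_i)$ accumulating at~$x$.
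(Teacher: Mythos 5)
Your overall strategy mirrors the paper's proof: both take the neighborhood $V$ from Definition~\ref{def:good-immersion} with $j^{-1}(V)\subset U_1\cup\dots\cup U_k$, and both then try to shrink $V$ so that each $U_i$ contributes only one connected component. Your case analysis for $x\in j(U_i)$ and for $x\notin\overline{j(U_i)}$ is correct, and is in fact more careful than the paper, which simply asserts that the embedding property of $j|_{U_i}$ yields a neighborhood $V_i\subset V$ whose preimage meets a single component of $U_i$. However, the residual case you flag --- $x$ a limit point of $j(U_i)$ not lying in $j(U_i)$ --- is a genuine gap, not merely a delicate point, and it cannot be closed from the stated hypotheses: the ``good'' assumption does not do the work you hope it will, because the lemma as stated is false. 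Take $M=\mathbb{N}$ as a $0$-dimensional manifold, $N=\mathbb{R}$, and $j(n)=1/n$. The one-element cover $\mathcal{C}=\{M\}$ witnesses goodness: $j$ is an embedding of all of $M$ (its image $\{1/n : n\in\mathbb{N}\}$ carries the discrete subspace topology, since $0$ is not in the image), and the finiteness condition is vacuous for a single cover set. Yet every neighborhood of $x=0$ has preimage meeting infinitely many components of $M$. (If one objects to dimension zero, the same phenomenon occurs for countably many intervals mapped onto the segments $\{1/n\}\times(0,1)$ in $\mathbb{R}^2$.)

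This example breaks the paper's own proof at exactly the step where your proposal stalls (for $U_1=M$ and $x=0$ the asserted $V_1$ does not exist), so what you have located is a real error in the paper rather than a missing trick on your part. The lemma becomes true --- and nearly trivial --- under either of two natural repairs. First, one can require the sets of $\mathcal{C}$ to be connected (or each contained in a single component of $M$); then $j^{-1}(V)\subset U_1\cup\dots\cup U_k$ immediately meets at most $k$ components, and your problematic case disappears. Note that in the counterexample no such connected cover satisfies the finiteness condition, so this is a genuine strengthening of the definition. Second, one can require $j$ to be proper, as the paper itself implicitly does when it later applies this machinery to a ``proper immersion'': then the preimage of a compact neighborhood of $x$ is compact, and a compact subset of a manifold meets only finitely many of its components, since the components are open and pairwise disjoint. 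Under either fix your argument (or a much shorter one) goes through.
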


\begin{proof}
    Let $\C$ be an open cover of $M$ satisfying conditions of Definition~\ref{def:good-immersion}. Take $x\in N$, and let $V \subset N$ be a neighborhood of $x$ such that $j^{-1}(V)$ intersects only finitely many sets $U_1,\dots, U_n\in\C$. On each $U_i$ the restriction of $j$ is an embedding, so there exists a neighborhood $V_i\subset V$ of $x$ whose preimage is contained in a single connected component of $U_i$, and hence in a single connected component $M_i$ of $M$. 
    
    The intersection $\bigcap_i V_i$ is the required neighborhood of $x$, as its preimage is contained in the finite union of connected components $\bigcup_i M_i$.
\end{proof}

This more general case reduces to the one studied in Section~\ref{sec:core-estimate}, as the following reasoning shows.

\begin{proposition}
\label{prop:resolution-of-intersections}
    Suppose $j\colon M\to N$ is an immersion of manifolds. Moreover, let $\mu$ be a smooth measure on $M$. Then there exists a manifold $\tilde{M}$ endowed with a measure $\tilde{\mu}$ and a local diffeomorphism $f\colon\tilde{M}\to M$, such that 
    \begin{enumerate}
        \item the measure $\tilde{\mu}$ is smooth
        \item the pushforward $f_*\tilde{\mu}$ equals $\mu$;
        \item $\tilde\jmath = j\circ f\colon \tilde{M}\to N$ restricted to every connected component of $\tilde{M}$ is an embedding.
        \item if $j$ is good, then so is $\tilde\jmath$;

    \end{enumerate}    
\end{proposition}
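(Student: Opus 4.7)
My plan is to build $\tilde M$ as a disjoint union of copies of open subsets of $M$ on each of which $j$ restricts to an embedding, take $f$ to be inclusion on each piece, and distribute $\mu$ among these overlapping pieces using a weight $1/n(x)$, where $n(x)$ counts how many pieces contain $x$. Both the local-diffeomorphism property and the fact that each component embeds under $\tilde\jmath$ will then be essentially automatic from the construction.

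\textbf{Choice of cover.} First I would fix a locally finite open cover $\C=\{V_\alpha\}$ of $M$ with each $V_\alpha$ connected and $j|_{V_\alpha}$ an embedding. When $j$ is good, I take the cover from Definition~\ref{def:good-immersion}, replace each of its sets by its connected components (which keeps the embedding property), and observe that goodness on $N$ forces local finiteness on $M$: at any $y\in M$, the neighborhood $j^{-1}(U)$ provided by the good-immersion condition at $j(y)$ intersects only the finitely many $V_\alpha$ that meet it. In the general case, I would cover each $x\in M$ by a connected coordinate ball on which $j$ is an embedding and extract a locally finite refinement via paracompactness.

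\textbf{Construction and verification.} Set $\tilde M=\bigsqcup_\alpha V_\alpha$, with $f$ given on each summand by inclusion into $M$. This is manifestly a local diffeomorphism; each summand is a connected component of $\tilde M$; and $\tilde\jmath|_{V_\alpha}=j|_{V_\alpha}$ is an embedding, giving~(3). Let $n(x)=|\{\alpha:x\in V_\alpha\}|$, which is locally bounded by local finiteness, and set $d\tilde\mu|_{V_\alpha}=n(x)^{-1}\,d\mu|_{V_\alpha}$ on the $V_\alpha$-copy. For any Borel $A\subset M$,
\begin{equation}
    f_*\tilde\mu(A)=\sum_\alpha \int_{V_\alpha\cap A}\frac{d\mu(x)}{n(x)}=\int_A \frac{\sum_\alpha \mathbf{1}_{V_\alpha}(x)}{n(x)}\,d\mu(x)=\mu(A),
\end{equation}
giving~(2). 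In any chart on $V_\alpha$ the density of $\tilde\mu$ is $1/n$ times the density of $\mu$; smoothness of $\mu$ plus the local upper bound on $n$ makes both factors locally bounded away from $0$, so $\tilde\mu$ is smooth, giving~(1). For~(4), when $\C$ is chosen from the good-immersion condition, the same collection $\{V_\alpha\subset\tilde M\}$ witnesses goodness of $\tilde\jmath$: each restriction is an embedding, and $\tilde\jmath^{-1}(U)=\bigsqcup_\alpha (V_\alpha\cap j^{-1}(U))$ is nonempty in only finitely many components whenever $j^{-1}(U)$ meets only finitely many members of $\C$.

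\textbf{Main obstacle.} The subtle point is smoothness of $\tilde\mu$. The obvious choice of weighting by a smooth partition of unity $\{\phi_\alpha\}$ subordinate to $\C$ fails, because each $\phi_\alpha$ vanishes near the boundary of $V_\alpha$ and the corresponding density would vanish there, violating the locally-bounded-below clause of Definition~\ref{def:smooth-measure}. The resolution is to allow a discontinuous but strictly positive weight $1/n(x)$: Definition~\ref{def:smooth-measure} asks only for a positive lower bound on the density, not continuity, and local finiteness of $\C$ is exactly what guarantees such a bound for $1/n$.
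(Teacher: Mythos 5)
Your core construction (a disjoint union of cover elements, inclusion maps, and the counting weight $1/n(x)$) is sound, and it is essentially the paper's construction with a different weighting: the paper distributes $\mu$ using a partition of unity $\{\psi_U\}$ subordinate to the cover and takes as pieces the open sets $M_U=\{\psi_U>0\}$, whereas you keep whole cover elements and use the measurable weight $1/n$. Both weightings satisfy Definition~\ref{def:smooth-measure}, which demands only a density locally bounded away from zero, not continuity. In particular, your ``main obstacle'' paragraph is mistaken: the partition-of-unity approach does not fail, because the piece is taken to be $M_U=\{\psi_U>0\}$ (not $U$ itself), and on $M_U$ the weight $\psi_U$ is continuous and strictly positive, hence locally bounded below at every point of $M_U$; the points where $\psi_U$ vanishes simply do not belong to the piece. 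This is exactly what the paper does.

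The genuine gap is in your treatment of the good case, and it stems from the unnecessary decision to split the good cover into connected components. Splitting a locally finite (even finite) cover into components can destroy local finiteness: take $M=\R$, $j$ the standard embedding $\R\subset\R^2$, and the good cover $\C=\{\R,\,U\}$ with $U=\bigcup_{n\ge 1}\left(1/(n+1),\,1/n\right)$; the components of $U$ accumulate at $0$, so the preimage of \emph{every} neighborhood of $j(0)$ meets infinitely many of them. This falsifies your claim that ``goodness on $N$ forces local finiteness on $M$'' for the component cover, and, more importantly, it breaks your proof of property (4): the collection $\{V_\alpha\}$ does not witness goodness of $\tilde\jmath$, since $\tilde\jmath^{-1}(W)$ meets infinitely many summands for every neighborhood $W$ of $j(0)$. (Your smoothness argument survives, because the multiplicity $n$ of the component cover coincides pointwise with that of the original cover, which is locally bounded; but that is not the reason you gave.) The fix is simple and two-fold: either do not split at all --- property (3) does not require connected summands, because each connected component of $\tilde M$ is an open subset of some $V_\alpha$, and the restriction of an embedding to an open subset is an embedding --- or, if you insist on splitting, witness goodness of $\tilde\jmath$ by regrouping the components of each original $U\in\C$ into a single subset of $\tilde M$; this regrouped set is diffeomorphic to $U$ (an open subset of a manifold is the topological disjoint union of its open components), so $\tilde\jmath$ restricted to it is an embedding, and the finiteness condition transfers directly from the goodness of $j$.
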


\begin{proof}
    Since $j$ is an immersion, there exists an open cover $\C$ of $M$ such that on every $U\in\C$ the restriction of $j$ is an embedding. Let $\{\psi_U : U\in\C\}$ be a partition of unity subordinate to $\C$. Denote $M_U = \{x\in M : \phi_U(x) > 0\}$, and let $f_U\colon M_U\to M$ be the corresponding inclusion map. Finally, let $\tilde{M}$ be the disjoint union of $\{M_U : U\in\C\}$, and define $f\colon \tilde{M}\to M$ by gluing together the inclusions $f_U$.
    
    The measure $\tilde{\mu}$ can be defined as
    \begin{equation}
        \tilde{\mu} = \sum_{U\in\C} (f_U^{-1})_*(\psi_U \mu),
    \end{equation}
    i.e.\ for every $U\in\C$ we multiply $\mu$ by density function $\psi_U$, restrict it to $M_U$ and pull it to $\tilde{M}$ through $f_U$. Since by definition $\psi_U$ is continuous and positive on $M_U$, the measure $\tilde{\mu}$ is smooth. Moreover, by construction we have $f_*\tilde{\mu} = \mu$,     and the restriction of $\tilde\jmath$ to every $M_U$ (and therefore every to every connected component) is an embedding.

    To show the last assertion, assume that $j$ is good. In this case, the cover $\C$ defined above can be chosen in such a way that for every $x\in N$ there exists a neighborhood $V\subset N$ whose preimage $j^{-1}(V)$ intersects only finitely many sets in $\C$. It is then easy to see that the cover $\{M_U : U\in\C\}$ of $\tilde{M}$ satisfies the conditions of Definition~\ref{def:good-immersion}, so $\tilde\jmath$ is good.
\end{proof}

Now, suppose that in Theorem~\ref{thm:core-estimate}, instead of an embedded submanifold $S$, we are dealing with the image of a proper immersion $j\colon M\to \R^D$, and that $p_S$ is the pushforward of a probability measure $\mu$ on $M$. Thanks to Proposition~\ref{prop:resolution-of-intersections}, this reduces to the situation where $j$ restricted to every connected component of $M$ is an embedding.

\begin{proposition}
    Suppose $j\colon M\to \R^D$ is a good immersion, and its restriction to every connected component of $M$ is an embedding. Let $\mu$ be a smooth probability measure on $M$, and $p_S=j_*\mu$. For $x\in S = j(M)$ and sufficiently small $\delta$ we have
    \begin{equation}
        \log\rho_\delta(x) = (d-D)\log\delta + O(1),
    \end{equation}
    where 
    \begin{equation}
        d = \min_{j(y)=x} \dim_y M.
    \end{equation}
\end{proposition}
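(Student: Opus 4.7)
My plan is to reduce to Theorem~\ref{thm:core-estimate} by decomposing $M$ into its connected components $M_\alpha$, setting $\mu_\alpha = \mu|_{M_\alpha}$, $S_\alpha = j(M_\alpha)$, and $p_{S_\alpha} = (j|_{M_\alpha})_*\mu_\alpha$. By linearity of~\eqref{eq:mollified-density}, I would write $\rho_\delta(x) = \sum_\alpha \rho_{\delta,\alpha}(x)$, where $\rho_{\delta,\alpha}(x) = \int_{S_\alpha} \phi^D_\delta(x-y)\,dp_{S_\alpha}(y)$. Since each $j|_{M_\alpha}$ is an embedding by assumption, $S_\alpha$ is a $\dim M_\alpha$-dimensional embedded submanifold of $\R^D$ and $p_{S_\alpha}$ is a smooth finite (sub-probability) measure on it, so Theorem~\ref{thm:core-estimate} is applicable to each summand individually (the proof of the core estimate uses only finiteness and the smooth-density hypothesis, not the normalization to $1$).

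\textbf{Tail bound via goodness.} Next I would invoke Lemma~\ref{lem:finite-component-intersection} to choose a ball $V = B(x,r)$ whose preimage $j^{-1}(V)$ meets only finitely many components, say $M_1,\dots,M_n$. For every other component $M_\alpha$, the image $S_\alpha$ is contained in $\R^D\setminus B(x,r)$, so $\phi^D_\delta(x-y) \leq \delta^{-D}\phi^D(0)\exp(-r^2/(2\delta^2))$ uniformly over $y\in S_\alpha$. Summing these bounds against $\sum_\alpha \mu_\alpha(M_\alpha) \leq 1$ shows that the combined tail contribution to $\rho_\delta(x)$ is $O(\delta^{-D}\exp(-r^2/(2\delta^2)))$, which is $o(\delta^N)$ for every $N$ and hence will be absorbed in the eventual $O(1)$ term after taking the logarithm.

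\textbf{Core estimate on individual components.} For each of the $n$ remaining components $M_k$, two subcases arise. When $x \in S_k$, Theorem~\ref{thm:core-estimate} applied to $(S_k, p_{S_k})$ yields $\rho_{\delta,k}(x) \asymp \delta^{\dim M_k - D}$. When $x \notin S_k$, I would split $S_k = (S_k\cap V) \sqcup (S_k\setminus V)$: the outer piece contributes exponentially little by the same Gaussian-tail argument as above, while the inner piece is controlled by combining the embedding property of $j|_{M_k}$ (which gives $S_k$ a submanifold structure in $V$) with Lemma~\ref{lem:gaussian-estimate-outside-ball}-type estimates centered at a nearest point of $S_k$, giving a bound that is $O(\delta^{d-D})$ and therefore absorbed by the dominant terms below.

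\textbf{Assembly and main obstacle.} Collecting the finitely many contributions, $\rho_\delta(x) \asymp \sum_{k:\,x\in S_k} \delta^{\dim M_k - D}$; because $\delta\leq 1$ for small $\delta$, the largest summand is $\delta^{d-D}$ with $d = \min_{k\,:\,x\in S_k}\dim M_k = \min_{j(y)=x}\dim_y M$, and the sum is $\asymp \delta^{d-D}$. Taking logarithms then gives the asserted $\log\rho_\delta(x) = (d-D)\log\delta + O(1)$. The step I expect to be most delicate is the $x \notin S_k$ subcase when $x$ lies in the closure of $S_k$ without belonging to it: showing that such an "accumulating" component cannot contribute more than $\delta^{d-D}$ to $\rho_\delta(x)$ requires exploiting the interplay between the embedding hypothesis on $j|_{M_k}$ and the local finiteness afforded by goodness, rather than just the crude sup-norm bound $\delta^{-D}\mu(M_k)$, which alone is insufficient when $d>0$.
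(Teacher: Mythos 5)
Your plan follows essentially the same route as the paper's own proof: split $M$ into connected components, localize with Lemma~\ref{lem:finite-component-intersection}, apply Theorem~\ref{thm:core-estimate} to each nearby component (observing, correctly, that the core estimate only needs finiteness and smoothness of the measure, not normalization to $1$), and dispose of the far components with a Gaussian tail bound (your explicit bound $\delta^{-D}\phi^D(0)e^{-r^2/(2\delta^2)}$ plays the role of the paper's appeal to Lemma~\ref{lem:gaussian-estimate-outside-ball}). However, the subcase you flag as ``most delicate'' --- a component $M_k$ with $x\in \overline{S_k}\setminus S_k$ --- is a genuine gap, and it cannot be closed: the bound $O(\delta^{d-D})$ you want for such a component is false under the stated hypotheses. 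Concretely, take $D=2$ and $M=M_1\sqcup M_2$, where $M_2$ is the open unit disk embedded by inclusion (so $x=(0,0)\in j(M_2)$ and $\dim M_2=2$) and $M_1=(0,1)$ is embedded by $t\mapsto(t,0)$, each component carrying half the mass with uniform density. This immersion is good (the cover $\{M_1,M_2\}$ witnesses Definition~\ref{def:good-immersion}), it restricts to an embedding on each component, and $d=\min_{j(y)=x}\dim_y M=2$ because no point of $M_1$ maps to $x$. Yet
\begin{equation}
    \rho_{\delta,1}(x)=\tfrac{1}{2}\int_0^1\phi^2_\delta\bigl((t,0)\bigr)\,dt \asymp \delta^{-1},
\end{equation}
so $\log\rho_\delta(x)=-\log\delta+O(1)$ rather than $O(1)$: the accumulating lower-dimensional component dominates, and the asserted formula fails.

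What this reveals is that the missing ingredient is not a cleverer use of goodness but an additional hypothesis. The paper's prose introduces the proposition for ``the image of a \emph{proper} immersion,'' an assumption the proposition statement then drops; properness is exactly what rescues the argument. If $j$ is proper, each $S_k$ is closed in $\R^D$, so any component whose image does not contain $x$ stays at positive distance from $x$; since only finitely many components meet $B(x,r)$, one may shrink $r$ until every surviving component actually contains $x$, at which point your problematic subcase vanishes and your assembly step goes through verbatim. You should also be aware that the paper's own proof silently commits the error you were trying to avoid: it applies Theorem~\ref{thm:core-estimate} at $x$ to every component whose image merely meets $B(x,r)$, an application that is meaningless --- and, by the example above, leads to a wrong conclusion --- when $x\notin j(M_i)$. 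In this respect your write-up, by isolating exactly where the argument needs more than goodness, is more careful than the paper's; it just cannot be completed as stated without strengthening the hypotheses (properness, or closedness of each $j(M_k)$) or weakening the conclusion (taking the minimum of $\dim_y M$ over all components whose image closure contains $x$).
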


\begin{proof}
    By Lemma~\ref{lem:finite-component-intersection}, for sufficiently small $r$ the preimage $j^{-1}(B)$ of the ball $B=B(x,r)$ centered at $x$ intersects only finitely many connected components of $M$. Denote them by $M_1,\dots, M_k$, and let $M_0$ be the union of the remaining components. The measure $\mu$ can be decomposed as
    \begin{equation}
        \mu = \sum_{i=0}^k \mu(M_i)\mu_i,
    \end{equation}
    where $\mu_i$ is the restriction of $\mu$ to $M_i$, normalized to a probability measure. If we put $p_i = j_*\mu_i$, a similar decomposition holds for $p_S$.
    
    If we apply Theorem~\ref{thm:core-estimate} to $j(M_i)$ endowed with the measure $p_i$, for $i>0$, the corresponding perturbed density $\rho^i_\delta$ satisfies
    \begin{equation}
        \rho^i_\delta(x) \asymp \delta^{\dim{M_i}-D}
    \end{equation}
    for sufficiently small $\delta$. Moreover, for $\delta < r^2$, we have $j(M_0) = j(M_0)\setminus B(x,\delta^{1/2})$, so by Lemma~\ref{lem:gaussian-estimate-outside-ball}
    \begin{equation}
        \lim_{\delta\to 0^+} \rho^0_\delta(x) = 0.
    \end{equation}
    Consequently, for small $\delta<1$
    \begin{equation}
        \rho_\delta(x) = \sum_{i=0}^k \mu(M_i)\rho^i_\delta(x) \asymp \sum_{i=1}^k \delta^{\dim{M_i}-D},
    \end{equation}
    and the term with the lowest exponent dominates.
\end{proof}

\section{Examples with explicit derivations}

\subsection{A motivating example}
\label{sec:motivating-example}

Consider the standard embedding $\R^d \subset \R^D$. Take for $S$ a bounded open subset of $\R^d$, endowed with the uniform probability measure $p_S$ with constant density $\rho\equiv\operatorname{vol}(S)^{-1}$ on $S$. If we denote by $x_1$ and $x_2$ the components of a vector $x\in \R^D$ corresponding to the standard decomposition $\R^D = \R^d \times \R^{D-d}$, it follows from \eqref{eq:mollified-density} and properties of the Gaussian function, that
\begin{equation}
    \label{eq:flat-example-mollified-density}
    \rho_\delta(x) = 
    \frac{\phi^{D-d}_{\delta}(x_2)}{\operatorname{vol}(S)} 
    \int_S \phi^d_{\delta}(x_1-y_1) \, d y_1.
\end{equation}

Now, if $x$ is an interior point of $S$, then $x_2=0$. Moreover, for sufficiently small $\delta$, the integral above is arbitrarily close to $1$, as most of the mass of the integrand falls into a small neighborhood of $x_1$, which is contained in $S$. Therefore, for sufficiently small $\delta$
\begin{equation}
    \rho_\delta(x) \asymp \phi^{D-d}_\delta(0) = \delta^{d-D}\phi^{D-d}(0) \asymp \delta^{d-D}
\end{equation}
uniformly in $\delta$.

It follows that
\begin{equation}
    \log \rho_\delta(x) = (d-D) \log\delta + O(1),
\end{equation}
and hence
\begin{equation}
    d-D = \lim_{\delta\to 0} \frac{\log \rho_\delta(x)}{\log\delta}.
\end{equation}
In practice, $d-D$, and in consequence $d$, can be estimated by considering $\rho_\delta(x)$ for multiple small values of $\delta$, and using linear regression.

\subsection{Normal distribution in $\R^D$}
\label{sec:delta-gaussian-dimensions}
Suppose that $S=\R^D$, and $p_S = \N(0, \Sigma)$, where $\Sigma$ is a diagonal matrix with entries $\sigma_1^2 \geq \sigma_2^2 \geq\dots\geq \sigma_D^2$. In this case, the perturbation with $\N(0, \delta^2 I)$ yields another normal distribution $\N(0, \Sigma + \delta^2 I)$, whose density at $0$ is
\begin{equation}
\label{eq:perturbed-normal-distribution-at-0}
    \rho_\delta(0) = (2\pi)^{-D/2}\prod_{k=1}^D (\sigma_k^2 + \delta^2)^{-1/2}.
\end{equation}

\begin{proposition}
    \label{prop:delta-gaussian-dimensions}
    Let $1 \leq d < D$, and denote $\tau = (\sigma_d\sigma_{d+1})^{1/2}$. For $\lambda \geq 1$ and $\delta\in [\lambda^{-1}\tau, \lambda\tau]$ we have
    \begin{equation}
        \log\rho_\delta(0) = (d-D)\log\delta + M - C_\lambda,
    \end{equation}
    where $M$ is independent of $\delta$, and  $0 \leq C_\lambda \leq \frac{D\sigma_{d+1}}{2\sigma_d} \lambda^2$.
\end{proposition}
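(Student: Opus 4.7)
The plan is to work directly with the explicit formula for $\rho_\delta(0)$ given in \eqref{eq:perturbed-normal-distribution-at-0}: there is no geometry involved, the whole argument reduces to manipulating the closed-form expression for a product of Gaussians. Taking logarithms, the proposition asks us to decompose
\begin{equation}
    -\tfrac{1}{2}\sum_{k=1}^D \log(\sigma_k^2 + \delta^2) - \tfrac{D}{2}\log(2\pi)
\end{equation}
into a linear-in-$\log\delta$ part with slope $d-D$, a constant $M$, and a small non-negative remainder $C_\lambda$.

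The key idea is to split the sum at index $d$ and factor the dominant term out of each logarithm on each side. For $k\leq d$, where $\sigma_k$ is ``big'' relative to $\delta$, I write $\log(\sigma_k^2+\delta^2) = \log\sigma_k^2 + \log(1+\delta^2/\sigma_k^2)$; for $k>d$, where $\delta$ is ``big'' relative to $\sigma_k$, I write $\log(\sigma_k^2+\delta^2) = 2\log\delta + \log(1+\sigma_k^2/\delta^2)$. Summing, the $2\log\delta$ terms contribute exactly $-(D-d)\log\delta=(d-D)\log\delta$, the $\log\sigma_k^2$ terms contribute a $\delta$-independent constant which, together with $-\tfrac{D}{2}\log(2\pi)$, defines
\begin{equation}
    M = -\tfrac{D}{2}\log(2\pi) - \tfrac{1}{2}\sum_{k\leq d}\log\sigma_k^2,
\end{equation}
and the ``$\log(1+\cdots)$'' terms, all of which are non-negative, get collected into
\begin{equation}
    C_\lambda = \tfrac{1}{2}\sum_{k\leq d}\log\!\bigl(1+\delta^2/\sigma_k^2\bigr) + \tfrac{1}{2}\sum_{k>d}\log\!\bigl(1+\sigma_k^2/\delta^2\bigr) \geq 0.
\end{equation}

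To bound $C_\lambda$, I will use $\log(1+x)\leq x$ and the two one-sided bounds the hypothesis $\delta\in[\lambda^{-1}\tau,\lambda\tau]$ provides on the quantities inside these logarithms. Namely, since $\tau^2=\sigma_d\sigma_{d+1}$, we have for $k\leq d$ that $\delta^2/\sigma_k^2 \leq \delta^2/\sigma_d^2 \leq \lambda^2\tau^2/\sigma_d^2 = \lambda^2\sigma_{d+1}/\sigma_d$, and for $k>d$ that $\sigma_k^2/\delta^2 \leq \sigma_{d+1}^2/\delta^2 \leq \lambda^2\sigma_{d+1}^2/\tau^2 = \lambda^2\sigma_{d+1}/\sigma_d$. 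Each of the two sums has at most $d$ and $D-d$ terms respectively, each bounded by $\lambda^2\sigma_{d+1}/\sigma_d$, so together $C_\lambda\leq \tfrac{1}{2}\cdot D\cdot \lambda^2\sigma_{d+1}/\sigma_d$, which is the claimed bound.

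There is essentially no obstacle here beyond bookkeeping; the only point to be careful about is matching the ratio $\sigma_{d+1}/\sigma_d$ to the definition of $\tau$ when pushing $\delta$ to either endpoint of the interval, so that the same factor $\lambda^2\sigma_{d+1}/\sigma_d$ controls both halves of the sum uniformly. The statement is really a quantitative version of the intuition in Fig.~\ref{fig:step-delta-exp}: around the geometric mean $\tau$ of two adjacent singular values, the perturbed log-density transitions between two linear regimes with slopes $d-D$ and $(d+1)-D$, and the transition error is exactly the $C_\lambda$ term we just bounded.
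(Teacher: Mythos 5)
Your proof is correct and follows essentially the same route as the paper's: both split the product at index $d$, factor out the dominant term ($\sigma_k^2$ for $k\le d$, $\delta^2$ for $k>d$), and bound the resulting per-coordinate corrections by $\lambda^2\sigma_{d+1}/\sigma_d$ via $\log(1+x)\le x$. The only cosmetic difference is that you do the bookkeeping additively in log-space with an exactly defined $C_\lambda$, whereas the paper works multiplicatively, lumping all corrections into a single factor $(1+\eta^2)^{-D/2}$ with $\eta=\lambda(\sigma_{d+1}/\sigma_d)^{1/2}$, which is arguably less clean since its final line is really a two-sided estimate rather than an identity.
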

In other words, the above proposition states that for $\delta$ between two consecutive deviations $\sigma_d$ and $\sigma_{d+1}$, our LID estimate is approximately the number $d$ of dimensions in which the Gaussian distribution is `thicker' than $\delta$, and the approximation error decreases with the growth of the ratio $\sigma_d/\sigma_{d+1}$ and the distance of $\delta$ from $\sigma_d$ and $\sigma_{d+1}$.

\begin{proof}
    Let us denote $\eta = \lambda(\sigma_{d+1}/\sigma_d)^{1/2}$. The, for $k\leq d$ we may compute $\lambda\tau = \eta\sigma_d \leq \eta\sigma_k$, which leads to 
    \begin{equation}
        \sigma_k^2 + \delta^2
        \leq  (1 + \eta^2)\sigma_k^2.
    \end{equation}
    On the other hand, for $k \geq d+1$, we have $\lambda^{-1}\tau = \eta^{-1}\sigma_{d+1} \geq \eta^{-1}\sigma_k$, and similarly to the previous case, we have
    \begin{equation}
        \sigma_k^2 + \delta^2
        \leq 
        (1+\eta^2)\delta^2.
    \end{equation}
    By applying these two estimates to the formula~\eqref{eq:perturbed-normal-distribution-at-0} for $\rho_\delta(0)$ we are able to obtain a two-sided estimate
    \begin{equation}
        M(1+\eta^2)^{-D/2} \delta^{d-D}
        \leq \rho_\delta(0) 
        \leq M\delta^{d-D}, 
    \end{equation}
    with $M=(2\pi)^{-D/2} \prod_{k=1}^d \sigma_k^{-1}$ independent of $\delta$. Finally, after taking $\log$ we can see that
    \begin{equation}
        \log \rho_\delta(0) = (d-D)\log\delta + \log M - \frac{D}{2}\log(1+\eta^2),
    \end{equation}
    and the last term is positive and bounded from above by $D\eta^2/2$, yielding the desired estimate by substituting $\eta$.
\end{proof}

From the above Proposition we can see that if there is a large gap between $\sigma_d$ and $\sigma_{d+1}$, then for $\delta$ in the neighborhood of their geometric mean, the LID estimate obtained through linear regression should be approximately $d$, with approximation error decreasing, and the range of viable $\delta$ increasing with the growth of the gap size, expressed by the ratio $\sigma_{d+1}/\sigma_d$.

\subsection{Points along a line}

Consider a zero-dimensional manifold $M$, consisting of $N$ points, endowed with uniform probability measure. Suppose $M$ is embedded into $\R^D$ in such a way that its image $\set{x_1, \dots, x_N}$ is actually contained in $\R \subset \R^D$, and has the form $x_k = (\xi_k, 0, \dots, 0)$, where $\xi_{k+1} \geq \xi_k + \eta$ for some $\eta > 0$, i.e.\ the indexing is chosen in such a way that the points $x_k$ are ordered along $\R$, and the distances between them are at least $\eta$.

In this setting, we will study the quantity $\rho_\delta(x_n)$ more closely, and attempt to understand its relationship with the perturbation magnitude for any $\delta$, not just sufficiently small ones. We have
\begin{equation}
    \rho_\delta(x_0) 
    = \frac{1}{N}\sum_{k=1}^N \phi^D_\delta(x_n-x_k) 
    = \frac{\phi^D_\delta(0)}{N} \Bigg( 1 + \sum_{\substack{k =1\\ k\ne n}}^N \frac{\phi^D_\delta(x_n-x_k) }{\phi^D_\delta(0)} \Bigg)
    = M \delta^{-D} \left( 1 + \epsilon_\delta\right),
\end{equation}
where $M=({N(2\pi)^{D/2}})^{-1}$, and
\begin{equation}
    \epsilon_\delta 
    = \sum_{\substack{k =1\\ k\ne n}}^N \frac{\phi^D_\delta(x_n-x_k)}{\phi^D_\delta(0)}
    = \sum_{\substack{k =1\\ k\ne n}}^N \exp \left[ -\frac{1}{2} \left(\frac{\xi_n-\xi_k}{\delta}\right)^2\right].
\end{equation}

After taking $\log$, we get
\begin{equation}
    \log \rho_\delta(x_0) = -D\log\delta + \log M + \log(1+ \epsilon_\delta),
\end{equation}
where the term $\log M$ is independent of $\delta$, and $0 \leq \log(1+\epsilon_\delta) \leq \epsilon_\delta$.

\begin{proposition}
\label{prop:points-along-line-upper-bound}
    Let $\lambda \geq 1$. If $\delta < \eta / (\sqrt{2}\lambda)$ then $\epsilon_\delta \leq 4e^{-\lambda^2}$.  In particular, for $\epsilon > 0$, we have $\epsilon_\delta < \epsilon$ provided that 
    \begin{equation}
        \delta < \frac{\eta}{(-2\log(\epsilon/4))^{1/2}},
    \end{equation}
    i.e.\ the threshold value for $\delta$ depends logarithmically on $\epsilon$.
\end{proposition}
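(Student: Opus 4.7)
The plan is to directly bound the sum defining $\epsilon_\delta$ using the minimum-spacing hypothesis, and then dominate it by a geometric series so that the tail estimate becomes elementary. The second assertion will fall out by solving the inequality $4e^{-\lambda^2} \leq \epsilon$ for $\lambda$ and substituting into the condition on $\delta$.

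First I would use the spacing hypothesis $\xi_{k+1} \geq \xi_k + \eta$ to get $|\xi_n - \xi_k| \geq |n-k|\eta$ for all $k$. Substituting into the definition of $\epsilon_\delta$ and extending the summation to all nonzero integers (the image is symmetric around $n$, and adding missing terms can only increase the sum) gives
\begin{equation}
    \epsilon_\delta \leq 2\sum_{m=1}^{\infty} \exp\!\left(-\tfrac{1}{2}\bigl(m\eta/\delta\bigr)^2\right).
\end{equation}
Setting $a^2 = \eta^2/(2\delta^2)$, the hypothesis $\delta < \eta/(\sqrt{2}\lambda)$ translates to $a^2 > \lambda^2 \geq 1$.

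Next I would bound $m^2 \geq m$ for $m\geq 1$ and dominate the tail by a geometric series:
\begin{equation}
    \epsilon_\delta \leq 2\sum_{m=1}^\infty e^{-m a^2} = \frac{2e^{-a^2}}{1-e^{-a^2}}.
\end{equation}
Since $a^2 \geq 1$ implies $e^{-a^2} \leq e^{-1} < 1/2$, the denominator exceeds $1/2$, yielding $\epsilon_\delta \leq 4 e^{-a^2} \leq 4 e^{-\lambda^2}$, which is the first claim.

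For the second claim, I would impose $4e^{-\lambda^2} \leq \epsilon$ (requiring $\epsilon < 4$ to have content), which is equivalent to $\lambda^2 \geq -\log(\epsilon/4)$. Taking $\lambda = (-\log(\epsilon/4))^{1/2}$ and substituting into $\delta < \eta/(\sqrt{2}\lambda)$ gives exactly the stated logarithmic threshold $\delta < \eta/(-2\log(\epsilon/4))^{1/2}$. The main (minor) obstacle is choosing the right Gaussian-to-geometric relaxation: the trick $m^2 \geq m$ is crude but sufficient because we only need a constant-factor bound on the tail, and it keeps the argument free of any error function manipulations.
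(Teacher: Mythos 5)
Your proof is correct and follows essentially the same route as the paper's: bound the gaps by $\eta|n-k|$, extend the sum over all integers, dominate the Gaussian terms by a geometric series via $m^2 \geq m$, bound the resulting denominator using $\lambda \geq 1$, and solve $4e^{-\lambda^2} = \epsilon$ for the threshold. The only cosmetic difference is your intermediate variable $a^2 = \eta^2/(2\delta^2)$, which the paper folds directly into the exponent as $\lambda^2$.
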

\begin{proof}
    We have $\abs{\xi_i - \xi_j} \geq \eta\abs{i-j}$, and therefore
    \begin{equation}
        \epsilon_\delta \leq \sum_{\substack{k =1\\ k\ne n}}^N \exp \left[ -\frac{1}{2} \left(\frac{\eta(n-k)}{\delta}\right)^2\right]
        \leq \sum_{\substack{k =1\\ k\ne n}}^N e^ {-{\lambda^2} (n-k)^2}.
    \end{equation}
    For an upper estimate, we may also extend the summation over all integers except $n$, obtaining 
    \begin{equation}
        \epsilon_\delta 
        \leq \sum_{k\ne n} e^ {-{\lambda^2} (n-k)^2}
        = 2\sum_{j=1}^\infty e^ {-{\lambda^2} j^2}
        \leq 2\sum_{j=1}^\infty e^ {-{\lambda^2} j}
        = \frac{2}{1-e^{-\lambda^2}} e^{-\lambda^2}.
    \end{equation}
    For $\lambda \geq 1$ we have $(1-e^{-\lambda^2})^{-1} \leq 2$, so in the end $\epsilon_\delta \leq 4e^{-\lambda^2}$. By solving $\epsilon = 4e^{-\lambda^2}$ for $\lambda$ we obtain $\lambda = (-\log(\epsilon/4))^{1/2}$, yielding the last assertion.
\end{proof}

\section{Ideal LIDL for normal distribution on a line}
\label{sec:appendix-lidl-for-normal-distribution}

Suppose our submanifold $S$ is the image of the standard embedding $\R\subset\R^D$, and let $p_S=\N(0,1)$. In this case, the perturbed distribution is $\N(0,\Sigma)$, where $\Sigma$ is a diagonal matrix with entries $(1+\delta^2, \delta^2, \dots, \delta^2)$. The density $\rho_\delta$ at a point $x=(t,0,\dots,0)\in S$ is therefore
\begin{equation}
    \rho_\delta(x) =  \frac{\delta^{1-D}}{(2\pi)^{D/2}(1+\delta^2)^{1/2}} \exp\left( -\frac{t^2}{2(1+\delta^2)}\right),
\end{equation}
and its logarithm can be decomposed into the following sum
\begin{equation}
    \log\rho_\delta(x) = 
        (1-D)\log\delta 
        - \frac{D}{2}\log(2\pi)
        - \frac{1}{2}\log(1+\delta^2)
        - \frac{t^2}{2(1+\delta^2)}.
\end{equation}
Let us now apply the trivial case of linear regression involving only two points, amounting to computing the slope of the line passing through two points. We have
\begin{equation}
    \frac{\log\rho_{\delta_1}(x) - \log\rho_{\delta_2}(x)}{\log\delta_1 - \log\delta_2} = 
        1 - D - \epsilon(t)
        ,
\end{equation}
where the error term expands to
\begin{equation}
    \epsilon(t)= \frac{1}{2(\log\delta_1 - \log\delta_2)}
            \left[ \log\frac{1+\delta_1^2}{1+\delta_2^2}
            - t^2 \left( \frac{1}{1+\delta_1^2} - \frac{1}{1+\delta_2^2} \right)\right],
\end{equation}
yielding a LID estimate $\hat{d}_x = 1 - \epsilon(t)$ at $x$.
We can see that the error $\epsilon$ decomposes into two terms of opposite signs. The first term depends only on $\delta$, and the second one, grows quadratically with $t$.

If we put $\delta_1=\eta\delta$, and $\delta_2=\delta$, the coefficient of $t^2$ can be further rewritten as
\begin{equation}
    \frac{1}{2(\log\delta_1 - \log\delta_2)} \left( \frac{1}{1+\delta_1^2} - \frac{1}{1+\delta_2^2} \right) 
    = \frac{\delta^2(1-\eta^2)}{2\log\eta(1+\delta^2)(1+(\delta\eta)^2)} \asymp \frac{\delta^2(1-\eta^2)}{2\log\eta},
\end{equation}
where the estimate holds uniformly in $\delta$ if $\delta$ is bounded $\delta$ from above. Although for fixed $\delta$ and $\eta$ the error is unbounded as a function of $t$, if we were allowed to adjust $\delta$ based on $t$ (with fixed $\eta$), for the error $\epsilon(t)$ to be bounded in $t$ it is necessary and sufficient that $\delta \leq C/t$ for some constant $C$.

Finally, the expected error for the LID estimate (computed in the above manner) at a random $x$ drawn from our distribution can be computed
\begin{equation}
\begin{split}
    \int_\R \epsilon(t)\phi^1(t) \,dt 
    & = \frac{1}{2(\log\delta_1 - \log\delta_2)}
            \left[ \log\frac{1+\delta_1^2}{1+\delta_2^2}
            - \int_\R t^2 \phi^1(t) \,dt\left( \frac{1}{1+\delta_1^2} - \frac{1}{1+\delta_2^2} \right)\right] = \\
    & = \frac{1}{2(\log\delta_1 - \log\delta_2)}
            \left[ \log\frac{1+\delta_1^2}{1+\delta_2^2}
            - \left( \frac{1}{1+\delta_1^2} - \frac{1}{1+\delta_2^2} \right)\right] 
    = \epsilon(1),
\end{split}
\end{equation}
where the last integral is just the variance of $\N(0,1)$, i.e.\ $1$.

\section{Normalizing Flows}
\label{sec:nf}

NF are very flexible tools for approximating probability distributions. They use parametrized nonlinear invertible transformation $f_\theta$ and change of variable formula to transform a simple density $\pi(z)$ into a more complicated one. NF are trained using gradient-based methods (e.g. SGD) to maximize log-likelihood of the data
$$\max_\theta \sum_{i}\log q(x_i)$$
where
$$q(x) = \pi(f_\theta(x))\left|\det\frac{\partial f_\theta(x)}{\partial x}\right|.$$
We used MAF \citep{papamakarios2017masked}, RQ-NSF \citep{durkan2019neural} and Glow \citep{kingma2018glow} models in our experiments. More detailed introduction to normalizing flows can be found in \cite{dinh2014nice}.

\begin{table}[h!]
\centering
\captionof{table}{Relative bias of LID estimates. All algorithm names explained in Table~\ref{tab:skdim}}\label{tab:comparison}
\scalebox{0.88}{
\begin{tabular}{ | c | c || c | c || c | c | c | c | c |  c | c | c | c | c |}
 \hline
 Distribution & LID &LIDL$_{M}$&LIDL$_{R}$&COR&ESS&KNN&LPC& MAD&MIN&MLE&MOM&TLE&TWO \\ \hline\hline

Lollipop in $\R^2$&0&\textbf{0.00}&\textbf{0.00}&1.67&1.67&1.60&1.67&1.82&1.67&1.80&1.74&-&1.65\\\hline
Lollipop in $\R^2$&1&\textbf{0.00}&\textbf{0.01}&\textbf{0.00}&\textbf{0.00}&0.58&\textbf{0.01}&0.10&\textbf{0.00}&\textbf{0.05}&\textbf{0.00}&-&\textbf{0.00}\\\hline
Lollipop in $\R^2$&2&\textbf{-0.00}&\textbf{-0.00}&\textbf{-0.01}&\textbf{-0.00}&-0.07&\textbf{0.00}&0.10&\textbf{-0.00}&0.08&\textbf{0.01}&-&\textbf{-0.03}\\\hline
$\U$ on helix in $\R^3$&1&\textbf{0.01}&\textbf{0.00}&\textbf{0.00}&\textbf{0.00}&0.68&\textbf{0.00}&0.12&\textbf{0.00}&0.06&\textbf{0.00}&\textbf{0.00}&\textbf{0.00}\\\hline
$\U$ on $S^7 \subseteq \R^8$&7&\textbf{-0.00}&\textbf{0.00}&-0.28&\textbf{0.00}&-0.37&\textbf{0.00}&\textbf{0.03}&-0.18&\textbf{0.02}&\textbf{-0.04}&0.08&-0.13\\\hline
Swiss roll in $\R^3$&2&\textbf{0.04}&\textbf{0.01}&\textbf{-0.00}&\textbf{0.00}&\textbf{-0.05}&\textbf{0.00}&0.06&\textbf{0.00}&0.05&\textbf{0.00}&0.05&\textbf{-0.01}\\\hline
$\N_{10} \subseteq \R^{10}$&10&\textbf{0.00}&\textbf{0.00}&-0.40&\textbf{-0.00}&-0.47&\textbf{0.00}&\textbf{0.02}&-0.25&\textbf{0.01}&-0.07&\textbf{0.01}&-0.16\\\hline
$\N_{100} \subseteq \R^{100}$&100&\textbf{-0.00}&\textbf{0.00}&-0.78&\textbf{-0.01}&-0.66&-0.28&-0.51&-0.90&-0.50&-0.57&-0.56&-0.60\\\hline
$\N_{1000} \subseteq \R^{1000}$&1000&\textbf{0.00}&\textbf{0.00}&-0.93&-0.09&-0.74&-0.90&-0.83&-0.99&-0.82&-0.85&-0.85&-0.86\\\hline
$\N_{4000} \subseteq \R^{4000}$&4000&\textbf{-0.00}&-&-0.96&-0.29&-0.77&-0.98&-0.91&-1.00&-0.91&-0.92&-0.93&-0.93\\\hline
$\N_{10}\subseteq \R^{20}$&10&\textbf{0.00}&\textbf{0.01}&-0.40&\textbf{-0.00}&-0.25&\textbf{0.00}&\textbf{0.02}&-0.25&\textbf{0.01}&-0.07&\textbf{0.01}&-0.16\\\hline
$\N_{100} \subseteq \R^{200}$&100&\textbf{0.04}&\textbf{0.03}&-0.78&\textbf{-0.01}&-0.46&-0.28&-0.51&-0.90&-0.50&-0.57&-0.56&-0.60\\\hline
$\N_{1000}\subseteq \R^{2000}$&1000&0.11&0.30&-0.93&-0.09&-0.52&-0.90&-0.83&-0.99&-0.82&-0.85&-0.85&-0.86\\\hline
$\N_{2000}\subseteq \R^{4000}$&2000&0.11&-&-0.95&-0.17&-0.55&-0.95&-0.88&-0.99&-0.87&-0.89&-0.90&-0.90\\\hline
$\U_{10} \subseteq \R^{10}$&10&\textbf{-0.04}&\textbf{-0.04}&-0.39&-0.07&-0.47&\textbf{0.00}&-0.10&-0.29&-0.11&-0.17&-0.07&-0.22\\\hline
$\U_{100} \subseteq \R^{100}$&100&\textbf{0.00}&\textbf{0.01}&-0.75&\textbf{-0.02}&-0.67&-0.28&-0.50&-0.90&-0.49&-0.56&-0.53&-0.59\\\hline
$\U_{1000} \subseteq \R^{1000}$&1000&\textbf{-0.00}&\textbf{0.00}&-0.92&-0.09&-0.76&-0.90&-0.81&-0.99&-0.81&-0.84&-0.84&-0.85\\\hline
$\U_{4000} \subseteq \R^{4000}$&4000&\textbf{-0.00}&-&-0.96&-0.29&-0.78&-0.98&-0.90&-1.00&-0.90&-0.92&-0.92&-0.92\\\hline

 \hline\end{tabular}}
\end{table}

\begin{table}[h!]
\centering
\captionof{table}{Relative MAE of LID estimates. All algorithm names explained in Table~\ref{tab:skdim}}\label{tab:comparison-mae}
\scalebox{0.88}{
\begin{tabular}{ | c | c || c | c || c | c | c | c | c |  c | c | c | c | c |}
 \hline
 Distribution & LID &LIDL$_{M}$&LIDL$_{R}$&COR&ESS&KNN&LPC& MAD&MIN&MLE&MOM&TLE&TWO \\ \hline\hline

Lollipop in $\R^2$&0&\textbf{0.00}&\textbf{0.00}&1.67&1.67&1.60&1.67&1.82&1.67&1.80&1.74&-&1.65\\\hline
Lollipop in $\R^2$&1&\textbf{0.00}&\textbf{0.01}&\textbf{0.00}&\textbf{0.00}&0.58&\textbf{0.01}&0.12&\textbf{0.00}&\textbf{0.05}&\textbf{0.00}&-&\textbf{0.00}\\\hline
Lollipop in $\R^2$&2&\textbf{0.01}&\textbf{0.01}&\textbf{0.01}&\textbf{0.00}&0.62&\textbf{0.00}&0.43&\textbf{0.00}&0.25&\textbf{0.03}&-&\textbf{0.05}\\\hline
$\U$ on helix in $\R^3$&1&\textbf{0.01}&\textbf{0.00}&\textbf{0.00}&\textbf{0.00}&0.68&\textbf{0.00}&0.14&\textbf{0.00}&0.06&\textbf{0.00}&\textbf{0.00}&\textbf{0.00}\\\hline
$\U$ on $S^7 \subseteq \R^8$&7&\textbf{0.00}&\textbf{0.00}&0.28&\textbf{0.00}&0.44&\textbf{0.00}&0.27&0.18&0.18&0.08&0.17&0.15\\\hline
Swiss roll in $\R^3$&2&0.06&\textbf{0.01}&\textbf{0.00}&\textbf{0.00}&0.37&\textbf{0.00}&0.25&\textbf{0.00}&0.14&\textbf{0.02}&0.06&\textbf{0.03}\\\hline
$\N_{10} \subseteq \R^{10}$&10&\textbf{0.00}&\textbf{0.01}&0.40&\textbf{0.00}&0.47&\textbf{0.00}&0.27&0.25&0.19&0.11&0.16&0.17\\\hline
$\N_{100} \subseteq \R^{100}$&100&\textbf{0.00}&\textbf{0.02}&0.78&\textbf{0.02}&0.66&0.28&0.52&0.90&0.50&0.57&0.56&0.60\\\hline
$\N_{1000} \subseteq \R^{1000}$&1000&\textbf{0.00}&\textbf{0.01}&0.93&0.09&0.74&0.90&0.83&0.99&0.82&0.85&0.85&0.86\\\hline
$\N_{4000} \subseteq \R^{4000}$&4000&\textbf{0.01}&-&0.96&0.29&0.77&0.98&0.91&1.00&0.91&0.92&0.93&0.93\\\hline
$\N_{10}\subseteq \R^{20}$&10&\textbf{0.00}&\textbf{0.01}&0.40&\textbf{0.00}&0.68&\textbf{0.00}&0.27&0.25&0.19&0.11&0.16&0.17\\\hline
$\N_{100} \subseteq \R^{200}$&100&\textbf{0.04}&\textbf{0.03}&0.78&\textbf{0.02}&0.87&0.28&0.52&0.90&0.50&0.57&0.56&0.60\\\hline
$\N_{1000}\subseteq \R^{2000}$&1000&0.12&0.30&0.93&0.09&0.95&0.90&0.83&0.99&0.82&0.85&0.85&0.86\\\hline
$\N_{2000}\subseteq \R^{4000}$&2000&0.12&-&0.95&0.17&0.96&0.95&0.88&0.99&0.87&0.89&0.90&0.90\\\hline
$\U_{10} \subseteq \R^{10}$&10&\textbf{0.04}&\textbf{0.04}&0.39&0.07&0.47&\textbf{0.00}&0.27&0.29&0.20&0.18&0.17&0.22\\\hline
$\U_{100} \subseteq \R^{100}$&100&\textbf{0.00}&\textbf{0.02}&0.75&\textbf{0.02}&0.67&0.28&0.50&0.90&0.49&0.56&0.53&0.59\\\hline
$\U_{1000} \subseteq \R^{1000}$&1000&\textbf{0.00}&\textbf{0.02}&0.92&0.09&0.76&0.90&0.81&0.99&0.81&0.84&0.84&0.85\\\hline
$\U_{4000} \subseteq \R^{4000}$&4000&\textbf{0.01}&-&0.96&0.29&0.78&0.98&0.90&1.00&0.90&0.92&0.92&0.92\\\hline

 \hline\end{tabular}}
\end{table}

\begin{table}[h!]
\centering
\captionof{table}{Algorithms used for comparison. All implementations from scikit-dimension library \citep{e23101368}.}
\label{tab:skdim}
\scalebox{0.88}{
\begin{tabular}{ | c || c | c |}
 \hline
 Name & Shortcut & citation \\ \hline\hline

CorrInt &COR &\cite{gassberger1983measuring} \\
MADA &MAD &\cite{farahmand2007manifold} \\
MLE &MLE &\cite{levina2004maximum} \\
lPCA &LPC &\cite{cangelosi2007component} \\
KNN &KNN &\cite{carter2009local} \\
DANCo &DAN &\cite{CERUTI20142569} \\
MiND\_ML &MIN &\cite{rozza2012novel} \\
ESS &ESS &\cite{johnsson2014low} \\
MOM &MOM &\cite{amsaleg2018extreme} \\
FisherS &FIS &\cite{albergante2019estimating} \\
TwoNN &TWO &\cite{facco2017estimating} \\
TLE &TLE &\cite{amsaleg2019intrinsic} \\

\hline\end{tabular}}
\end{table}

\section{Experimental details}
\label{sec:exp-details}

In this section we present some results of additional experiments, some details and other observations. 

When using LIDL with parametric density estimators on non-synthetic datasets, choosing hyperparameters is a challenge. We cannot directly estimate the error of the algorithm because we does not have access to ground truth LID. However, we observed in our experiments that choosing the hyperparameters leading to models minimizing negative log-likelihood on the validation set is a good strategy for minimizing the error of the LID estimate. We apply this approach in all our experiments; as density estimators we employ MAF \citep{papamakarios2017masked}, RQ-NSF \citep{durkan2019neural} and Glow \citep{kingma2018glow}.

In scalability experiments we used 3 types of datasets. Uniform distribution on interval $(0,1)$ on a hypercube (denoted by $\U_N$, where $N$ is dimensionality of a cube), multivariate Gaussian ($\N_N \subseteq \R^N$) where $N$ is dimensionality of a distribution and data space, and ($\N_N \subseteq \R^{2N}$), where we embedded $N$-dimensional Gaussian in $2N$-dimensional space by duplicating each coordinate.
In each experiment we used 11 $\delta$s  between 0.025 and 0.1.

\subsection{Reducing the error of the density estimate}
\label{sec:exp-reducing}
Because model ensemble methods \citep{opitz1999popular} often reduces prediction error in many machine learning models, and most of LIDL error comes from the imperfect density estimators, we applied it to our problem by increasing the number of models $n$ used in LIDL. We were able to reduce an error of each estimate by simply adding more models between the same range of $\delta$s. An example of this behavior for 10-dimensional Gaussian embedded in 20-dimensional space is plotted in Fig.\ref{fig:mse-n-models} in Appendix~\ref{sec:exp-details}.

\begin{figure}[tb]
    \centering
    \includegraphics[width=0.4\textwidth]{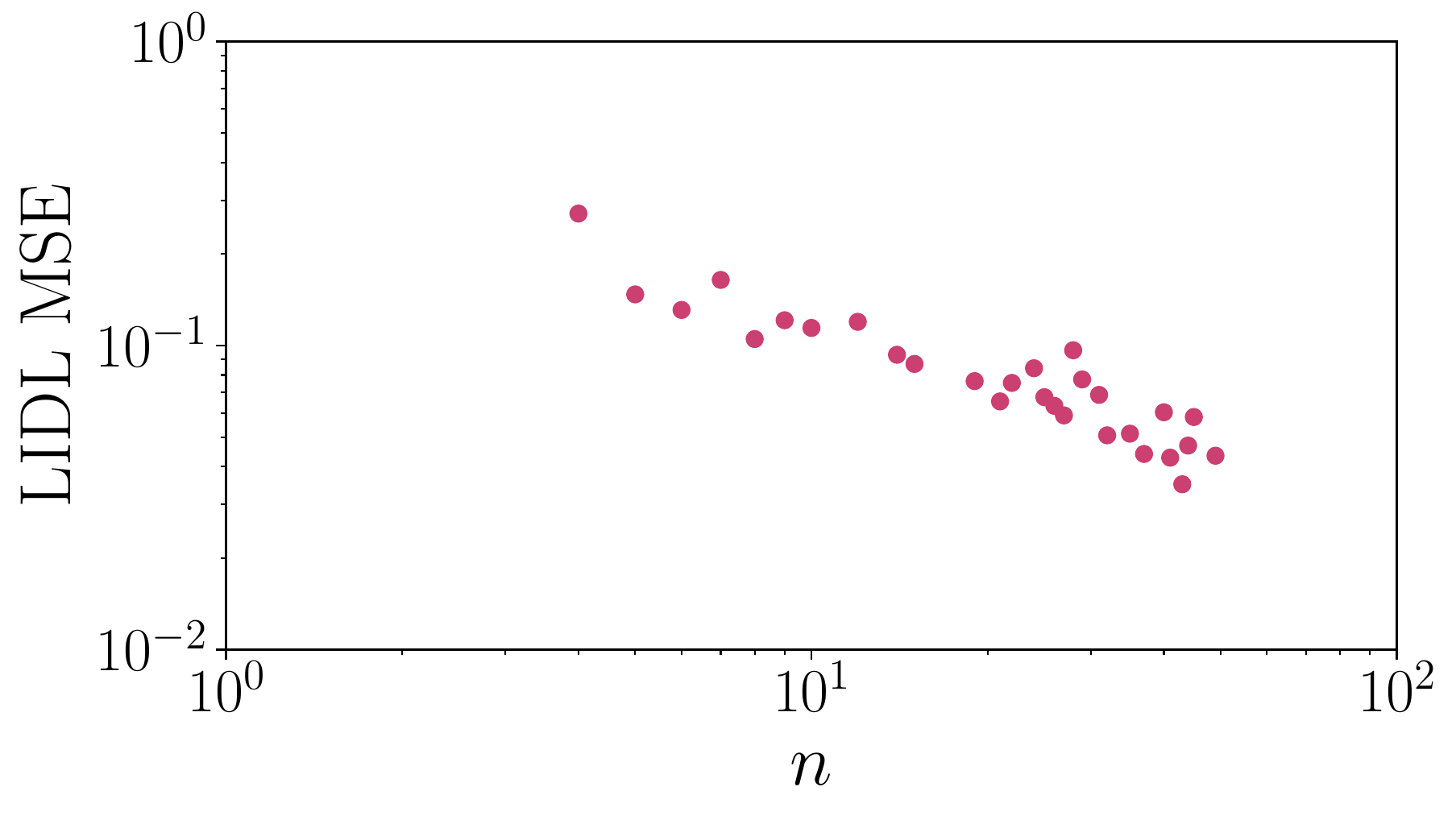}
    \caption{The dependence of mean-square error (MSE) on the number of models used in LIDL ($n$ from Algorithm\ref{alg:lidl}). We can observe monothonic decrease of the estimate error with the increase of $n$.}
    \label{fig:mse-n-models}
\end{figure}

\begin{figure}[h]
    \centering
    \includegraphics[width=0.4\textwidth]{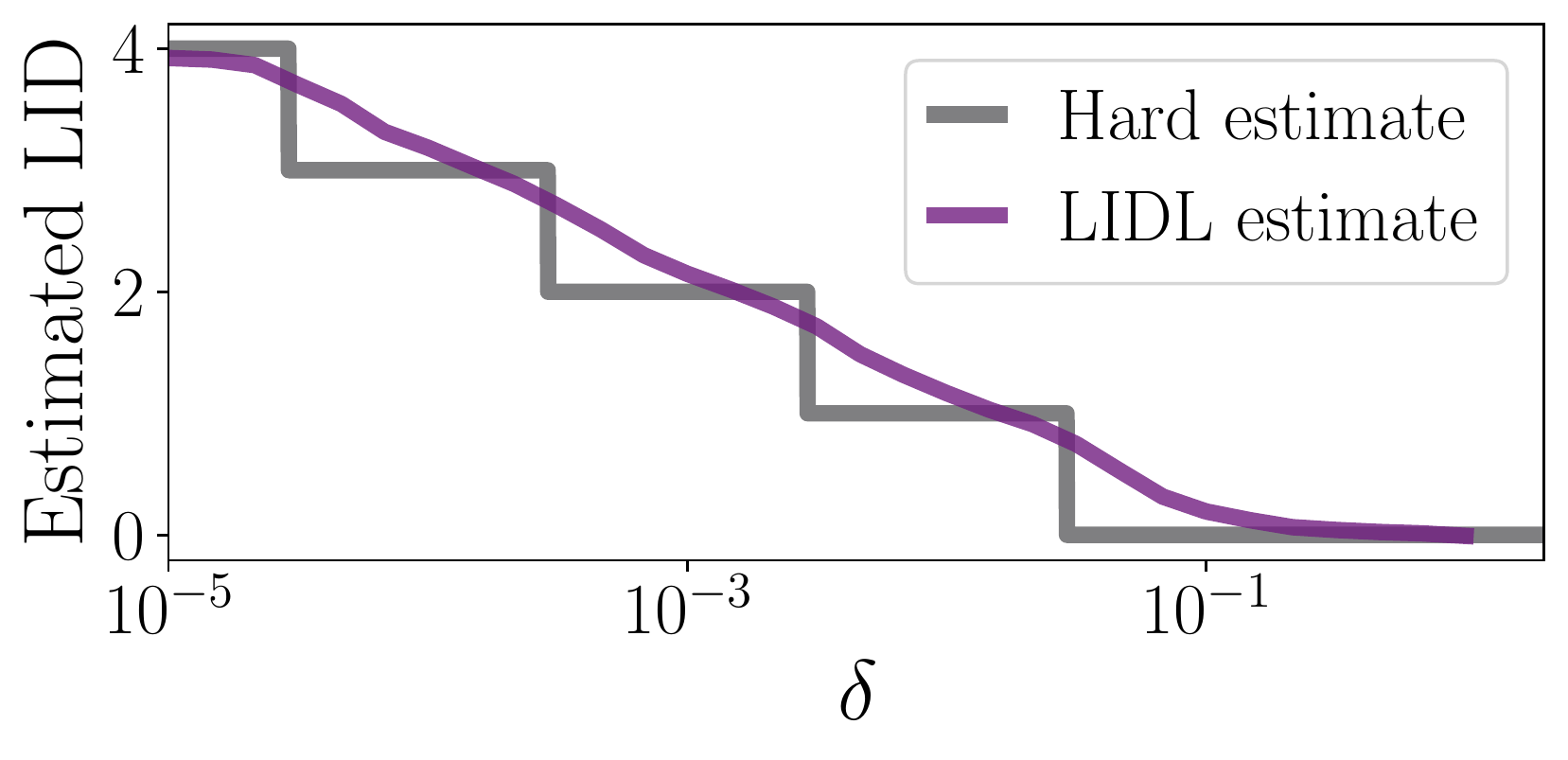}
    \caption{LIDL and hard estimate for different values of $\delta$ for 4-dimensional multiscale uniform distribution. We can see that LIDL ignores dimensions that are much smaller than $\delta$ even with imperfect density estimators.}
    \label{fig:step-delta-uniform}
\end{figure}

\begin{figure}[h]
\centering
\includegraphics[width=0.9\textwidth]{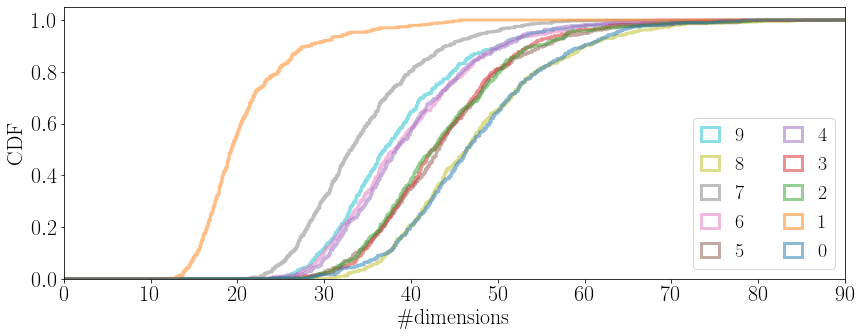}
\caption{Empirical CDF of 5000 examples from MNIST dataset. Each line represents CDF for separate class in the dataset. Class number (which also is a represented digit in this case) can be found in the legend.}
\label{fig:mnist-cdf}
\end{figure}
\begin{figure}[h]
\centering
\includegraphics[width=0.9\textwidth]{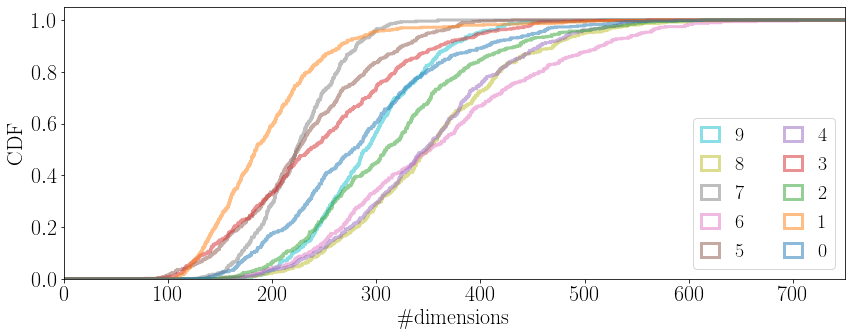}
\caption{Empirical CDF of 5000 examples from FMNIST dataset. Each line represents CDF for separate class in the dataset. Class number can be found in the legend.}
\label{fig:fmnist-cdf}
\end{figure}

\subsection{Image datasets}
\label{sec:details-image}
We present cumulative distribution function (CDF) for MNIST and FMNIST in Fig.~\ref{fig:mnist-cdf}~and~\ref{fig:fmnist-cdf}.  More samples from MNIST, FMNIST, Celeb-A sorted by their LID can be found in Fig.~\ref{fig:mnist-sort},~\ref{fig:fmnist-sort},~and~\ref{fig:celeba-sort}.

We can observe, that dimensionality estimates obtained from LIDL on MNIST are higher than those reported in \cite{pope2020intrinsic} or \cite{kleindessner2015dimensionality} and obviously depend on choosing the range of $\delta$s. We want to present some argument for our estimates:
\cite{cavallari2018unsupervised} used auto-encoder representation of MNIST as an input to SVN digit classifier and they achieved the best classification results for an auto-encoder with latent space size greater than 100. This means that we need more than 100 dimensions to encode an average MNIST digit to preserve all the information about it. Of course the compression with autoencoder is not ideal, but we can argue that true ID lies somewhere between.

\paragraph{Relation between examples for different range of $\delta$s}
We observed that for image dataset LID estimates for two disjoint sets of 4 $\delta$s have similar ranks (they on average differ between 10\%-15\%), and relations between points in each set (i.e. if LID estimate for $x_j$ is lower than LID estimate for $x_i$) are preserved in 80-90\% of cases. This of course depends strongly on $\delta$ range and the dataset.

\paragraph{LID estimate dependence on $\delta$ and effect of dequantization}

We present MNIST and FMNIST LID estimates (averaged per class) dependence on $\delta$ in Fig.~\ref{fig:mnist-delta-dependence}~and~\ref{fig:fmnist-delta-dependence}. Images present wide range of $\delta$s (from $10^{-4}$ to $10^{1}$) for original datasets and datasets with dequantization used after training. Black dashed line indicates a theoretical $\delta$, above which LIDL should not calculate dequantization dimensions into LIDL estimate. This is 10 times standard deviation of dequantization noise $\mathcal{U}(0,1/255)$. We can see that slightly above this threshold estimates for quantized and dequantized datasets aligh with each other. We can also observe, that for dequantized datasets and very small $\delta$ LID estimate is close to the dimensionality of the space, and for very big $\delta$s, LID estimates are close to 0 as expected.

\subsection{LID and model performance}
\label{sec:performance}
In this section, we show, that LID estimates are connected with model behavior on some benchmark datasets for autoencoder and classification deep neural networks.
Our result suggests, that the connection between LID and model performance is significant, so LIDL estimates can potentially be used in problems like semi-supervised learning, active learning, uncertainty estimation, and curriculum learning.

\paragraph{Reconstruction error vs LID for autoencoders}
\begin{figure}[h]
    \centering
    \includegraphics[width=0.45\textwidth]{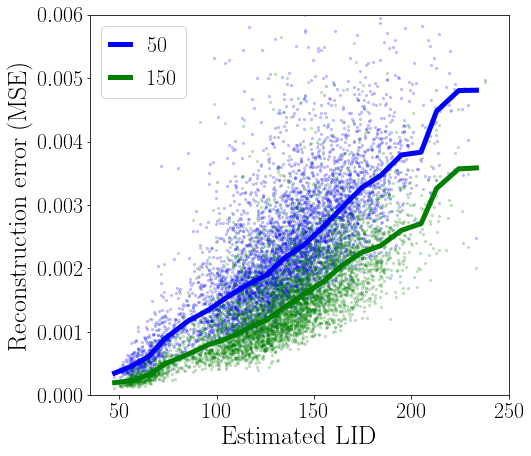}
    \caption{LIDL estimates and VAE MSE scatterplot for a sample from MNIST dataset. Lines are running medians for those point clouds. Values in legend are VAE latent space size.}
    \label{fig:lid-mse}
\end{figure}

Results from the previous section led us to next experiment, where we wanted to investigate if the estimate from LIDL is correlated with reconstruction error for the image in VAE \citep{kingma2014auto}. We trained VAE on MNIST with latent space sizes 50 and 150, and observed that there is a high correlation (Pearsons $R > 0.7$ in both cases) between MSE and LID. We plotted LIDL estimates against MSE for 5K images in Fig.~ \ref{fig:lid-mse}. We can see an almost linear relationship between those quantities.

\paragraph{LID and classification accuracy}

We observed that classifiers can achieve better accuracy on data points with lower LID estimates. We trained neural networks on a subset of MNIST (300 images) and FMNIST (50K images) datasets and noticed a negative correlation of LID and an accuracy on a test set. Results are presented in Fig~\ref{fig:lid-accuracy}. What is more, we observed similar behavior inside a majority of classes (9 classes for MNIST, and 8 for FMNIST). 
\begin{figure}[h]
    \begin{subfigure}{.5\textwidth}
      \centering
      \includegraphics[width=0.6\linewidth]{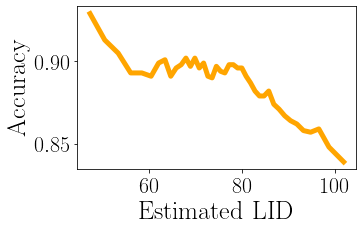}
    \end{subfigure}%
    \begin{subfigure}{.5
    \textwidth}
      \centering
      \includegraphics[width=0.6\linewidth]{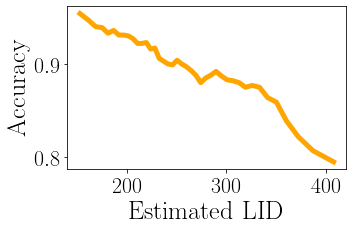}
    \end{subfigure}
    \caption{Average classifier accuracy per LID value on MNIST(left) and FMNIST(right) datasets. We can see a very strong negative correlation between those values.}
    \label{fig:lid-accuracy}
\end{figure}

\begin{figure}[h]
\centering
\includegraphics[width=0.99\textwidth]{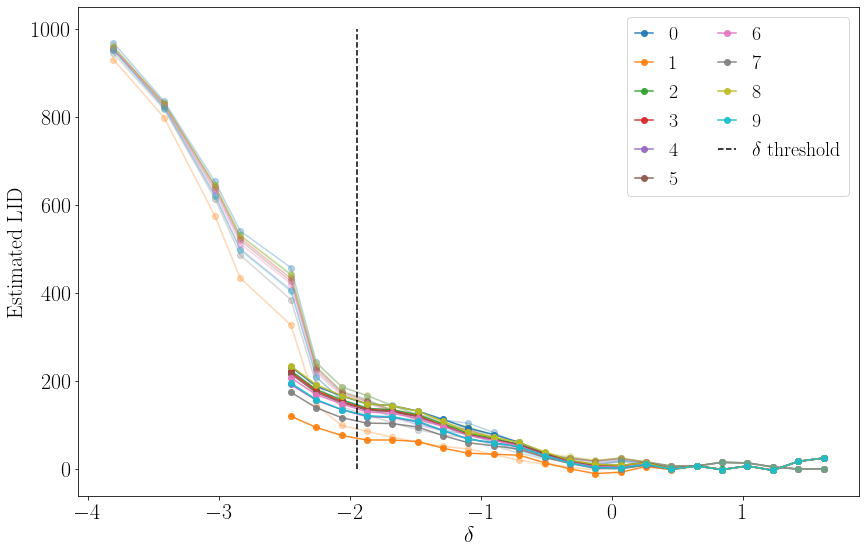}
\caption{MNIST average LID estimates for each class for quantized (strong color) and dequantized (faded colors) as a function of $\delta$.}
\label{fig:mnist-delta-dependence}
\end{figure}
\begin{figure}[h]
\centering
\includegraphics[width=0.99\textwidth]{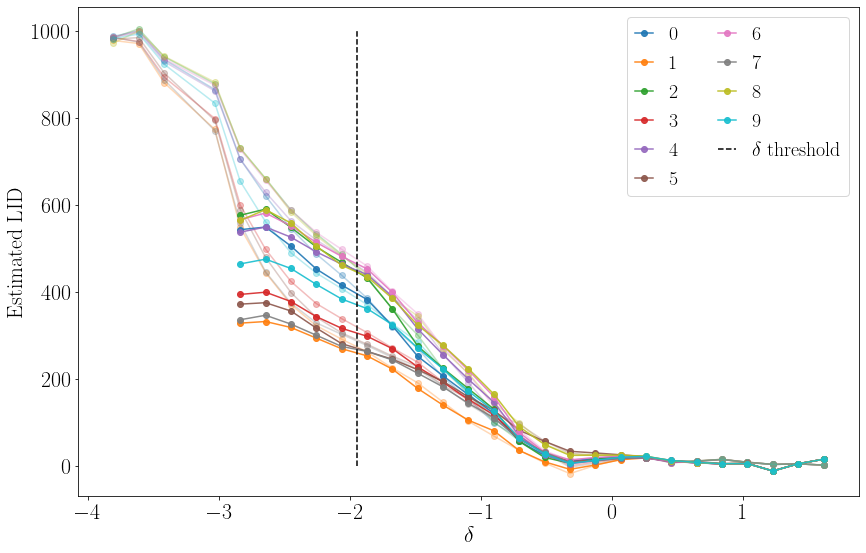}
\caption{FMNIST average LID estimates for each class for quantized (strong color) and dequantized (faded colors) as a function of $\delta$.}
\label{fig:fmnist-delta-dependence}
\end{figure}

\begin{figure}[t]
\centering
\includegraphics[width=0.95\textwidth]{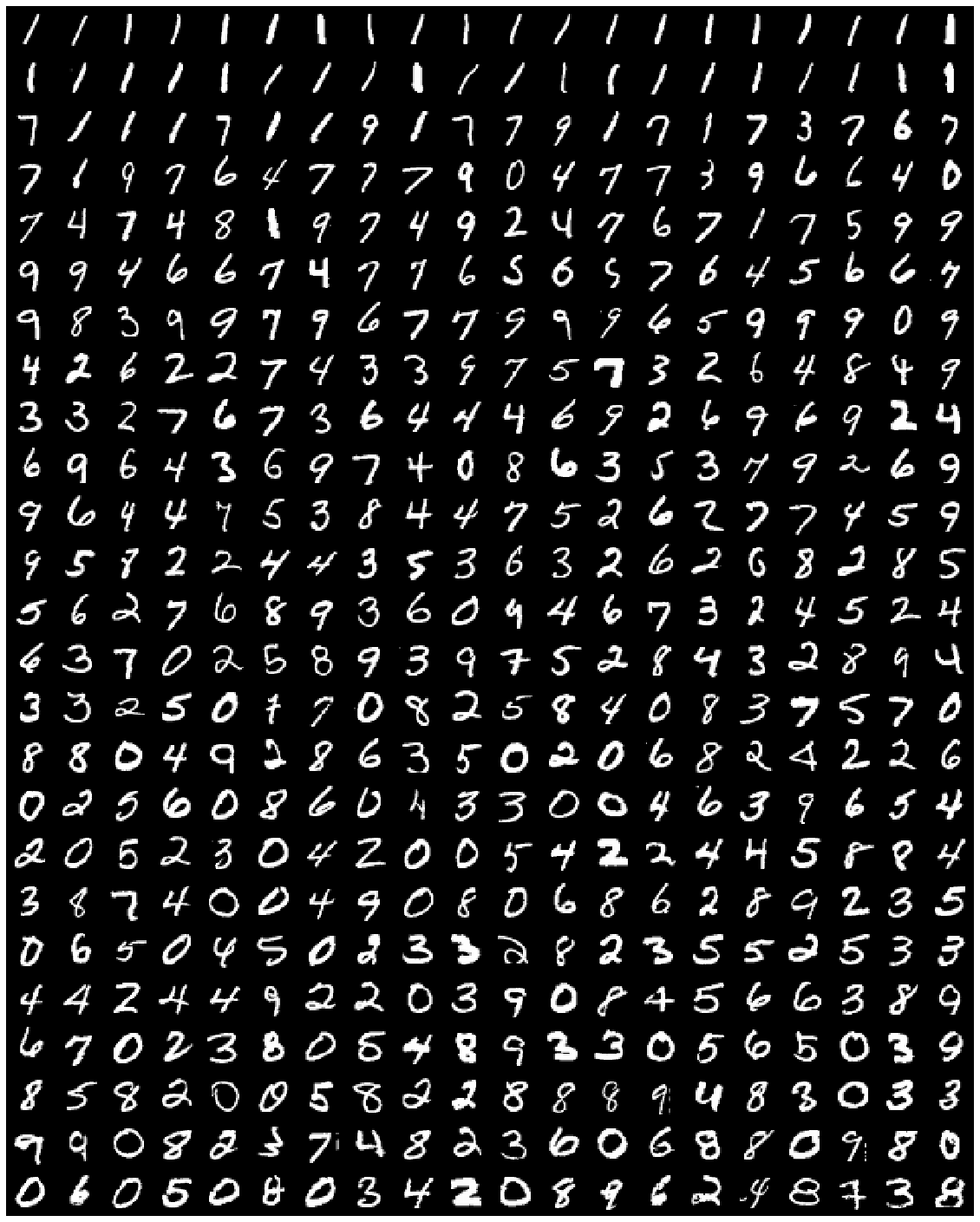}
\caption{Samples from MNIST sorted by their LID estimates.}
\label{fig:mnist-sort}
\end{figure}
\begin{figure}[t]
\centering
\includegraphics[width=0.95\textwidth]{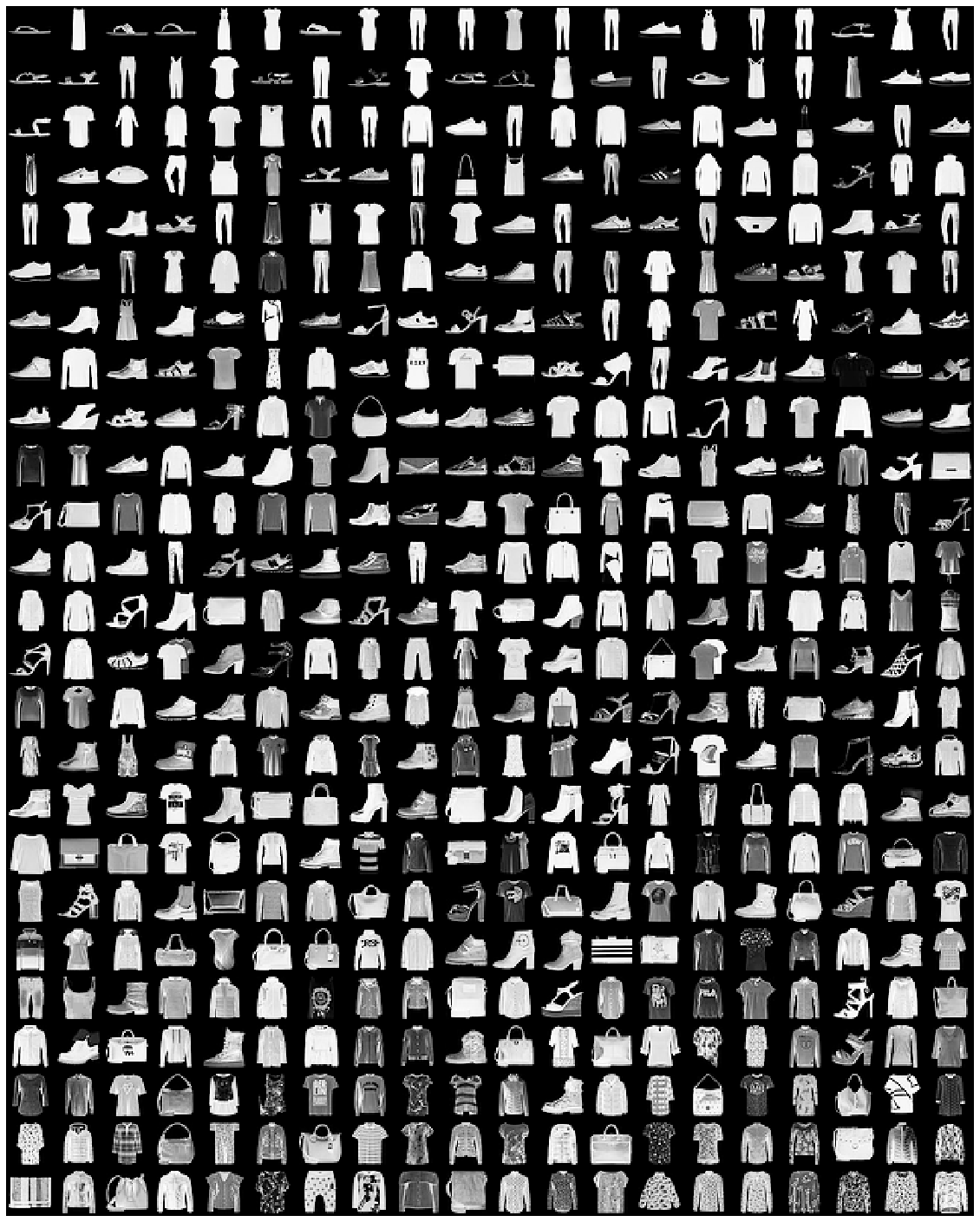}
\caption{Samples from FMNIST sorted by their LID estimates.}
\label{fig:fmnist-sort}
\end{figure}
\begin{figure}[t]
\centering
\includegraphics[width=0.95\textwidth]{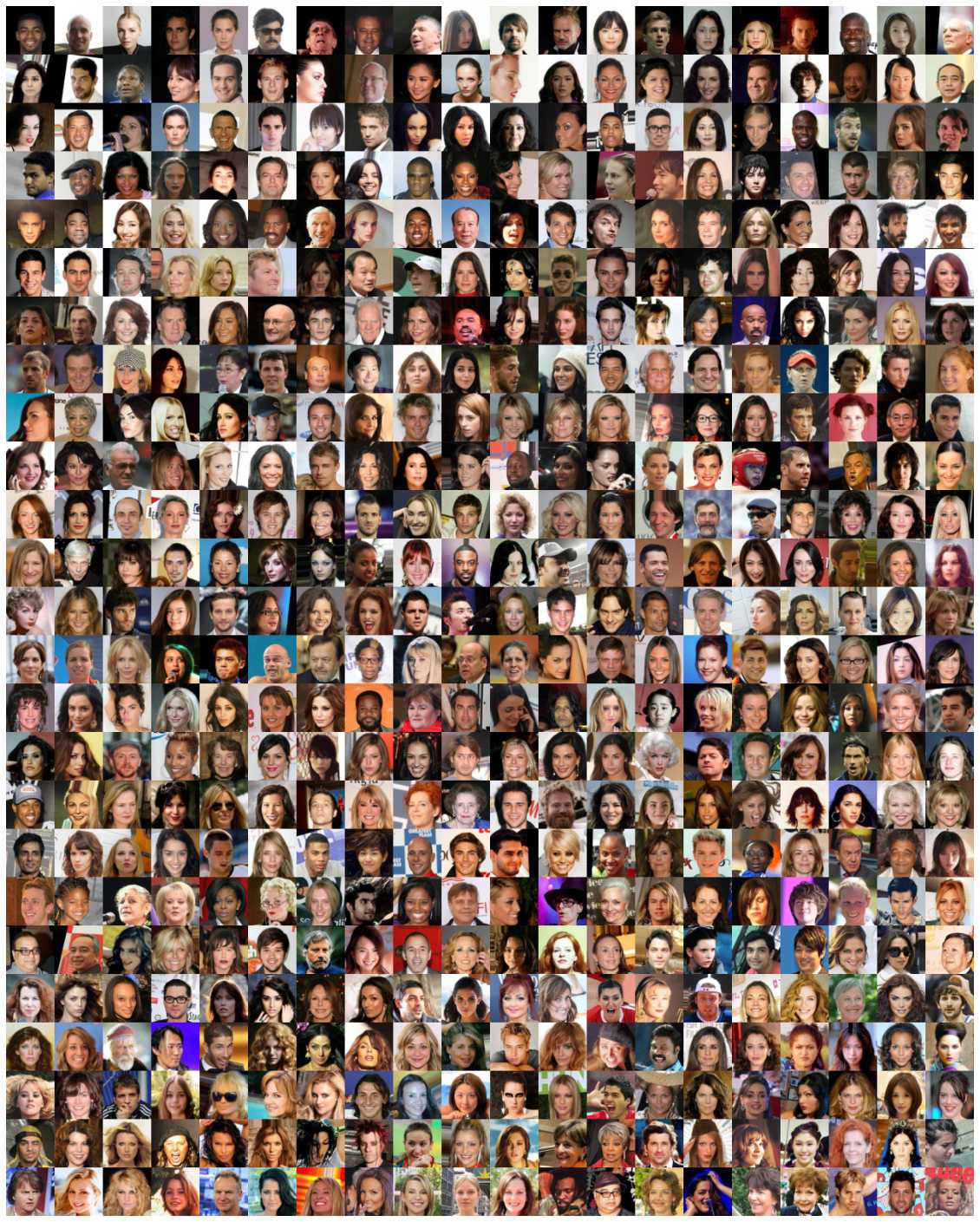}
\caption{Samples from Celeb-A sorted by their LID estimates.}
\label{fig:celeba-sort}
\end{figure}

\end{document}